\documentclass[journal]{IEEEtran}
\usepackage{amsmath,amsfonts}
\usepackage{algorithm}
\usepackage{algpseudocode}
\usepackage{color}
\usepackage{soul}
\usepackage{cite}
\usepackage{graphicx}
\usepackage{amsthm}
\usepackage{multirow}
\hyphenation{op-tical net-works semi-conduc-tor IEEE-Xplore}
\def\BibTeX{{\rm B\kern-.05em{\sc i\kern-.025em b}\kern-.08em
    T\kern-.1667em\lower.7ex\hbox{E}\kern-.125emX}}
\usepackage{subfigure}
\usepackage{amsmath}

\DeclareMathOperator*{\argmin}{arg\,min}
\usepackage{booktabs}
\usepackage{tabularx}
\usepackage{tabulary}
\usepackage{hyperref}
\usepackage{mathtools}
\DeclareUnicodeCharacter{0308}{\"}

% \usepackage{etoolbox}
% \makeatletter 
% \pretocmd\@bibitem{\color{black}\csname keycolor#1\endcsname}{}{\fail}
% \newcommand\citecolor[1]{\@namedef{keycolor#1}{\color{blue}}}
% \makeatother

% \citecolor{chai2022attitude}
% \citecolor{chai2021dual}
% \citecolor{chai2022multiphase}
% \citecolor{chai2022design}
% \citecolor{chai2022deep}
% \citecolor{chai2020design}

% \pagenumbering{arabic} \pagestyle{plain}
% \renewcommand{\baselinestretch}{1.0}
% \renewcommand{\theenumii}{\theenumi.\arabic{enumii}}
% \renewcommand{\labelenumii}{\theenumii}
% \renewcommand{\thefootnote}{\fnsymbol{footnote}}
% %\DeclareMathSymbol{\astt}{\mathbin}{symbols}{"03}
% \usepackage[utf8]{inputenc}
% \usepackage{subfigure}
% \usepackage{amssymb}
% \usepackage{color,soul}
% \usepackage[T1]{fontenc}
% \usepackage{multicol}
% \usepackage{multirow}
% \usepackage{textcomp}
% \usepackage{multirow}
% \usepackage{bigdelim}
% \usepackage{graphicx}
% \usepackage{epstopdf}
% \usepackage{amssymb}
% \usepackage{amsmath}
% \usepackage{psfrag}
% \usepackage{amsthm}
% \usepackage{subfigure}
% \usepackage{algorithmic,algorithm}
% \usepackage{mathrsfs}
% \usepackage{framed}
% \usepackage{color}
% \usepackage{multirow,enumerate}
% \usepackage{bigdelim}
% \definecolor{shadecolor}{rgb}{0.92,0.92,0.92}
% \usepackage{soul}
% \usepackage{color}
% \usepackage{cuted}
% \usepackage{cite}
% \setcounter{MaxMatrixCols}{15}

\begin{document}
% \title{Distributed Potential Bayesian Game: Towards Fast and Scalable Multi-Hypothesis Game-Theoretic Trajectory Planning}

\title{Fast and Scalable Game-Theoretic Trajectory Planning with Intentional Uncertainties}
\author{
% Author Names Omitted for Anonymous Review
Zhenmin Huang, Yusen Xie, Benshan Ma, Shaojie Shen, and Jun Ma
%% Initial Version
% \thanks{Zhenmin Huang, Shaojie Shen, and Jun Ma are with the Department of Electronic and Computer Engineering, The Hong Kong University of Science and Technology, Hong Kong SAR, China (email: zhuangdf@connect.ust.hk; eeshaojie@ust.hk; jun.ma@ust.hk).}}

% Intermediate Version
\thanks{Zhenmin Huang, Yusen Xie, and Benshan Ma are with the Robotics and Autonomous Systems Thrust, The Hong Kong University of Science and Technology (Guangzhou), Guangzhou, China (email: zhuangdf@connect.ust.hk; bma224@connect.hkust-gz.edu.cn; yxie827@connect.hkust-gz.edu.cn).}
\thanks{Shaojie Shen is with the Department of Electronic and Computer Engineering, The Hong Kong University of Science and Technology, Hong Kong SAR, China (email: eeshaojie@ust.hk).}
\thanks{Jun Ma is with the Robotics and Autonomous Systems Thrust, The Hong Kong University of Science and Technology (Guangzhou), Guangzhou, China, also with the Division of Emerging Interdisciplinary Areas, The Hong Kong University of Science and Technology, Hong Kong SAR, China, and also with the HKUST Shenzhen-Hong Kong Collaborative Innovation Research Institute, Futian, Shenzhen, China (e-mail: jun.ma@ust.hk).}

%% Final Version
% \thanks{This work was supported in part by the Guangzhou-HKUST(GZ) Joint Funding Scheme under Grant 2023A03J0148; and in part by the Project of Hetao Shenzhen-Hong Kong Science and Technology Innovation Cooperation Zone under Grant HZQB-KCZYB-2020083. \textit{(Corresponding Author: Jun Ma.)}}
% \thanks{Zhenmin Huang is with the Interdisciplinary Program Office, The Hong Kong University of Science and Technology, Hong Kong SAR, China, and also with the Robotics and Autonomous Systems Thrust, The Hong Kong University of Science and Technology (Guangzhou), Guangzhou, China (email: zhuangdf@connect.ust.hk).}
% \thanks{Shaojie Shen is with the Department of Electronic and Computer Engineering, The Hong Kong University of Science and Technology, Hong Kong SAR, China (email: eeshaojie@ust.hk).}
% \thanks{Jun Ma is with the Robotics and Autonomous Systems Thrust, The Hong Kong University of Science and Technology (Guangzhou), Guangzhou, China, also with the Department of Electronic and Computer Engineering, The Hong Kong University of Science and Technology, Hong Kong SAR, China, and also with the HKUST Shenzhen-Hong Kong Collaborative Innovation Research Institute, Futian, Shenzhen, China (e-mail: jun.ma@ust.hk).}
}

% \markboth{IEEE Transactions on Robotics}{}
\maketitle

\begin{abstract}

Trajectory planning involving multi-agent interactions has been a long-standing challenge in the field of robotics, primarily burdened by the inherent yet intricate interactions among agents.
While game-theoretic methods are widely acknowledged for their effectiveness in managing multi-agent interactions, significant impediments persist when it comes to accommodating the intentional uncertainties of agents.
In the context of intentional uncertainties, the heavy computational burdens associated with existing game-theoretic methods are induced, leading to inefficiencies and poor scalability. 
In this paper, we propose a novel game-theoretic interactive trajectory planning method to effectively address the intentional uncertainties of agents, and it demonstrates both high efficiency and enhanced scalability.
As the underpinning basis, we model the interactions between agents under intentional uncertainties as a general Bayesian game, and we show that its agent-form equivalence can be represented as a potential game under certain minor assumptions.
The existence and attainability of the optimal interactive trajectories are illustrated, as the corresponding Bayesian Nash equilibrium can be attained by optimizing a unified optimization problem. Additionally, we present a distributed algorithm based on the dual consensus alternating direction method of multipliers (ADMM) tailored to the parallel solving of the problem, thereby significantly improving the scalability.
The attendant outcomes from simulations and experiments demonstrate that the proposed method is effective across a range of scenarios characterized by general forms of intentional uncertainties. Its scalability surpasses that of existing centralized and decentralized baselines, allowing for real-time interactive trajectory planning in uncertain game settings.

The source code will be available on \url{https://github.com/zhuangdf/Potential-Bayesian-Game-release.}

\end{abstract}

\begin{IEEEkeywords}
Multi-agent system, game theory, Bayesian game, potential game, alternating direction method of multipliers (ADMM), non-convex optimization.
\end{IEEEkeywords}

\theoremstyle{definition}
\newtheorem{definition}{Definition}
\theoremstyle{definition}
\newtheorem{remark}{Remark}
\theoremstyle{definition}
\newtheorem{assumption}{Assumption}
\theoremstyle{definition}
\newtheorem{lemma}{Lemma}
\theoremstyle{definition}
\newtheorem{theorem}{Theorem}
\graphicspath{ {pictures_r1/} }

\section{Introduction}
Recent research has witnessed an increasing interest in trajectory planning problems with multi-agent interaction, which emerge from a wide range of fields like swarm robotics~\cite{turgut2008self,zhou2022swarm}, cooperative autonomous driving~\cite{huang2024universal}, multi-robot coordination~\cite{yu2021distributed}, and so on. Although fruitful results are obtained with various applications, this category of problems is particularly challenging and is far from being resolved, especially when a large number of agents are involved. 
% In some specific applications, agents are meant to share information and cooperate with each other; in other applications, they are either selfish or lacking in the ability to communicate. 
Depending on the application, agents may share information and cooperate, or they may operate independently without communication. The latter case further complicates the problem, as each agent needs to reason about the intentions and the behaviors of others, which are highly interactive and coupled with its own behaviors. In light of this, methods based on game theory are gaining more and more attention, owing to their superiority in modeling the interactions between selfish and non-communicative agents. In a typical game-based trajectory planning method, each agent aims to obtain an optimal trajectory that maximizes its interests while subject to some mutual safety concerns like collision avoidance constraints, reaching a Nash Equilibrium (NE) or Generalized Nash Equilibrium (GNE), such that each agent's trajectory is optimal given all other agents' trajectories fixed. By obtaining the NE/GNE, prediction of rivals' behaviors and trajectory planning are performed in a coupled manner to avoid the ``frozen robot" problem~\cite{trautman2010unfreezing},  which often occurs in trajectory planning methods that perform prediction and planning in a decoupled fashion and recognize other agents as immutable moving objects. 

% Despite these advantages, obtaining the equilibrium of a game typically requires solving a series of tightly coupled optimization problems, and therefore it is not a trivial task, especially when uncertainties in agents' underlying objectives and the inherent multimodality in game solutions come into play. Developing game-based trajectory planning methods with uncertainties is an inevitably encountered and yet long-standing challenge. On one hand, neglecting uncertainties can lead to risky planning results; on the other hand, incorporating uncertainties can easily render the game problem mathematically intractable.

% \hl{we need to consider uncer:  --- In multi.. problem, agents have uncertain behaviours, such as , influence the planning results, failure, etc.

% considering uncer under a game is challenge: }

% \hl{Despite the advances in multi-agent trajectory planning, developing game-based methods with uncertainties taken into account remains an inevitably encountered and yet long-standing challenge. Firstly, considering uncertainty is important, because neglecting uncertainties can lead to risky planning results. On the other hand, obtaining the equilibrium of a game itself is not a trivial task as it typically requires solving a series of tightly coupled optimization problems, while incorporating uncertainties can easily render the game problem mathematically intractable.}

Despite current advances in this field, a series of persistent and significant challenges remain unsolved in developing game-based methods for multi-agent trajectory planning. 
In particular, it is certainly unclear from the available existing reported works on how to deal with the game effectively with the incorporation of uncertainties, which are inherent in multi-agent interactions due to factors like sensing noises, bounded rationalities, unknown intentions of the game players, etc.
Essentially, these ubiquitous uncertainties generally have significant impacts on the robustness of game-based planners, as they may lead to unpredictable behaviors of the game players. Notwithstanding its importance, incorporating uncertainties in game methods is far from being trivial, as it can easily render the game mathematically intractable. 
Many existing game-based methods simply avoid the mathematical difficulties by neglecting the uncertainties, as they focus on obtaining a single equilibrium with the strong assumption that all agents will follow the trajectories featured by that equilibrium~\cite{fridovich2020efficient,le2022algames,hang2021decision,hang2021cooperative,fisac2019hierarchical}. However, this assumption is potentially erroneous, as the discrepancy between the obtained equilibrium and the actual behaviors of agents could lead to potential risk or even catastrophic results. Meanwhile, other methods endeavor to model uncertainties and/or multimodality in multi-agent interaction from different perspectives, resulting in various trajectory planning methods based on stochastic game~\cite{schwarting2021stochastic,zhong2022chance}, maximal entropy game~\cite{mehr2023maximum,so2022multimodal}, multi-hypothesis game~\cite{laine2021multi,so2023mpogames}, and Bayesian game~\cite{deng2022lane,zhao2023non,zhang2022human,huang2024integrated}. Nonetheless, they are either grounded on very strong and specific assumptions, like the Gaussian assumption, and therefore can only address limited cases of uncertainties, or are only applicable to simple scenarios containing a limited number of agents with poor scalability. It is thus paramount that appropriate efforts should be pursued to develop a scalable game-based interactive planning method with general multimodal uncertainties fully addressed, and it is equally difficult to demonstrate the attainability or even the existence of an equilibrium under these game settings.

In this paper, we introduce a novel game-based interactive trajectory planning method to address the aforementioned challenges. We address the uncertainties and the multimodal nature of the trajectory planning problem with multi-agent interaction by formulating the interaction between different agents as a general Bayesian game, of which the corresponding equilibrium is shown to be optimal over the mathematical expectations of agents' underlying intentions. We show that the Bayesian game is essentially a potential game, and the corresponding Bayesian Nash Equilibrium (BNE) can be obtained by solving a unified optimization problem. Further, a novel distributed algorithm based on dual consensus alternating direction method of multipliers (ADMM) is introduced, which decouples the optimization problem and enables a parallelizable solving scheme. Evidently thus, the scalability is greatly enhanced, and computational efficiency is improved to meet the real-time requirement in trajectory planning applications. The contributions of the paper are listed as follows:
\begin{itemize}
\item[$\bullet$]
% We provide the key insight that the Bayesian game for interactive trajectory planning is essentially a potential game with minor assumptions, such that the corresponding Bayesian Nash Equilibrium (BNE) can be obtained by optimizing over a unified potential function. This merit demonstrates the existence of the equilibrium and favorably facilitates the determination of the equilibrium through off-the-shelf nonlinear optimization solvers. Consequently, the optimal trajectories in highly interactive scenarios with intentional uncertainties are guaranteed to be attainable.
We provide the key insight that the Bayesian game for interactive trajectory planning is essentially a potential game with minor assumptions, and therefore it is equivalent to a unified optimization problem (instead of a series of coupled optimization problems). This merit demonstrates the existence of the equilibrium and favorably facilitates the determination of the equilibrium through off-the-shelf nonlinear optimization solvers. 
% \hl{Consequently, the optimal trajectories in highly interactive scenarios with intentional uncertainties are guaranteed to be attainable.}
\end{itemize}
\begin{itemize}
\item[$\bullet$]
We show that interactive contingency planning, formulated as a contingency game by the existing literature, is also a potential game. Therefore, conclusions drawn on the existence and attainability of equilibrium naturally hold. These results further expand the scope of this paper to the field of contingency planning, and facilitate the development of a tractable and scalable algorithm for solving the interactive contingency planning problem.
\end{itemize}
\begin{itemize}
\item[$\bullet$]
Tailored to the optimization problem arising from the potential Bayesian game, we propose a novel distributed algorithm based on dual consensus ADMM, which exploits the sparsity in the optimization problem to obtain a parallelizable solving scheme. The proposed method demonstrates superiority in scalability and enables efficient solving for the BNE that meets the real-time requirements.
\end{itemize}
\begin{itemize}
\item[$\bullet$]
Simulations and real-world experiments on various scenarios are performed to showcase the superiority of the proposed method in terms of computational efficiency and scalability, and to illustrate its effectiveness in addressing general intentional uncertainties in multi-agent interactions.
\end{itemize}
\begin{itemize}
\item[$\bullet$]
We also release an open-source multi-process implementation of the proposed game solver that achieves real-time performance, which facilitates future research into game-based interactive trajectory planning.
\end{itemize}

The rest of the paper is organized as follows. In Section II, we present a brief overview of existing game methods for interactive trajectory planning. Preliminaries on the ADMM algorithm are included in Section III. In Section IV, we formulate the interactive trajectory planning problem with uncertainties as a Bayesian game, and show that it is essentially a potential game. In Section V, we introduce the distributed solving algorithm based on dual consensus ADMM for the potential game problem. In Section VI, we provide implementation details of the parallelizable game solver. Simulation results are presented in Section VII. The results of the real-world experiments are presented in Section VIII. Finally, conclusions, along with possible future works, are provided in Section IX.

\section{Related Works}
With the pervasive power in modeling the interaction between multiple agents, game-based methods have been well recognized as a promising way in handling interactive trajectory planning problems, with successful applications across different fields~\cite{spica2020real,wang2021game,li2022efficient,fisac2019hierarchical,dreves2018generalized,hang2021cooperative,hang2021decision,hang2022decision,liu2022three}. In~\cite{spica2020real}, a real-time game-theoretic planner is developed for autonomous drone racing, with a sensitivity-enhanced iterative best response algorithm to obtain an approximation of the NE. The same method has been applied to autonomous car racing with successful results~\cite{wang2021game}. In~\cite{li2022efficient,fisac2019hierarchical,hang2021cooperative,hang2022decision,hang2021decision,liu2022three,dreves2018generalized}, game-based planners are developed to generate effective autonomous driving strategies across different traffic scenarios, including lane-merging, roundabouts, and intersections. These studies collectively demonstrate the effectiveness of game-based approaches in addressing trajectory planning problems with multi-agent interactions. Nevertheless, they only focus on obtaining a single equilibrium, while uncertainties and the multimodal nature of game solutions are largely overlooked, which could lead to suboptimal solutions in terms of mathematical expectation and risky planning results.

% \hl{As an inevitably encountered challenge, xxxx}
Efforts have been put into tackling the uncertainties in multi-agent dynamic games from different perspectives. As noted in numerous studies, stochastic game~\cite{schwarting2021stochastic} mainly focuses on observation uncertainties, which uses local iterative dynamic programming in Gaussian belief space to solve a game-theoretic continuous POMDP. Meanwhile, maximum-entropy games~\cite{so2022multimodal,mehr2023maximum} attempt to address behavioral uncertainties by adding an extra entropy loss to the objective function, maintaining the diversity of agents' strategies. However, in these methods, the Gaussian assumption is made to render the game problem mathematically tractable, which limits the application scope. Some of the other works try to address intentional uncertainties of agents by investigating multi-hypothesis game methods~\cite{laine2021multi,peters2024contingency}. In~\cite{laine2021multi}, a multi-hypothesis game is introduced, which models the uncertainty about the intention of other agents by constructing multiple hypotheses about the objectives, and obtains an optimal trajectory with respect to the probability distribution of these hypotheses. In~\cite{peters2024contingency}, a contingency game is investigated for interactive contingency planning, which allows the robot to efficiently interact with other agents by generating strategic motion plans conditioned on multiple possible intents for other actors in the scene. However, these methods are far from being real-time, especially when a large number of agents are involved, and the scalability is not ideal. A branch of methods~\cite{deng2022lane,zhao2023non,zhang2022human,ali2019game,yu2018human} aims to tackle the multimodal uncertainties through Bayesian games, in which the corresponding BNE is shown to be two-way optimal over the expectations of different behavioral modes of both agents. Nonetheless, the game settings in these methods are quite simple and scenario-specific. In~\cite{huang2024integrated}, an integrated decision-making strategy is developed based on a Bayesian game and is shown to be generalizable to different traffic scenarios. However, the scalability is poor due to the game tree setting, which generally admits exponential complexity.
 
As readily reported in the existing literature, game-based methods are known to be computationally heavy. To alleviate the computational burden and fulfill the real-time requirements that often occur in interactive trajectory planning, efforts have been put into developing fast and general game solvers. Among different types of games, linear quadratic games (LQ games)~\cite{bacsar1998dynamic} in which agents are assumed to possess linear dynamics and quadratic objectives are known to admit analytical solutions. The fact that NE strategies of finite-horizon LQ games satisfy the coupled Riccati equation enables the development of a fast game solver based on discrete-time dynamic programming. Grounded on these facts, iterative Linear-Quadratic game~\cite{fridovich2020efficient} introduces a local algorithm for a variety of differential games, in which a series of approximated LQ games are solved iteratively to obtain the local NE strategies. Meanwhile, ALGAMES~\cite{le2022algames} introduce a novel game solver, which obtains the GNE by satisfying the first-order optimality conditions with a quasi-Newton root-finding algorithm with constraints enforced using an augmented Lagrangian formulation. It claims to exhibit real-time performance in traffic scenarios with up to 4 players. Nonetheless, the computational efficiency of these game solvers is far from being ideal. 

To further enhance the computational efficiency, potential game~\cite{monderer1996potential,rosenthal1973class} has been intensively investigated. Instead of solving for a series of coupled optimization problems, the equilibrium corresponding to a potential game can be obtained by solving a unified optimization problem with a collective objective function known as the potential function. This characteristic not only demonstrates the existence of equilibrium but also greatly enhances the attainability. Methods based on potential game have been applied to the field of autonomous driving with success~\cite{liu2023potential,liu2023safe,fabiani2019multi}. Efforts of utilizing dynamic potential game for interactive trajectory planning are featured by~\cite{bhatt2023efficient,kavuncu2021potential}. In potential iLQR~\cite{kavuncu2021potential}, the potential game is combined with a centralized iterative linear quadratic regulator (iLQR) algorithm to achieve fast interactive planning for drones. In~\cite{bhatt2025strategic}, multi-agent trajectory planning is formulated as a weighted constrained potential dynamic game to release the symmetric coupling requirement in the original potential game formulation. Although potential game is generally easy to solve, the scale of the unified optimization problem continues to grow with the number of agents increases, resulting in poor scalability. In distributed potential iLQR~\cite{williams2023distributed}, the large potential game is divided into a set of smaller local potential games according to agents' vicinity to enhance scalability. However, this approach is prone to suboptimality and is unable to generate trajectories over a long horizon, since the vicinity of agents may change. 

% Meanwhile, to the best of our knowledge, there have been very few studies focusing on uncertainties in the interactive trajectory planning problem through the approach of potential games.

Additionally, to enhance the scalability while avoiding the suboptimality, recent studies utilize distributed optimization algorithms like ADMM~\cite{boyd2011distributed} to solve games with a large number of agents. There have been some rather promising preliminary studies on the application of ADMM for interactive trajectory planning, as featured by~\cite{zhang2021semi,rey2018fully,saravanos2023distributed}. Particularly in~\cite{saravanos2023distributed}, ADMM is utilized to solve the cooperative trajectory planning problem for an excessive number of swarm robots. Nevertheless, the computation efficiency is far from satisfying the real-time requirement. The attainability of NE through ADMM is established in~\cite{boorgens2021admm,salehisadaghiani2017distributed}. These works mainly focus on theoretical analysis without experiments showing whether real-time performance is attained.
Meanwhile, scenario-game ADMM~\cite{li2023scenario} proposes a new sample-based approximation to a class of stochastic, general-sum, pure Nash games, and applies a consensus ADMM algorithm to efficiently compute a GNE of the approximated game. By far, the method introduced in~\cite{jing2025decentralized} is the closest to the proposed method in topic, such that it combines scenario-game ADMM with potential game to generate ramp-merging decisions for autonomous vehicles. Nevertheless, it aims to address uncertainties resulting from sensing noise instead of behavioral and intentional uncertainties corresponding to game agents, and the game formulation is largely different from ours.

\section{Preliminaries}
In this section, we give a brief overview of ADMM, while readers are encouraged to refer to~\cite{boyd2011distributed} for more details. Standard ADMM considers an optimization problem in the following form:
\begin{equation}
\label{ADMM}
\begin{aligned}
\min_{x,z}\ &f(x)+g(z)\\
\textup{s.t.}\ &Ax+Bz = c.
\end{aligned}
\end{equation}
In particular, $x\in\mathbb{R}^p$ and $z\in\mathbb{R}^q$ are primal variables, $A\in\mathbb{R}^{r\times p}$ and $B\in\mathbb{R}^{r\times q}$ are weight matrices, and $c\in\mathbb{R}^r$ is a constant vector. $f:\mathbb{R}^p\rightarrow\mathbb{R}$ and $g:\mathbb{R}^q\rightarrow\mathbb{R}$ are convex, proper, and closed functions. The augmented Lagrangian of problem (\ref{ADMM}) is given as
\begin{equation}
\begin{aligned}
    \mathcal{L}_\rho(x,z,y)=f(x)+g(z)&+y^\top(Ax+Bz-c)\\
    &+(\rho/2)||Ax+Bz-c||^2_2
\end{aligned}
\end{equation}
where $y\in\mathbb{R}^r$ is the dual variable and $\rho$ is the Lagrange coefficient such that $\rho>0$. Standard ADMM update steps are given as
\begin{equation}
\begin{aligned}
x^{k+1}&=\argmin_x\mathcal{L}_\rho(x,z^k,y^k),\\
z^{k+1}&=\argmin_z\mathcal{L}_\rho(x^{k+1},z,y^k),\\
y^{k+1}&=y^k+\rho(Ax^{k+1}+Bz^{k+1}-c).
\end{aligned}
\end{equation}
It can be shown that if $\mathcal{L}_\rho$ contains a saddle point, then the iterates approach a feasible solution and the objective approaches an optimal value as $k\rightarrow\infty$. 
% The ADMM algorithm can be naturally extended to optimization problems with more than two blocks.

\section{Potential Game with Intentional Uncertainties}
\subsection{Potential Bayesian Game}
In this section, we formulate the interactive trajectory planning problem with intentional uncertainties as a Bayesian game and demonstrate its equivalence to a potential game to show the accessibility of the corresponding BNE.
For a scenario with $N$ agents involved, we denote the set of participating agents as $\mathcal{N}=\{1,2,...,N\}$. To formulate the interactions between agents as a Bayesian game, we assume that each agent $i\in\mathcal{N}$ has its own type $t_i\in\Tilde{T}_i$, where $\Tilde{T}_i$ denotes the set of types corresponding to agent $i$. Note that these types of agents can refer to intentions like $\{\textup{straight going}, \textup{left turning}\}$ or action styles like $\{\textup{aggressive}, \textup{conservative}\}$, where different types of agent $i$ will generally correspond to different underlying utility functions, and hence different behaviors. Intuitively, each agent does not know the exact types of other agents but can obtain an estimation of the probability distribution of their types through intention prediction based on historical trajectories. We assume that historical information is mutually observable, and therefore, all agents will reach similar conclusions about the probabilities of types of their rivals. Formally, we introduce the following assumption.
\begin{assumption}
A common distribution, $p$, over the probabilities of types of all agents can be established and is known to all agents.
\end{assumption}
Based on this assumption, we conclude that the interactions between agents can be formulated as a Bayesian game of common priors. We use $X^{t_i}$ to denote the concatenated states and control inputs corresponding to type $t_i$ of agent $i$. Namely, $X^{t_i}:=\{x^{t_i}_\tau,u^{t_i}_\tau\}_{\tau\in\mathcal{T}}$, where $x^{t_i}_\tau$ and $u^{t_i}_\tau$ are the state vector and input vector corresponding to type $t_i$ of agent $i$ at the time stamp $\tau$, and $\mathcal{T}=\{0,1,...,T\}$ is the set of all time stamps. Physical constraints such as control limits and dynamic constraints can be expressed as $X^{t_i}\in\mathcal{X}^{t_i}$, where $\mathcal{X}^{t_i}$ is the feasible set. A strategy of agent $i$, $\sigma_i$, can be defined as a map from the space of its type to the space of possible actions, namely $\sigma_i(t_i)\in\mathcal{X}^{t_i}$, and further, the overall strategy $\sigma$ is defined as $\sigma(t):=[\sigma_i(t_i)]_{i\in\mathcal{N}}$. The goal of solving for a Bayesian game is to obtain the corresponding BNE strategy $\sigma^*=[\sigma^*_i]_{i\in\mathcal{N}}$. We define the expected cost of agent $i$ with type $t_i$, given a strategy $\sigma$, as
\begin{equation}
\label{typecost1}
    C_i(\sigma|t_i):=\sum_{t_{-i}\in\Tilde{T}_{-i}}p(t_{-i}|t_i)\hat{C}_{t_i}(\sigma(t_{-i},t_i))
\end{equation}
where $t_{-i}$ is the concatenated type vector of all other agents except $i$, $p(t_{-i}|t_i)$ is the conditional distribution of $t_{-i}$ given $t_i$, and $\hat{C}_{t_i}$ is the cost function corresponding to type $t_i$ of agent $i$. A BNE is thus defined as follows.
\begin{definition}
A strategy $\sigma^*$ is a (pure-strategic) BNE if for all $i\in\mathcal{N}$, for all $t_i\in\Tilde{T}_i$, and all $\sigma_i$, the following inequality holds:
\begin{equation}
    C_i(\sigma^*|t_i)\leq C_i((\sigma_i,\sigma^*_{-i})|t_i).
\end{equation}
\end{definition}
In other words, a BNE strategy for agent $i$, $\sigma^*_i$, is optimal for all its type $t_i\in\Tilde{T}_i$ in terms of expected cost, given the BNE strategy $\sigma^*_{-i}$ of all other agents. 
The above definition clearly shows the significance of a BNE strategy $\sigma^*$ for an interactive trajectory planning problem with intentional uncertainties, in the sense that for an arbitrary agent $i$ and its arbitrary intention $t_i$, the strategy $\sigma^*_i(t_i)$ is optimal over the conditional distribution $p(t_{-i}|t_i)$ of the intentions of all its rivals. Informally speaking, $\sigma_i^*(t_i)$ features the best action to take for agent $i$ with intention $t_i$ under the common prior $p$.

To solve for the BNE, we first introduce the corresponding agent-form game of the original Bayesian game. In the agent-form equivalence, each type of each player in the original Bayesian game is itself an independent player, which is often referred to as a type-player. In other words, the set of players in an agent-form game is given by $\cup_{i\in\mathcal{N}}\Tilde{T}_i$. For a particular type-player $t_i\in\Tilde{T}_i$, its expected cost is given by
\begin{equation}
\label{typecost2}
    C_{t_i}(X):=\sum_{t_{-i}\in\Tilde{T}_{-i}}p(t_{-i}|t_i)\hat{C}_{t_i}(X^{t})
\end{equation}
where $X$ is the joint actions of all type-players, and $X^t$ is the joint actions of all type-players included in the type vector $t=(t_i,t_{-i})$. 
% $\hat{C}_{t_i}$ is the real cost of agent $i$ with type $t_i$ given the real action $X^t$. 
Noted that a type vector $t$ consists of exactly one type from each player $i\in\mathcal{N}$, it is not hard to tell that $X^t$ is the agent-form correspondence of the strategy $\sigma(t)$ in the original Bayesian game, and thus the equivalence between (\ref{typecost1}) and (\ref{typecost2}) is obvious.
We define the following NE for the corresponding agent-form game.
\begin{definition}
    A strategy, $X^*=[[X^{t_i*}]_{t_i\in\Tilde{T}_i}]_{i\in\mathcal{N}}$, is a (pure-strategic) NE corresponding to the agent-form game, if for all $i\in\mathcal{N}$, for all $t_i\in\Tilde{T}_i$, and all $X^{t_i}\in\mathcal{X}^{t_i}$, the following inequality holds:
    \begin{equation}
        C_{t_i}(X^*)\leq C_{t_i}(X^{t_i},X^{-t_i*}).
    \end{equation}
\end{definition}

In the above definition, $X^{-t_i}$ is the joint strategy of all type-players except for type-player $t_i$. Note that it is different from $X^{t_{-i}}$, which is the joint strategy of all type-players in a type vector $t_{-i}$. With the above definition, we introduce the following theorem.

\begin{theorem}\cite[Theorem~9.51]{maschler2020game} 
A strategy $\sigma^*$ is a (pure-strategic) BNE in a Bayesian game with common prior if and only if the strategy $[[\sigma^*(t_i)]_{t_i\in\Tilde{T}_i}]_{i\in\mathcal{N}}$ is an NE in its corresponding agent-form game.
\end{theorem}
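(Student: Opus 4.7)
The plan is to prove each direction by exploiting a single structural fact: the expected cost $C_i(\sigma|t_i)$ in~(\ref{typecost1}) depends on $\sigma$ only through $\sigma_i(t_i)$ and $\sigma_{-i}$, since the values $\sigma_i(t_i')$ for $t_i'\neq t_i$ never appear inside the sum over $t_{-i}$ (whose summand evaluates $\sigma$ at $(t_{-i},t_i)$). This dependence pattern mirrors exactly the way $C_{t_i}(X)$ in~(\ref{typecost2}) depends on $X^{t_i}$ and $X^{-t_i}$ under the identification $X^t\leftrightarrow\sigma(t)$ already noted after~(\ref{typecost2}). With this observation in hand, the equivalence reduces to matching the quantifiers in Definitions~1 and~2.

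For the forward direction (BNE $\Rightarrow$ agent-form NE), I would fix an arbitrary type-player $t_i\in\Tilde{T}_i$ and an arbitrary alternative action $X^{t_i}\in\mathcal{X}^{t_i}$, and construct a Bayesian-game deviation $\sigma_i$ by setting $\sigma_i(t_i):=X^{t_i}$ and $\sigma_i(t_i'):=\sigma_i^*(t_i')$ for every other type $t_i'\neq t_i$. Applying the BNE inequality of Definition~1 at this particular $t_i$ and this $\sigma_i$ gives $C_i(\sigma^*|t_i)\leq C_i((\sigma_i,\sigma_{-i}^*)|t_i)$, and translating both sides through the agent-form correspondence $X^t\leftrightarrow\sigma(t)$ yields the NE inequality $C_{t_i}(X^*)\leq C_{t_i}(X^{t_i},X^{-t_i*})$ of Definition~2.

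For the converse direction (agent-form NE $\Rightarrow$ BNE), I would fix any $i\in\mathcal{N}$, any $t_i\in\Tilde{T}_i$, and any deviating strategy $\sigma_i$ of player $i$, and then consider in the agent-form game the single-type-player deviation that moves only type-player $t_i$ from $\sigma_i^*(t_i)$ to $\sigma_i(t_i)$ while leaving every other type-player fixed (including the other types of player $i$). The NE condition then supplies $C_{t_i}(X^*)\leq C_{t_i}(\sigma_i(t_i),X^{-t_i*})$; translating back via~(\ref{typecost1}) and~(\ref{typecost2}) gives $C_i(\sigma^*|t_i)\leq C_i((\sigma_i,\sigma_{-i}^*)|t_i)$. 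Crucially, because $C_i(\cdot|t_i)$ is insensitive to the restriction of $\sigma_i$ to types $t_i'\neq t_i$, this is already the full BNE inequality for this $\sigma_i$, and the claim follows since $i$, $t_i$, and $\sigma_i$ were arbitrary.

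The main subtlety, and the only step that is not pure unwinding of definitions, is this dependence-structure observation that underlies both directions: I must verify carefully that~(\ref{typecost1}) genuinely uses only $\sigma_i(t_i)$ (through the argument $\sigma(t_{-i},t_i)$ inside the conditional sum) and not the values of $\sigma_i$ at any other type, so that Bayesian-game deviations of the form ``change only one type's action'' are in bijection with agent-form single-type-player deviations. Once this is established, both implications reduce to direct translation between Definitions~1 and~2, and no further analytic machinery is required.
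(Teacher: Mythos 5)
Your proof is correct. Note that the paper does not prove this statement itself but imports it verbatim from the cited reference (Theorem~9.51 of Maschler et al.); your argument is essentially the standard one given there, resting on exactly the right observation that $C_i(\sigma|t_i)$ in~(\ref{typecost1}) depends on $\sigma_i$ only through $\sigma_i(t_i)$, which puts single-type deviations in the Bayesian game in bijection with type-player deviations in the agent form. The only implicit hypothesis worth flagging is that $p(t_i)>0$ for every type (so that $p(t_{-i}\mid t_i)$ is well defined); the paper only states this as Assumption~2 later, but it is already needed for the cost definitions underlying both games.
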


Therefore, to solve for the BNE in a Bayesian game, we can instead solve for the NE in its agent-form equivalence, which is a solution to the following Nash Equilibrium Problems (NEPs):
\begin{flalign}&
\begin{aligned}
\textit{Problem 1:}&\\
&\ \ \ \ \ \ \ \ \ \begin{aligned}
\min_{X^{t_i}}\ \ \ &C_{t_i}(X^{t_i},X^{-t_i})\\
\textup{s.t.}\ \ \ &X^{t_i}\in\mathcal{X}^{t_i}
\end{aligned}
\end{aligned}
&\end{flalign}
which is an optimization control problem for all type-players $t_i\in\Tilde{T}_i,i\in\mathcal{N}$. Particularly, for an interactive trajectory planning problem, the expected cost $C_{t_i}(X^{t_i},X^{-t_i})$ in (\ref{typecost2}) can be further given as
\begin{equation}
\label{cost5}
\begin{aligned}
&C_{t_i}(X^{t_i},X^{-t_i})\\
&=\sum_{t_{-i}\in\Tilde{T}_{-i}}p(t_{-i}|t_i)[c_{t_i}(X^{t_i})+\sum_{j\neq i}c_{t_it_j}(X^{t_i},X^{t_j})]\\
\end{aligned}
\end{equation}
$c_{t_i}(X^{t_i})$ is ego cost term related to tracking error, magnitude of control inputs, etc. $c_{t_it_j}(X^{t_i},X^{t_j})$ is pair-wised coupled cost term between agent $i$ with type $t_i$ and agent $j$ with type $t_j$, which aims to avoid collision and ensure safety.

In general, solving Problem 1 for the corresponding NE is not trivial. Nonetheless, we will show that given the specific structure of the cost defined in (\ref{cost5}), together with several additional assumptions, the game featured by Problem 1 can be transformed into a potential game. As a result, the corresponding NE can be obtained by solving a unified optimization problem. We first introduce the definition of a potential game.
\begin{definition}(Exact Potential Game)
Let $N$ be the number of players, $\mathcal{X}$ be the set of action profile over the action sets $\mathcal{X}^i$ of each player, and $u_i:\mathcal{X}\rightarrow\mathbb{R}$ be the utility function for all $i\in\{1,2,...,N\}$. A game with the above action profile and payoff function is said to be an exact potential game if there exists a potential function $P:\mathcal{X}\rightarrow\mathbb{R}$ such that $\forall i, \forall X^{-i}\in\mathcal{X}^{-i},\forall X^i,\hat{X}^i\in\mathcal{X}^{i}$, the following condition holds:
\begin{equation}
P(X^i,X^{-i}) - P(\hat{X}^i,X^{-i}) = u_i(X^i,X^{-i}) - u_i(\hat{X}^i,X^{-i}).
\end{equation}
\end{definition}
Informally speaking, in a potential game, any change to the cost $u_i$ of agent $i$ resulting from the change of its own action $X^i$, is equal to the change of the potential function $P$.
It is shown that local pure strategic NE corresponding to a potential game can then be found by obtaining the local optima of the corresponding potential function\cite{kavuncu2021potential,liu2023potential}. Before showing the equivalence of the game featured by Problem 1 to a potential game, we further introduce two assumptions:
\begin{assumption}
The marginal probability of each type of each player is strictly greater than zero, namely $p(t_i)>0$ for all $t_i\in\Tilde{T}_i,i\in\mathcal{N}$.
\end{assumption}
\begin{assumption}
The coupled terms between two type-players are symmetric, namely $c_{t_it_j}(X^{t_i},X^{t_j})=c_{t_jt_i}(X^{t_j},X^{t_i})$ for all $t_i\in\Tilde{T}_i,t_j\in\Tilde{T}_j,i\in\mathcal{N},j\in\mathcal{N},j\neq i$.
\end{assumption}

With Assumption 2, it is trivial to show that Problem 1 is equivalent to the following problem.
\begin{flalign}
&
\begin{aligned}
\textit{Problem 2:} & \\
&\ \ \ \ \ \ \begin{aligned}
\min_{X^{t_i}}\ \ \ &p(t_i)C_{t_i}(X^{t_i},X^{-t_i})\\
\textup{s.t.}\ \ \ &X^{t_i}\in\mathcal{X}^{t_i}.
\end{aligned}
\end{aligned}
&
\end{flalign}
Namely, any solution to Problem 1 is also a solution to Problem 2 and vice versa. As a result, the game featured by Problem 2 has the same set of NE as the game featured by Problem 1. 

With Assumption 3, we introduce the following theorem.
\begin{theorem}
    The game featured by Problem 2 is a potential game with the following potential function:
\begin{equation}
\label{potential}
\begin{aligned}
P(X)&=\sum_i\sum_{t_i\in\Tilde{T}_i}p(t_i)c_{t_i}(X^{t_i})\\
&+\sum_{i,j\in\mathcal{N},j> i}\sum_{t_i\in\Tilde{T}_i,t_j\in\Tilde{T}_j}p(t_i,t_j)c_{t_it_j}(X^{t_i},X^{t_j}).
\end{aligned}
\end{equation}
\end{theorem}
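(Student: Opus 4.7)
The plan is to verify Definition 3 (the exact potential game condition) directly: I would fix a type-player $t_i \in \Tilde{T}_i$, an alternative action $\hat{X}^{t_i} \in \mathcal{X}^{t_i}$, and arbitrary frozen actions $X^{-t_i}$ for every other type-player, and then show that
\begin{equation*}
P(X^{t_i},X^{-t_i}) - P(\hat{X}^{t_i},X^{-t_i}) = p(t_i)C_{t_i}(X^{t_i},X^{-t_i}) - p(t_i)C_{t_i}(\hat{X}^{t_i},X^{-t_i}),
\end{equation*}
where the right-hand side is precisely the utility of type-player $t_i$ in Problem 2.

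The first step is to rewrite $p(t_i)C_{t_i}$ so that its coupled terms involve only pairwise marginals of the common prior. Multiplying (\ref{cost5}) by $p(t_i)$ and applying the identity $p(t_i)\,p(t_{-i}\mid t_i)=p(t_i,t_{-i})$ turns the prefactor of every summand into a joint probability. Since the coupled cost $c_{t_it_j}$ depends only on $(X^{t_i},X^{t_j})$, marginalizing over the types $t_k$ with $k\notin\{i,j\}$ collapses $\sum_{t_{-i}} p(t_i,t_{-i})$ to $p(t_i,t_j)$, yielding
\begin{equation*}
p(t_i)C_{t_i} = p(t_i)\,c_{t_i}(X^{t_i}) + \sum_{j\neq i}\sum_{t_j\in\Tilde{T}_j} p(t_i,t_j)\,c_{t_it_j}(X^{t_i},X^{t_j}).
\end{equation*}

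Next I would identify the terms of the potential $P(X)$ in (\ref{potential}) that actually depend on $X^{t_i}$. In the first sum of (\ref{potential}), only $p(t_i)\,c_{t_i}(X^{t_i})$ does. The second sum is indexed by ordered pairs with $j>i$, so for each agent $j\neq i$ exactly one such pair involves agent $i$: it contributes $\sum_{t_j}p(t_i,t_j)\,c_{t_it_j}(X^{t_i},X^{t_j})$ when $j>i$, and $\sum_{t_j}p(t_j,t_i)\,c_{t_jt_i}(X^{t_j},X^{t_i})$ when $j<i$. Invoking Assumption 3 (symmetry of the coupled cost) together with the elementary identity $p(t_j,t_i)=p(t_i,t_j)$ merges these two orderings into the single expression $\sum_{t_j}p(t_i,t_j)\,c_{t_it_j}(X^{t_i},X^{t_j})$, which matches the coupled part of $p(t_i)C_{t_i}$ term by term.

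Subtracting the analogous expressions evaluated at $\hat{X}^{t_i}$ from those at $X^{t_i}$ cancels every contribution that does not involve $X^{t_i}$ on both sides, leaving identical differences, which is precisely the exact potential game identity. The only delicate step is the bookkeeping in the preceding paragraph, where the two orderings $j>i$ and $j<i$ must be correctly merged via Assumption 3; once that is handled, the theorem follows with no further calculation. Observe that Assumption 2 is not invoked in this argument itself, but it is what earlier permitted rescaling each type-player's objective by $p(t_i)$ when passing from Problem 1 to Problem 2 without altering the set of NE.
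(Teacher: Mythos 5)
Your proposal is correct and follows essentially the same route as the paper's proof in Appendix~A: isolate the $X^{t_i}$-dependent part of $P$, marginalize the joint prior so the coupled terms of $p(t_i)C_{t_i}$ carry pairwise marginals $p(t_i,t_j)$, and merge the $j>i$ and $j<i$ orderings via Assumption~3 to match the two expressions term by term. Your closing remark on the role of Assumption~2 (needed only for the Problem~1/Problem~2 equivalence, not for the potential identity itself) is also consistent with the paper's structure.
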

\begin{proof}
See Appendix A.
\end{proof}

Theorem 2, together with other mathematical derivations, reveal that the corresponding BNE can be obtained by solving the following unified optimization problem:
\begin{flalign}&
\begin{aligned}
\textit{Problem 3:}& \\
&\ \ \begin{aligned}
\min_{X}\ &P(X)\\
\textup{s.t.}\ &X^{t_i}\in\mathcal{X}^{t_i},\ \forall i\in\mathcal{N},\ \forall t_i\in\Tilde{T}_i.
\end{aligned}
\end{aligned}
&\end{flalign}
\begin{remark}
The size of the optimization variable $X$ in Problem 3 is proportional to the total number of type-players $\sum_{i\in\mathcal{N}}|\Tilde{T}_i|$. As a result, computational time to solve Problem 3 in a centralized manner will continue to grow with an increasing number of type-players. For example, solving Problem 3 with classic DDP/iLQR algorithms~\cite{kavuncu2021potential} results in a time complexity of $O((\sum_{i\in\mathcal{N}}|\Tilde{T}_i|)^3)$.
\end{remark}

\subsection{Potential Contingency Game}

Contingency planning, in which an agent maintains multiple contingency plans conditioned on the outcome of an uncertain event, is gaining popularity due to its abilities in addressing uncertainties and enhancing safety. In this section, we provide the key insight that the proposed method can easily be adapted to perform contingency planning by revealing its connection with a multi-hypothesis contingency game~\cite{peters2024contingency}. Specifically, we show that the contingency game is also a potential game with an extra constraint. Therefore, the discussions over the accessibility of the corresponding equilibrium hold naturally.

Consider a set of hypotheses $\Theta$, where each hypothesis $\theta\in\Theta$ corresponds to a possible outcome of a traffic scenario. For each agent $i\in\mathcal{N}$, we assign a type $t^\theta_i$ corresponding to each hypothesis $\theta$. We formulate the contingency game similar to~\cite{peters2024contingency}, where the optimization problem corresponding to the ego agent, given other agents' strategies, is defined as
\begin{equation}
\label{EA}
\begin{aligned}
    \mathcal{S}^{EA}(X^{t^\Theta_{-EA}}):=\argmin_{X^{t^\Theta_\textup{EA}}}\ \ \ &\sum_{\theta\in\Theta}p(\theta)C_{t^\theta_\textup{EA}}(X^{t^\theta_\textup{EA}},X^{t^\theta_\textup{-EA}})\\
\textup{s.t.}\ \ \ &X^{t^\theta_{EA}}\in\mathcal{X}^{t^\theta_{EA}},\ \forall \theta\in\Theta,\\
&u^{t^\theta_\textup{EA}}_{\tau}=u^\textup{EA}_\tau ,\ \forall \tau<t_b, \forall \theta\in\Theta.
\end{aligned}
\end{equation}
In particular, $\textup{EA}$ stands for ego agent. $t_b$ is the branching time, and $p(\theta)$ is the probability of the hypothesis $\theta$. $X^{t^\Theta_{EA}}:=\{X^{t^\theta_{EA}}\}_{\theta\in\Theta}$ and $X^{t^\Theta_{-EA}}:=\{X^{t^\theta_{-EA}}\}_{\theta\in\Theta}$ are the collections of trajectories corresponding to the ego agent and other agents under all hypotheses, respectively. $u^\textup{EA}_\tau$ is the common control input at $\tau$ before the branching time.  By solving this optimization problem, the ego agent manages to maintain an optimal plan in response to other agents for each hypothesis, where extra constraints are imposed to merge the gap between different contingency plans before branching time $t_b$. Meanwhile, we use $\textup{OA}$ to represent other agent, and the optimization problem corresponding to the other agent is defined as follows:

\begin{equation}
\label{OA}
\begin{aligned}
&\mathcal{S}^{OA}_{i,\theta}(X^{t^\theta_{-i}}):=\argmin_{X^{t^\theta_i}}\ C_{t^\theta_i}(X^{t^\theta_i},X^{t^\theta_{-i}})\ \textup{s.t.}\ X^{t^\theta_i}\in\mathcal{X}^{t^\theta_i},\\
& \forall \theta\in\Theta, \forall i\in\mathcal{N}, i\neq \textup{EA}.
\end{aligned}
\end{equation}
 We provide the definition of the NE corresponding to the contingency game as follows.
\begin{definition}
    A strategy $\{X^{t^\Theta_{EA},*},X^{t^\Theta_{-EA},*}\}$ is an NE of the game defined by(\ref{EA}) and (\ref{OA}) if and only if \newline
    1) $X^{t^\Theta_{EA},*}\in\mathcal{S}^{EA}(X^{t^\Theta_{-EA},*})$, and \newline 2) $X^{t^\theta_{i},*}\in\mathcal{S}^{OA}_{i,\theta}(X^{t^\theta_{-i},*}),\forall \theta\in\Theta, \forall i\in\mathcal{N}, i\neq \textup{EA}$.
\end{definition}

In particular, the cost function is defined as 
\begin{equation}
    C_{t^\theta_i}(X^{t^\theta_i},X^{t^\theta_{-i}})=c_{t^\theta_i}(X^{t^\theta_i})+\sum_{j\neq i}c_{t^\theta_it^\theta_j}(X^{t^\theta_i},X^{t^\theta_j}).
\end{equation}
It is not difficult to tell that this definition is in accordance with the definition provided in (\ref{cost5}). Together with Assumptions 2 and 3, we introduce the following theorem.
\begin{theorem}
    The contingency game featured by (\ref{EA}) and (\ref{OA}) is a potential game with the following potential function:
\begin{equation}
\label{potential2}
\begin{aligned}
P'(X)&=\sum_i\sum_{\theta\in\Theta}p(\theta)c_{t^\theta_i}(X^{t^\theta_i})\\
&+\sum_{i,j\in\mathcal{N},j> i}\sum_{\theta}p(\theta)c_{t^\theta_it^\theta_j}(X^{t^\theta_i},X^{t^\theta_j}).
\end{aligned}
\end{equation}
\end{theorem}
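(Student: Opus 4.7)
The plan is to follow the template of the Theorem 2 proof, adjusted for the asymmetric player structure of the contingency game. The ego agent acts as a single composite player whose action $X^{t^\Theta_{\textup{EA}}}=\{X^{t^\theta_{\textup{EA}}}\}_{\theta\in\Theta}$ ranges over every hypothesis simultaneously, subject both to $X^{t^\theta_{\textup{EA}}}\in\mathcal{X}^{t^\theta_{\textup{EA}}}$ and to the branching-time coupling $u^{t^\theta_{\textup{EA}}}_\tau=u^{\textup{EA}}_\tau$ for $\tau<t_b$, whereas for every non-ego agent $i\ne\textup{EA}$ and every $\theta\in\Theta$ the pair $(i,\theta)$ is treated as its own type-player with action $X^{t^\theta_i}\in\mathcal{X}^{t^\theta_i}$. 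As a preliminary step, I would invoke the analogue of Assumption 2 (namely $p(\theta)>0$ for all $\theta\in\Theta$) to rescale each OA subproblem in (\ref{OA}) so that its objective becomes $p(\theta)C_{t^\theta_i}$ without affecting its argmin, mirroring the Problem 1 $\to$ Problem 2 reduction used in the proof of Theorem 2 and placing every player's cost on the same weighting scale as $P'$.

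I would then verify the exact-potential identity player by player. For an ego deviation $X^{t^\Theta_{\textup{EA}}}\mapsto\hat{X}^{t^\Theta_{\textup{EA}}}$ with every $X^{t^\theta_j}$ for $j\ne\textup{EA}$ held fixed, I would expand the ego cost via (\ref{cost5}) and collect all summands of $P'$ that depend on some $X^{t^\theta_{\textup{EA}}}$. The unary terms $p(\theta)c_{t^\theta_{\textup{EA}}}(X^{t^\theta_{\textup{EA}}})$ match those of the ego cost term-for-term. For the pairwise terms, the restricted sum $\sum_{i,j,j>i}$ in $P'$ encounters each unordered pair $\{\textup{EA},j\}$ exactly once, and Assumption 3 identifies $c_{t^\theta_{\textup{EA}}t^\theta_j}=c_{t^\theta_jt^\theta_{\textup{EA}}}$ regardless of whether $\textup{EA}<j$ or $\textup{EA}>j$. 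Because contingency costs contain no cross-hypothesis coupling, the $\theta$ summation separates cleanly and the change in $P'$ reproduces $\sum_\theta p(\theta)$ times the per-hypothesis ego cost difference.

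For a single non-ego type-player $(i,\theta)$, the deviation $X^{t^\theta_i}\mapsto\hat{X}^{t^\theta_i}$ only perturbs the $\theta$-slice of $P'$. The affected summands are $p(\theta)c_{t^\theta_i}(X^{t^\theta_i})$ together with $p(\theta)c_{t^\theta_it^\theta_j}(X^{t^\theta_i},X^{t^\theta_j})$ for $j\ne i$, and their net change equals $p(\theta)$ times the difference in $C_{t^\theta_i}$, which is precisely the rescaled OA cost difference set up in the preliminary step. Together with the ego computation, this establishes the exact-potential property and identifies $P'$ as the potential function.

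The main obstacle is the bookkeeping around the composite ego player, since a single EA deviation simultaneously perturbs every per-hypothesis slice and must be shown to trigger precisely the sum-over-$\theta$ change in $P'$, while an OA type-player deviation must trigger only a single $\theta$-slice. Once the summands of $P'$ that touch $X^{t^\theta_{\textup{EA}}}$ versus $X^{t^\theta_i}$ are carefully enumerated and Assumption 3 is applied to dissolve the $j>i$ restriction on ordered pairs, the identity drops out. The branching-time constraint itself plays no role in the potential identity; it only restricts the ego player's feasible action set and is therefore absorbed automatically into the domain over which the potential condition must hold.
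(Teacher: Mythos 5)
Your proposal is correct and follows essentially the same route as the paper's own proof: the paper decomposes $P'$ into a part $P'_{EA}=\sum_{\theta}p(\theta)C_{t^\theta_{EA}}$ touched only by the ego's composite action and a remainder independent of it, and likewise isolates $P'_{i,\theta}=p(\theta)C_{t^\theta_i}$ for each non-ego type-player, invoking Assumption 3 to dissolve the $j>i$ ordering and $p(\theta)>0$ to replace (\ref{OA}) by its $p(\theta)$-weighted equivalent, exactly as you do. Your explicit remarks on the composite ego player's bookkeeping and on the branching-time constraint merely restricting the feasible set are accurate and consistent with the paper's (implicit) treatment.
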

\begin{proof}
    See Appendix B.
\end{proof}

Theorem 3 reveals that the corresponding NE can be obtained by the following unified optimization problem:
\begin{flalign}&
\begin{aligned}
\textit{Problem 4:}&\\
&\begin{aligned}
\min_{X}\ &P'(X)\\
\textup{s.t.}\ &X^{t^\theta_i}\in\mathcal{X}^{t^\theta_i},\ \forall i\in\mathcal{N},\ \forall \theta\in\Theta,\\
&u^{t^\theta_\textup{EA}}_{\tau}=u^\textup{EA}_\tau ,\ \forall \tau<t_b, \forall \theta\in\Theta.
\end{aligned}
\end{aligned}
&\end{flalign}
The solution to this optimization problem can be easily obtained by off-the-shelf nonlinear optimization solvers, rendering the corresponding NE readily available. Thus, by showing the potential nature of a contingency game, we introduce a novel scheme for achieving interactive contingency planning.

\begin{remark}
It is not difficult to see that Problem 4 is adapted from Problem 3 by assigning a type $t^\theta_i$ corresponding to each hypothesis $\theta$ for each agent $i$ with correlated probability distribution $p(t^{\theta_1}_i,t^{\theta_2}_j)=p(\theta)$ if $\theta_1=\theta_2=\theta$ and $p(t^{\theta_1}_i,t^{\theta_2}_j)=0$ otherwise, as well as adding additional constraints to merge the gap between trajectories corresponding to different types of the ego agent before branching time $t_b$. These facts reveal that a contingency game can be viewed as a Bayesian game with a correlated probability distribution over types and an extra set of contingency constraints.
\end{remark}

\section{Parallel Optimization for Potential Bayesian Game via Dual Consensus ADMM}
Previous discussions introduce a feasible way to solve the Bayesian game by examining its nature as a potential game. As a result, the corresponding BNE can be readily obtained by solving a unified optimal control problem. Nevertheless, complexity analysis reveals that it is particularly challenging to apply the introduced methodology to applications with strong real-time requirements like trajectory planning for autonomous agents. In fact, a centralized solving scheme can be unaffordable even for a small group of agents, as each agent can typically admit multiple types, and thus the total number of type-players will remain substantial. Meanwhile, inspections of (\ref{potential}) reveal that the coupled cost terms are pairwise, and only exist between type-players corresponding to different players. These facts favorably facilitate the development of a decentralized solving scheme, which exploits the sparsely coupled nature of the problem to enable parallel solving, and thus the computational efficiency can be greatly enhanced. 

In this section, we introduce a novel decentralized optimization scheme based on dual consensus ADMM, which enables a fully parallelizable solution to Problem 3 and is shown to be generalizable to other optimization problems with similar sparsely coupled structures like Problem 4. We first introduce the following definition of the indicator function.

\begin{definition}
The indicator function with respect to a set $\mathbb{\mathcal{X}}$ is defined as
\begin{equation}
\mathcal{I}_\mathcal{X}(x)=
\left\{
\begin{array}{ll}
0 &\textup{if}\ \ x\in\mathcal{X}, \\
\infty &\textup{otherwise}.
\end{array}
\right.
\end{equation}
\end{definition}
With such a definition, we define the individual cost term corresponding to the type-player $t_i$ as
\begin{equation}
    f_{t_i}(X^{t_i}) = p(t_i)c_{t_i}(X^{t_i}) + \mathcal{I}_{\mathcal{X}^{t_i}}(X^{t_i})
\end{equation}
which is a combination of the weighted ego cost term and the indicator function over the feasible set. Further, we assume that the coupled cost term between $t_i$ and $t_j$, $c_{t_i,t_j}(X^{t_i},X^{t_j})$, is a function over linear combination of $X^{t_i}$ and $X^{t_j}$. In other words, there exists a function $g_{(t_i,t_j)}$, such that
\begin{equation}
\label{gdefinition}
\begin{aligned}
    g_{(t_i,t_j)}(Q_{(t_i,t_j),t_i}X^{t_i}&+Q_{(t_i,t_j),t_j}X^{t_j})\\
    &=p(t_i,t_j)c_{t_i,t_j}(X^{t_i},X^{t_j})
\end{aligned}
\end{equation}
for arbitrary $t_i$ and $t_j$, where $Q_{(t_i,t_j),t_i}$ and $Q_{(t_i,t_j),t_j}$ are coefficient matrices. In this case, Problem 3 can be reformulated as the following optimization problem:
\begin{equation}
\label{prime}
\min_{\{X^v\}} \sum_{v\in\mathcal{V}} f_v(X^v)+\sum_{e\in\mathcal{E}}g_e(\sum_{v\in e}Q_{v,e}X^v)
\end{equation}
which can be understood as defined over an undirected graph $(\mathcal{V},\mathcal{E})$. In particular, $\mathcal{V}$ is the set of vertices, which in our case coincides with the set of all type-players $\cup_{i\in\mathcal{N}}\Tilde{T}_i$. $\mathcal{E}$ is the set of edges such that $\mathcal{E}=\{(t_i,t_j)|t_i\in\Tilde{T}_i,t_j\in\Tilde{T}_j,i\in\mathcal{N},j\in\mathcal{N},j>i\}$ where $(t_i,t_j)$ is an undirected edge connecting $t_i$ and $t_j$. Note that only type-players with respect to different players are connected, while two different type-players $t_i$ and $t'_i$ for the same player $i$ are not connected in the graph. $Q_{v,e}$ is the corresponding coefficient matrix. To separate the coupling cost terms from the individual cost terms, the problem can be reformulated as
\begin{equation}
\begin{aligned}
\min_{\{X^v\}}&\ \sum_{v\in\mathcal{V}} f_v(X^v)+\sum_{e\in\mathcal{E}}g_e(w_e)\\
\textup{s.t.}&\ \sum_{v\in e}Q_{v,e}X^v = w_e
\end{aligned}
\end{equation}
where $w_e$ is an auxiliary variable for decomposition. The Lagrangian is given as 
\begin{equation}
\begin{aligned}
\mathcal{L}(X,w,y) = \sum_{v\in\mathcal{V}} f_v(X^v)&+\sum_{e\in\mathcal{E}}g_e(w_e)\\
&+\sum_{e\in\mathcal{E}}y_e^\top(\sum_{v\in e}Q_{v,e}X^v-w_e)
\end{aligned}
\end{equation}
where $\{y_e\}_{e\in\mathcal{E}}$ are the Lagrange dual variables. The Lagrange dual function is thus defined as
\begin{equation}
\begin{aligned}
h(y)&=\inf_{X,w}\mathcal{L}(X,w,y)\\
&=\inf_X[\sum_{v\in\mathcal{V}} (f_v(X^v)+\sum_{e\in \textup{Adj}(v)}y_e^\top Q_{v,e}X^v)]\\
&+\inf_w[\sum_{e\in\mathcal{E}}(g_e(w_e)-y_e^\top w_e)]\\
&=-\sum_{v\in\mathcal{V}}f^*_v(-\sum_{e\in \textup{Adj}(v)}y_e^\top Q_{v,e})-\sum_{e\in\mathcal{E}}g^*_e(y_e)\\
&=-\sum_{v\in\mathcal{V}}(f^*_v(-\sum_{e\in \textup{Adj}(v)}y_e^\top Q_{v,e})+\sum_{e\in \textup{Adj}(v)}\frac{1}{N_e}g^*_e(y_e))
\end{aligned}
\end{equation}
where $f^*_v$ and $g^*_e$ are the convex conjugate functions of $f_v$ and $g_e$, respectively. $N_e$ is the number of vertices connected by edge $e$. $\textup{Adj}(v)$ denotes the set of edges that connect the vertex $v$. Maximizing the Lagrange dual function with respect to $y$, $\max_y h(y)$, results in the following Lagrange dual problem:
\begin{equation}
\label{dual}
\begin{aligned}
\min_{\{y_v\}}&\ \sum_{v\in\mathcal{V}}(f^*_v(-Q_v^\top y_v)+g^*_v(y_v))\\
\textup{s.t.}&\ y_v=[y_e]_{e\in\textup{Adj}(v)}.
\end{aligned}
\end{equation}
In particular, $Q_v$ is the concatenated coefficient matrix defined as $Q_v=[Q_{v,e}]_{e\in\textup{Adj}(v)}$. $y_v$ is the concatenated dual variables with respect to vertex $v$. $g^*_v(y_v)=\sum_{e\in \textup{Adj}(v)}\frac{1}{N_e}g^*_e(y_e)$ is the convex conjugate function of $g_v$, which is defined as $g_v(w_v)=\sum_{e\in \textup{Adj}(v)}\frac{1}{N_e}g_e(N_ew_e)$. It can be seen from (\ref{dual}) that the Lagrange dual problem of (\ref{prime}) has a decomposed consensus structure. As a result, consensus ADMM can be applied to solve this problem in a distributed manner.

To derive the corresponding consensus ADMM algorithm, we first rewrite the constraint in (\ref{dual}) as 
$E_{v,e}y_v=y_e$, where $E_{v,e}$ is a constant matrix such that by pre-multiplying $E_{v,e}$, the section of $y_v$ corresponding to $y_e$ are selected. Problem (\ref{dual}) can be reformulated as
\begin{equation}
\label{dual2}
\begin{aligned}
\min_{\{y_v\}}&\ \sum_{v\in\mathcal{V}}(\psi_v(y_v)+g^*_v(z_v))\\
\textup{s.t.}&\ E_{v,e}y_v=y_e,\ y_v=z_v
\end{aligned}
\end{equation}
where $\psi=f^*_v\circ (-Q^\top_v)$.
The augmented Lagrangian corresponding to problem (\ref{dual}) is defined as
\begin{equation}
\begin{aligned}
&\mathcal{L}_{\sigma,\rho}(y_v,z_v,y_e,s,\lambda) = \\
&\sum_{v\in\mathcal{V}}[\psi_v(y_v)+g^*_v(z_v)+s_v^\top(y_v-z_v)+\frac{\sigma}{2}||y_v-z_v||^2\\
&+\sum_{e\in\textup{Adj}(v)}(\lambda_{v,e}^\top(E_{v,e}y_v-y_e)+\frac{\rho}{2}||E_{v,e}y_v-y_e||^2)]
\end{aligned}
\end{equation}
where $s_v$ and $\lambda_{v,e}$ are dual variables. $\sigma$ and $\rho$ are positive constants. The following update steps follow from the standard ADMM algorithm
\begin{subequations}
\begin{align}
y_v^{k+1}&=\underset{y_v}{\arg\min}\ \{\psi_v(y_v)+s_v^{k\top} y_v+\frac{\sigma}{2}||y_v-z^k_v||^2\notag\\
&+\sum_{e\in\textup{Adj}(v)}(\lambda_{v,e}^{k\top}E_{v,e}y_v+\frac{\rho}{2}||E_{v,e}y_v-y^k_e||^2)\},\label{yv}\\
z_v^{k+1}&=\underset{z_v}{\arg\min}\ \{g^*_v(z_v)-s_v^{k\top} z_v+\frac{\sigma}{2}||y^{k+1}_v-z_v||^2\},\label{zv}\\
y_e^{k+1}&=\underset{y_e}{\arg\min}\ \{\sum_{v\in e}-\lambda_{v,e}^{k\top} y_e+\frac{\rho}{2}||E_{v,e}y^{k+1}_v-y_e||^2\},\label{ye}\\
s^{k+1}_v&=s^k_v+\sigma(y^{k+1}_v-z^{k+1}_v),\label{sv}\\
\lambda^{k+1}_{v,e}&=\lambda^k_{v,e}+\rho(E_{v,e}y^{k+1}_v-y^{k+1}_e).\label{lambdave}
\end{align}
\end{subequations}
To derive a fully parallel optimization scheme with respect to vertices only, we hope to eliminate $y_e$ and $\lambda_{v,e}$ from the above updates. Analytical solution to (\ref{ye}) can be obtained as
\begin{equation}
\label{ye_analytical}
    y^{k+1}_e=\frac{1}{N_e}\sum_{v\in e}E_{v,e}y^{k+1}_v+\frac{1}{\rho N_e}\sum_{v\in e}\lambda^k_{v,e}.
\end{equation}
Plugging (\ref{ye_analytical}) into (\ref{lambdave}) yields
\begin{equation}
\begin{aligned}
\lambda^{k+1}_{v,e}&=\frac{N_e-1}{N_e}\lambda^k_{v,e}-\frac{1}{N_e}\sum_{v'\in e/\{v\}}\lambda^k_{v',e}\\
&+\frac{\rho(N_e-1)}{N_e}E_{v,e}y^{k+1}_v-\frac{\rho}{N_e}\sum_{v'\in e/\{v\}}E_{v',e}y^{k+1}_{v'}
\end{aligned}
\end{equation}
which clearly implies $\sum_{v\in e}\lambda^k_{v,e}=0$ for all $k$. $e/\{v\}$ denotes the set of vertices connected by edge $e$ excluding $v$. Therefore, (\ref{ye_analytical}) can be simplified as
\begin{equation}
\label{ye_short}
    y^{k+1}_e=\frac{1}{N_e}\sum_{v\in e}E_{v,e}y^{k+1}_v
\end{equation}
and further, (\ref{lambdave}) can be simplified as
\begin{equation}
\lambda^{k+1}_{v,e}=\lambda^{k}_{v,e}+\frac{\rho}{N_e}\sum_{v'\in e}(E_{v,e}y^{k+1}_v-E_{v',e}y^{k+1}_{v'}).
\end{equation}
We further define $\lambda^k_v=[\lambda^k_{v,e}]_{e\in\textup{Adj}(v)}$ such that $\lambda^{k\top}_v=\sum_{e\in\textup{Adj}(v)}\lambda_{v,e}^{k\top}E_{v,e}$. Plug (\ref{ye_short}) into (\ref{yv}) to eliminate $y_e$ from (\ref{yv}), and we have
\begin{equation}
\begin{aligned}
y_v^{k+1}&=\underset{y_v}{\arg\min}\ \{\psi_v(y_v)+(s_v^k+\lambda_v^k)^{\top} y_v+\frac{\sigma}{2}||y_v-z^k_v||^2\notag\\
&+\sum_{e\in\textup{Adj}(v)}\frac{\rho}{2}||E_{v,e}y_v-\frac{1}{N_e}\sum_{v'\in e}E_{v',e}y^{k}_{v'}||^2\}.
\end{aligned}
\end{equation}

\begin{algorithm}[t]
\caption{Dual Consensus ADMM for Solving (\ref{prime})
}\label{alg:alg1}
\begin{algorithmic}[1]
\State \textbf{choose} $\sigma,\rho >0$
\State \textbf{initialize} $y^{0}_v,z^{0}_v,s^{0}_v,\lambda^{0}_v,X^{0,v}\ \forall v \in \mathcal{V}$
\State \textbf{repeat}: 
\State \hspace{0.2cm} \textbf{For} $v \in \mathcal{V}$ \textbf{in parallel}:
\State \hspace{0.5cm} Send $E_{v,e}y^k_v$ to $v'$ for all $e\in\textup{Adj}(v),v'\in e/\{v\}$
\State \hspace{0.5cm} $r^{k+1}_v=\sigma z^k_v-\lambda^k_v-s^k_v$
\Statex \hspace{1.8cm} $+\sum_{e\in\textup{Adj}(v)}\sum_{v'\in e}\frac{\rho}{N_e}E^\top_{v,e}E_{v',e}y^k_{v'}$
\State \hspace{0.5cm} $X^{k+1,v}=\underset{X^v}{\arg\min}\ \{f_v(X^v)$
\State \hspace{1.8cm} $+\frac{1}{2(\sigma+\rho)}||Q_vX^v+r^{k+1}_v||^2\}$
\State \hspace{0.5cm} $y^{k+1}_v=\frac{1}{\sigma+\rho}(Q_vX^{k+1,v}+r^{k+1}_v)$
\State \hspace{0.5cm} $z^{k+1}_v=\frac{s_v^k}{\sigma}+y^{k+1}_v-\frac{1}{\sigma}\textup{Prox}^{\frac{1}{\sigma}}_{g_v}(s^k_v+\sigma y^{k+1}_v).$
\Statex \hspace{0.5cm} $s^{k+1}_v=s^k_v+\sigma(y^{k+1}_v-z^{k+1}_v)$
\State \hspace{0.5cm} \textbf{For} $e\in\textup{Adj}(v)$:
\State \hspace{1.0cm} $\lambda^{k+1}_{v,e}=\lambda^{k}_{v,e}+\frac{\rho}{N_e}\sum_{v'\in e}(E_{v,e}y^{k+1}_v-E_{v',e}y^{k+1}_{v'})$
\State \hspace{0.5cm} $\lambda^{k+1}_{v}=[\lambda^{k+1}_{v,e}]_{e\in\textup{Adj}(v)}$
\State \textbf{until} termination criterion is satisfied
\end{algorithmic}
\label{alg1}
\end{algorithm}

By completing the square, this optimization problem is equivalent to
\begin{equation}
y_v^{k+1}=\underset{y_v}{\arg\min}\ \{f^*_v(-Q_v^\top y_v)+\frac{\sigma+\rho}{2}||y^\top_v-\frac{1}{\sigma+\rho}r^{k+1}_v||^2\}
\end{equation}
where
\begin{equation}
\label{rupdate}
    r^{k+1}_v=\sigma z^k_v-\lambda^k_v-s^k_v+\sum_{e\in\textup{Adj}(v)}\sum_{v'\in e}\frac{\rho}{N_e}E^\top_{v,e}E_{v',e}y^k_{v'}.
\end{equation}
Following \cite[Lemma~B.1]{banjac2019decentralized}, we conclude that this optimization problem admits the solution
\begin{equation}
\label{yupdate}
    y^{k+1}_v=\frac{1}{\sigma+\rho}(Q_vX^{k+1,v}+r^{k+1}_v)
\end{equation}
with
\begin{equation}
\label{Xupdate}
X^{k+1,v}=\underset{X^v}{\arg\min}\ \{f_v(X^v)+\frac{1}{2(\sigma+\rho)}||Q_vX^v+r^{k+1}_v||^2\}.
\end{equation}
It will then be shown that the sequence $\{X^{k,v}\}$ will converge to a minimizer of (\ref{prime}). Meanwhile, by completing the square of (\ref{zv}), we conclude that
\begin{equation}
\label{zupdate}
\begin{aligned}
z^{k+1}_v&=\underset{y_v}{\arg\min}\ \{g^*_v(z_v)+\frac{\sigma}{2}||z_v-\frac{s^k_v+\sigma y^{k+1}_v}{\sigma}||^2\}\\
&=\textup{Prox}^\sigma_{g^*_v}(\frac{s_v^k}{\sigma}+y^{k+1}_v)\\
&=\frac{s_v^k}{\sigma}+y^{k+1}_v-\frac{1}{\sigma}\textup{Prox}^{\frac{1}{\sigma}}_{g_v}(s^k_v+\sigma y^{k+1}_v)
\end{aligned}
\end{equation}
where the proximal operator is defined as $\textup{Prox}^\rho_f(x)={\arg\min}_y\ \{f(y)+\frac{\rho}{2}||x-y||^2\}$ and the last equation follows from \cite[Theorem~14.3(ii)]{bauschke2017correction}.

Based on the previous discussion, we introduce Algorithm 1, which is a fully distributed algorithm for solving optimization problems in the form of (\ref{prime}). Formally, we introduce the following assumption and theorem.
\begin{assumption}
    Assume that $f_v$ and $g_e$ are convex, proper, and closed for all $v\in\mathcal{V}$ and $e\in\mathcal{E}$. Also, a prime-dual solution exists for problem (\ref{prime}) and strong duality holds. Further, (\ref{Xupdate}) admits a unique and bounded solution.
\end{assumption}
\begin{theorem}

    The iterates $\{z^{k}_v\}_{v\in\mathcal{V}}$ and $\{y^{k}_v\}_{v\in\mathcal{V}}$ of Algorithm 1 converge to a minimizer of the dual problem (\ref{dual2}), and the iterates $\{X^{k,v}\}_{v\in\mathcal{V}}$ of Algorithm 1 converges to a minimizer of the primal problem (\ref{prime}).
\end{theorem}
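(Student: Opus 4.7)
The plan is to show that Algorithm 1 is, after the elimination of $y_e$ and $\lambda_{v,e}$, exactly two-block ADMM applied to the reformulated dual problem (\ref{dual2}), and then invoke classical ADMM convergence together with a primal-recovery lemma. The proof will proceed in three stages: (i) verify that the standing hypotheses of ADMM convergence are met for (\ref{dual2}); (ii) confirm that the compact updates in Algorithm 1 coincide with those of standard ADMM on (\ref{dual2}); and (iii) translate dual convergence to primal convergence of $\{X^{k,v}\}$.

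For stage (i), under Assumption 4, the functions $f_v$ and $g_e$ are convex, proper, and closed, hence so are their convex conjugates $f_v^*$ and $g_e^*$. It follows that $\psi_v=f_v^*\circ(-Q_v^\top)$ and $g_v^*(y_v)=\sum_{e\in\textup{Adj}(v)}\tfrac{1}{N_e}g_e^*(y_e)$ are convex, proper, and closed. Since a primal-dual solution exists and strong duality holds, the dual problem (\ref{dual2}) has an optimal value equal to the negative of the primal optimum, and a saddle point of its augmented Lagrangian exists. The equality constraints $E_{v,e}y_v=y_e$ and $y_v=z_v$ are linear, so (\ref{dual2}) is a genuine instance of the two-block ADMM template with blocks $(y_v)_{v\in\mathcal{V}}$ and $(z_v,y_e)$.

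For stage (ii), I would verify that the elimination carried out in the derivation above is exact. Starting from the standard ADMM updates (\ref{yv})--(\ref{lambdave}), I would note that an induction argument using (\ref{lambdave}) together with (\ref{ye_analytical}) shows $\sum_{v\in e}\lambda_{v,e}^{k}=0$ for every $k$ provided the initialization satisfies this property (which is the case for $\lambda_{v,e}^0=0$). This justifies the simplifications (\ref{ye_short}) and the compact $\lambda$-update of Algorithm 1. Next, combining the quadratic completion leading to (\ref{rupdate}) with \cite[Lemma~B.1]{banjac2019decentralized}, under the assumption that (\ref{Xupdate}) admits a unique bounded minimizer, the $y_v$-subproblem is equivalent to (\ref{Xupdate})--(\ref{yupdate}). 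Finally, the $z_v$-update (\ref{zupdate}) follows from a direct application of \cite[Theorem~14.3(ii)]{bauschke2017correction} relating proximal operators of a function and its conjugate (Moreau decomposition). Thus $\{(y_v^k,z_v^k)\}$ are exactly the iterates of standard two-block ADMM applied to (\ref{dual2}).

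For stage (iii), classical convergence results for two-block ADMM (e.g.\ \cite{boyd2011distributed}) under the conditions verified in stage (i) guarantee that $\{y_v^k\}_{v\in\mathcal{V}}$ and $\{z_v^k\}_{v\in\mathcal{V}}$ converge to a minimizer of (\ref{dual2}). To transfer this to the primal, I would use the primal-recovery part of \cite[Lemma~B.1]{banjac2019decentralized}: the uniqueness and boundedness of the solution of (\ref{Xupdate}) assumed in Assumption 4, together with strong duality, imply that the sequence $\{X^{k,v}\}$ generated by (\ref{Xupdate}) converges to a minimizer of the primal problem (\ref{prime}). The main obstacle I anticipate is the primal-recovery step: without the uniqueness hypothesis in Assumption 4, the conjugate-function relationship $y_v^{k+1}=\frac{1}{\sigma+\rho}(Q_vX^{k+1,v}+r_v^{k+1})$ would not pin down $X^{k+1,v}$ uniquely from the dual iterates, and accumulation points of $\{X^{k,v}\}$ would need to be handled via a subsequence argument rather than a direct convergence statement. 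With Assumption 4 in force, however, the cited lemma closes the gap and yields the asserted primal convergence.
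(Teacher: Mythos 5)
Your overall route coincides with the paper's: both treat Algorithm 1 as standard two-block ADMM applied to the reformulated dual (\ref{dual2}), obtain dual convergence of $\{y_v^k\}$ and $\{z_v^k\}$ from classical results (the paper cites \cite[Cor.~28.3]{bauschke2017correction}), and then recover the primal. Your stages (i) and (ii) are sound and essentially reproduce the derivation already carried out in Section V, including the induction giving $\sum_{v\in e}\lambda^k_{v,e}=0$ and the Moreau-decomposition step for the $z_v$-update.

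The gap is in stage (iii). You correctly identify primal recovery as the main obstacle, but then delegate it to ``the primal-recovery part of \cite[Lemma~B.1]{banjac2019decentralized}.'' That lemma, as it is used in this paper, only establishes the \emph{per-iteration} identity between the $y_v$-subproblem and the pair (\ref{Xupdate})--(\ref{yupdate}); it does not by itself assert that $X^{k,v}$ converges to a primal minimizer, and the reference treats a single global coupling constraint rather than the edge-wise structure with selection matrices $E_{v,e}$ and weights $1/N_e$ used here. The paper closes this gap by writing down the full KKT system (\ref{optimality}) for the primal problem (\ref{prime}) and verifying each condition at the limit point $(X^{v*},w^*_e,y^*_e)$: stationarity in $X^v$ follows from Step 9 together with (\ref{yupdate}); the inclusion $y^*_e\in\partial g_e(w^*_e)$ follows from the optimality condition of the $z_v$-update, summation over $v\in e$, and the conjugate-subgradient inversion \cite[Cor.~16.30]{bauschke2017correction}; and primal feasibility $\sum_{v\in e}Q_{v,e}X^{v*}=w^*_e$ requires a separate computation that substitutes (\ref{rupdate}) into (\ref{yupdate}), uses $\sum_{v\in e}E_{v,e}\lambda^k_v=0$, and collapses the triple sum $\sum_{v\in e}\sum_{e'\in\textup{Adj}(v)}\sum_{v'\in e'}\frac{\rho}{N_{e'}}E_{v,e}E^\top_{v,e'}E_{v',e'}y^*_{v'}$ to $\rho N_e y^*_e$. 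This last verification is the substantive content of the theorem and is absent from your proposal; to complete the argument you would need to either carry out this KKT verification or prove that the cited lemma genuinely extends to the graph-structured setting.
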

\begin{proof}
See Appendix C.
\end{proof}

Particularly, similar dual consensus ADMM methods are also introduced in~\cite{banjac2019decentralized,grontas2022distributed}. Nevertheless, we hope to emphasize that comparing to the algorithms proposed in \cite{banjac2019decentralized,grontas2022distributed}, Algorithm 1 is a novel distributed algorithm for solving problems in the shape of (\ref{prime}), which contains individual cost terms and coupled cost terms defined in a pair-wise or group-wise manner. In Algorithm 1, each node $v$ only computes the dual variables and the auxiliary variables with respect to the coupled cost terms in which $v$ is involved. Therefore, it is particularly suitable for solving the potential Bayesian game defined in Problem 3, where pair-wise coupled cost terms exist only between pairs of type-players $(t_i,t_j)$ corresponding to different players $i$ and $j$. It is also readily applicable to other similar optimization problems that are sparsely coupled. In the simulations, we will also show that the proposed method is superior to the decentralized solving scheme based on methods proposed in~\cite{banjac2019decentralized,grontas2022distributed} in terms of computational efficiency.

\section{Implementation Details}
With the above derivations, a parallel solving scheme for the potential Bayesian game can be developed by applying Algorithm 1 to Problem 3. Similar to \cite{huang2023decentralized}, the individual cost term for $t_i$, $c_{t_i}$, is defined as
\begin{equation}
    c_{t_i}(X^{t_i})=\sum_{\tau\in\mathcal{T}}||x^{t_i}_\tau-x^{t_i}_{\tau,ref}||^2_Q +||u^{t_i}_\tau||^2_R
\end{equation}
where $x^{t_i}_{\tau,ref}$ is the reference state vector for type-player $t_i$ at $\tau$. $Q$ and $R$ are positive semi-definite coefficient matrices. Meanwhile, $||\cdot||^2_W$ is the square of weighted $L_2$-norm $||a||^2_W=a^\top Wa$. As an example for illustration, we assume that all agents possess the same dynamics as a single-track bicycle, which is commonly used for vehicle dynamic constraints. We define $\mathcal{X}^{t_i}$ as the feasible set of the following dynamic constraints
\begin{equation}
    x^{t_i}_{\tau+1}=f(x^{t_i}_\tau,u^{t_i}_\tau)
\end{equation}
which is characterized by the following equations
\begin{equation}
\label{dynamics}
\left\{\begin{aligned}
p_{x,\tau+1} &= p_{x,\tau}+f_r(v_\tau,\delta_\tau)\cos(\theta_\tau),\\
p_{y,\tau+1} &= p_{y,\tau}+f_r(v_\tau,\delta_\tau)\sin(\theta_\tau),\\
\theta_{\tau+1} &= \theta_{\tau}+\arcsin(\frac{\tau_s v_\tau\sin(\delta_\tau)}{b}),\\
v_{\tau+1} &= v_\tau+\tau_sa_\tau,
\end{aligned}
\right.
\end{equation}
where $p_x$ and $p_y$ denote the X and Y coordinates of mid-point of the rear axle of the vehicle, $\theta$ is the heading angle, $v$ is the longitudinal velocity, $\delta$ is the front wheel steering angle, and $a$ is the acceleration. $\tau_s$ is the time interval and $b$ is the wheelbase. $f_r$ is defined as
\begin{equation}
f_r(v,\delta) = b+\tau_sv\cos(\delta)-\sqrt{b^2-(\tau_sv\sin(\delta))^2}.
\end{equation}
Naturally, the state vector and the control input vector are defined as $x^{t_i}_\tau=[p^{t_i}_{x,\tau},p^{t_i}_{y,\tau},\theta^{t_i}_\tau,v^{t_i}_\tau]$ and $u^{t_i}_\tau=[\theta^{t_i}_\tau,a^{t_i}_\tau]$. Meanwhile, the coupled cost terms are corresponding to collision avoidance. Similar to the case of a vehicle, we represent each agent with two identical circles aligned along the longitudinal axis, with the one near the front end denoted as $f$ and the one near the rear end denoted as $r$. Penalties are imposed on distances between each pair of circles corresponding to different agents, namely
\begin{equation}
\begin{aligned}
&c_{t_it_j}(X^{t_i},X^{t_j})=\sum_{\tau\in\mathcal{T}}\sum_{\eta,\gamma\atop\in\{f,r\}} (l^{t_i\eta,t_j\gamma}_\tau)^2,\\
&l^{t_i\eta,t_j\gamma}_\tau=\left\{
\begin{array}{ll}
\sqrt{\beta}(d^{t_i\eta,t_j\gamma}_\tau-d_\textup{safe}) & \textup{if}\ d^{t_i\eta,t_j\gamma}_\tau < d_\textup{safe},\\
0 & \textup{else}
\end{array}
\right.
\end{aligned}
\end{equation}
where $d^{t_i\eta,t_j\gamma}$ is the distance between center of circle $\eta$ of $t_i$ and circle $\gamma$ of $t_j$, $d_\textup{safe}$ is the predefined safety distance, and $\beta$ is a weight parameter.

Nonetheless, it is not appropriate to directly apply Algorithm 1 with these definitions, due to two existing problems: 1) the dynamic model is nonlinear, and therefore its corresponding feasible set $\mathcal{X}^{t_i}$ is non-convex; 2) the definition of the coupled cost term $c_{t_i,t_j}$ does not meet the requirement featured by ($\ref{gdefinition}$). To resolve these problems, we adopt the convexification strategy similar to \cite{huang2023decentralized}. Namely, we perform convexification around current nominal trajectories $X^{t_i}$ and optimize over $\delta X^{t_i}=\{\delta x^{t_i}_\tau,\delta u^{t_i}_\tau\}_{\tau\in\mathcal{T}}$. Linearization of dynamic constraints around current nominal trajectories is performed to ensure that the feasible set is linear and thus convex. Also, Gauss-Newton approximation is applied on $c_{t_it_j}$ to ensure that $g$ is convex and ($\ref{gdefinition}$) is satisfied. Specifically, the ego cost is reformulated as
\begin{equation}
\label{convex1}
    \hat{c}_{t_i}(\delta X^{t_i}) = \sum_{\tau\in\mathcal{T}}||x^{t_i}_\tau-x^{t_i}_{\tau,ref}+\delta x^{t_i}_\tau||^2_Q +||u^{t_i}_\tau+\delta u^{t_i}_\tau||^2_R,
\end{equation}
and the collision avoidance cost is approximated as
\begin{equation}
\label{hat_c_couple}
\begin{aligned}
&\hat{c}_{t_it_j}(\delta X^{t_i},\delta X^{t_j})=\\
&\sum_{\tau\in\mathcal{T}}\sum_{\eta,\gamma\atop\in\{f,r\}}(Q^{t_i\eta,t_j\gamma}_\tau\delta x^{t_i}_\tau+Q^{t_j\gamma,t_i\eta}_\tau\delta x^{t_j}_\tau+l^{t_i\eta,t_j\gamma}_\tau)^2,
\end{aligned}
\end{equation}
where the coefficient matrix $Q^{t_i\eta,t_j\gamma}_\tau=\partial l^{t_i\eta,t_j\gamma}_\tau/\partial x^{t_i}$ and $Q^{t_j\gamma,t_i\eta}_\tau=\partial l^{t_i\eta,t_j\gamma}_\tau/\partial x^{t_j}$. 
We plug the definition of $\hat{c}_{t_it_j}$ into (\ref{gdefinition}), and conclude that the coefficient matrices corresponding to $t_i$ at time stamp $\tau$ is given as
$Q_{{t_i},\tau}=[[\sqrt{p(t_i,t_j)}Q^{t_i\eta,t_j\gamma}_\tau]_{\eta,\gamma\in\{f,r\}}]_{t_j\in \textup{adj}(t_i)}$, where $\textup{adj}(t_i)$ is the set of neighbor vertices of $t_i$ such that $t_j\in\textup{adj}(t_i)$ if and only if $(t_i,t_j)\in\mathcal{E}$. Updates of $r,s,y,\lambda$ are performed in a piecewise manner regarding time stamp $\tau$, namely
\begin{equation}
\label{update1}
\begin{aligned}
r^{k+1}_{t_i,\tau}&=\sigma z^k_{t_i,\tau}-\lambda^k_{t_i,\tau}-s^k_{t_i,\tau}\\
&+\sum_{e\in\textup{Adj}(t_i)}\sum_{t_j\in e}\frac{\rho}{N_e}E^\top_{t_i,e,\tau}E_{t_j,e,\tau}y^k_{t_j,\tau},\\
y^{k+1}_{t_i,\tau}&=\frac{1}{\sigma+\rho}(Q_{t_i,\tau}\delta x^{k+1,t_i}_\tau+r^{k+1}_{t_i,\tau}),\\
s^{k+1}_{t_i,\tau}&=s^k_{t_i,\tau}+\sigma(y^{k+1}_{t_i,\tau}-z^{k+1}_{t_i,\tau}),\\
\lambda^{k+1}_{t_i,e,\tau}&=\lambda^{k}_{t_i,e,\tau}+\frac{\rho}{N_e}\sum_{t_j\in e}(E_{t_i,e,\tau}y^{k+1}_{t_i,\tau}-E_{t_j,e,\tau}y^{k+1}_{t_j,\tau}).\\
\end{aligned}
\end{equation}
Likewise, $E_{t_i,e,\tau}$ is a constant matrix such that $E_{t_i,e,\tau}y^k_{t_i,\tau}$ returns the part of $y^k_{t_i,\tau}$ that correspond to coupled constraints represented by edge $e$. Plug the definition of $g_v$ in (\ref{gdefinition}) into (\ref{zupdate}) yields the following $z$-update scheme
\begin{equation}
\label{update2}
    z^{k+1}_{t_i,\tau} = \frac{1}{4\sigma+1}(4s^k_{t_i,\tau}+4\sigma y^{k+1}_{t_i,\tau}+2l_{t_i,\tau})
\end{equation}
where $l_{{t_i},\tau}=[[\sqrt{q(t_i,t_j)}l^{t_i\eta,t_j\gamma}_\tau]_{\eta,\gamma\in\{f,r\}}]_{t_j\in \textup{adj}(t_i)}$. These piecewise variables aggregate to the original $r,s,y,z,\lambda$ through concatenation as
\begin{equation}
\label{update3}
\begin{aligned}
r^{k}_{t_i}&=[r^{k}_{t_i,\tau}]_{\tau\in\mathcal{T}},\ z^{k}_{t_i}=[z^{k}_{t_i,\tau}]_{\tau\in\mathcal{T}},\\
s^{k}_{t_i}&=[s^{k}_{t_i,\tau}]_{\tau\in\mathcal{T}},\ y^{k}_{t_i}=[y^{k}_{t_i,\tau}]_{\tau\in\mathcal{T}},\\
\lambda^{k}_{t_i,\tau}&=[\lambda^{k}_{t_i,e,\tau}]_{e\in\textup{Adj}(t_i)},\ \lambda^{k}_{t_i}=[\lambda^{k}_{t_i,\tau}]_{\tau\in\mathcal{T}}.
\end{aligned}
\end{equation}
Meanwhile, $\delta X^{k+1,t_i}$ is the optimizer of the optimization problem in Step 7 of Algorithm 1, namely
\begin{equation}
\label{LQR}
\begin{aligned}
&\begin{aligned}
\min_{\delta X^{t_i}} &\ 
\sum_{\tau\in\mathcal{T}} ||\delta x^{t_i}_\tau||^2_Q\prime+q^\top\delta x^{i}_\tau+||\delta u^{i}_\tau||^2_R+2u^{t_i\top}_\tau R\delta u^{i}_\tau
\\
\textup{s.t.}&\ \delta X^{t_i}\in\delta \mathcal{X}^{t_i},\\
\end{aligned}\\
&q=2p(t_i)(x^{t_i}_\tau-x^{t_i}_{\tau,ref})^\top Q+\frac{r^{k+1\top}_{t_i,\tau}Q_{t_i,\tau}}{\sigma+\rho},\\
&Q\prime=p(t_i)Q+\frac{Q_{t_i,\tau}^\top Q_{t_i,\tau}}{2(\sigma+\rho)}.
\end{aligned}
\end{equation}
In particular, $\delta\mathcal{X}^{t_i}$ is the feasible set of the linearized kinematics. It is clear that (\ref{LQR}) is a standard LQR problem, which is readily solvable by applying dynamic programming. On solving (\ref{LQR}), a series of feedback control strategies $\{k^{t_i}_\tau,K^{t_i}_\tau\}_{\tau\in\mathcal{T}}$ are obtained, while update of trajectories is performed by the standard iLQR update scheme together with a linear search scheme, similar to Algorithm 3 in \cite{huang2023decentralized}, which is also included here for completeness:
\begin{equation}
\label{update}
\begin{aligned}
u^{t_i}_\tau&\leftarrow u^{t_i}_\tau+\alpha k^{t_i}_\tau+K^{t_i}_\tau(x^{t_i}_\tau-\hat{x}^{t_i}_\tau),\\
x^{t_i}_{\tau+1}&=f(x^{t_i}_\tau,u^{t_i}_\tau).
\end{aligned}
\end{equation}
$\alpha$ is the line search parameter, and $\hat{x}^{t_i}_\tau$ is the state before update. Convexification, parallel solving, and trajectory updating are performed alternatively until convergence. This process is concluded as Algorithm 2. The convergence criterion in Step 2 of Algorithm 2 is that the change in value of the potential function between two consecutive iterations is smaller than 0.1.
\textup{ADMM\_Max\_Iter} is set to be 3.

Particularly for the potential contingency game defined in Problem 4, it shares the same structure as the potential Bayesian game defined in Problem 3, with extra constraints to enforce the consistency between different contingency plans. Instead of enforcing the hard constraints, we add the following soft penalties
\begin{equation}
    c_\textup{contingency}(X^{t^\Theta_{EA}})=\sum_{\theta'\neq\theta}\sum_{\tau<t_b}||x^{t^\theta_{EA}}_\tau-x^{t^{\theta'}_{EA}}_\tau||^2_{Q_\textup{contingency}},
\end{equation}
where $Q_\textup{contingency}$ is a diagonal weight matrix that penalizes inconsistency between contingency plans. Obviously, when elements of $Q_\textup{contingency}$ go to infinity, the soft penalties become hard constraints. Due to the similarity between problems 3 and 4, Algorithm 2 can also be applied to solve Problem 4 by adding extra edges that connect $t^\theta_{EA}$ and $t^{\theta'}_{EA}$. The decoupling of $ c_\textup{contingency}$ is similar and therefore is omitted for simplicity.

\begin{algorithm}[t]
\caption{Parallel Optimization for Potential Bayesian Game
}\label{alg:alg2}
\begin{algorithmic}[1]
\Statex \textbf{Inputs:}
\Statex \hspace{0.1cm} Connectivity graph $(\mathcal{V},\mathcal{E})$;
\Statex \hspace{0.1cm} Reference trajectories $\{X^{t_i}_{ref}\}_{t_i\in\mathcal{V}}$;
\Statex \hspace{0.1cm} Probabilities $\{p(t_i)\}_{t_i\in\mathcal{V}}$ and $\{p(t_i,t_j)\}_{(t_i,t_j)\in\mathcal{E}}$
\Statex \hspace{0.1cm} Parameters $Q,R,\beta,d_\textup{safe},\sigma,\rho$.
\Statex \textbf{Outputs:}
\Statex \hspace{0.1cm} BNE trajectories $X^*$.
\Statex \textbf{Procedures:}
\State \hspace{0.2cm}\textbf{initialize} all trajectories $\{X^{t_i}\}$.
\State \hspace{0.2cm}\textbf{repeat} until convergence:
\State \hspace{0.7cm} \textbf{For} $t^i$ \textbf{in parallel}:
\State \hspace{1.2cm} Perform convexification by (\ref{convex1}) and (\ref{hat_c_couple}).
\State \hspace{1.2cm} \textbf{For} $k<\textup{ADMM\_Max\_Iter}$:
\State \hspace{1.7cm}Perform updates by (\ref{update1}), (\ref{update2}), (\ref{update3}), and (\ref{LQR}).
\State \hspace{1.2cm} Perform trajectory updates by (\ref{update}).
\end{algorithmic}
\label{alg1}
\end{algorithm}

% To enable closed-loop simulation, we adopt a similar strategy as in \cite{huang2024integrated}. After obtaining the Bayesian Nash equilibrium trajectories $X^{*}$, we select the best type $t_i^*$ for the ego-agent $i$ by
% \begin{equation}
%     t_i^* = \underset{t_i\in\Tilde{T}_i}{\arg\min}C_{t_i}(X^*)
% \end{equation}
% and perform the first step of the trajectory $X^{t^*_i*}$, which resemble the receding horizon control. The common prior over the intentions of all agents is then updated by applying the Bayesian filtering on the observations of all agents' movements.

To enable closed-loop simulations and experiments, we adopt a similar strategy as in \cite{huang2024integrated}. After obtaining the BNE trajectories $X^{*}$, if the ego agent possesses more than one type, the best type $t_i^*$ can be selected as
\begin{equation}
    t_i^* = \underset{t_i\in\Tilde{T}_i}{\arg\min}\ C_{t_i}(X^*).
\end{equation}
With the best type $t_i^*$, the first step of the trajectory $X^{t^*_i*}$ is performed, which resembles the receding horizon control. The common prior over the intentions of all agents is then updated by applying the Bayesian filtering on the observations of all agents' movements.

\section{Simulation Results}

\subsection{Open-Loop Simulations}

In this section, we first examine the potential Bayesian game in Problem 3 by performing open-loop simulations in different traffic scenarios. In each scenario, we assume that the ego agent admits a unique type, while it holds a belief over the underlying types of all other agents. Through open-loop simulations, we show that a spectrum of sets of interactive trajectories can be generated by varying the belief, such that each set of interactive trajectories is a BNE corresponding to the Bayesian game under that belief. These results demonstrate the ability of the proposed method to generate optimal trajectories conditioned on the probabilities of the underlying intentions of the participating agents.

\begin{figure*}[t]
\centering
\subfigure[$\tau=33$]{\includegraphics[scale=0.38]{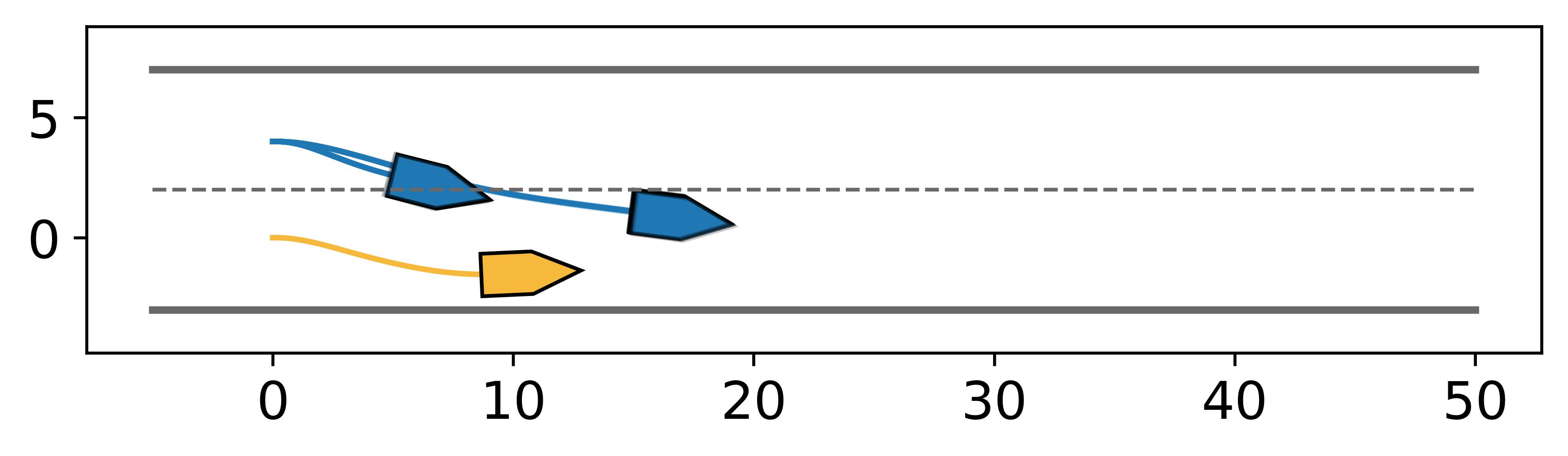}}
\subfigure[$\tau=66$]{\includegraphics[scale=0.38]{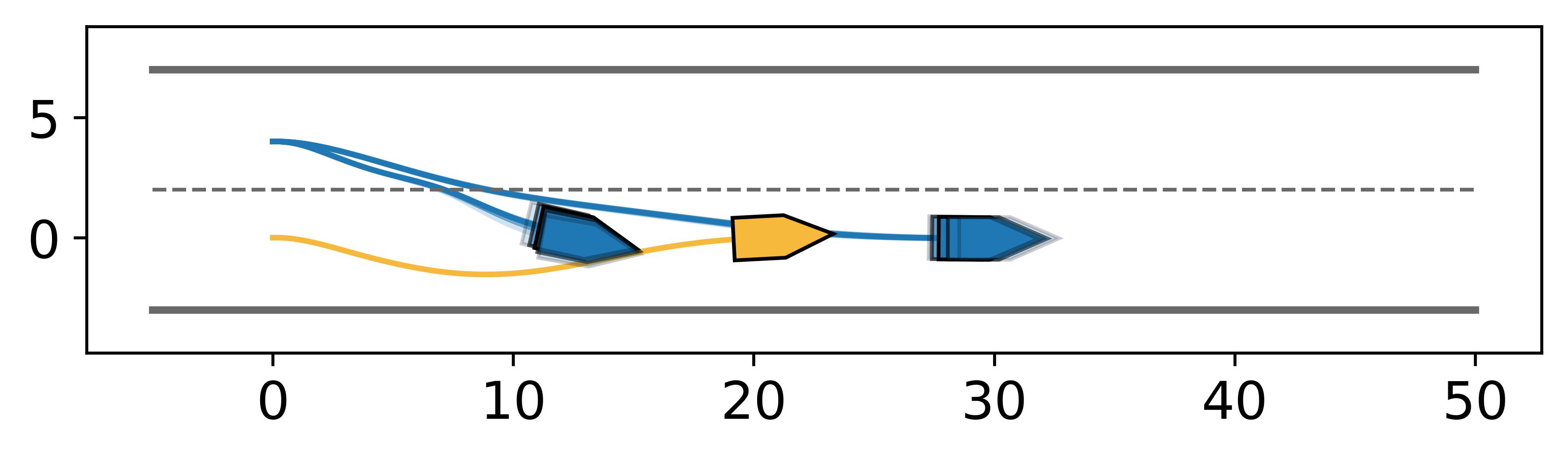}}
\subfigure[$\tau=99$]{\includegraphics[scale=0.38]{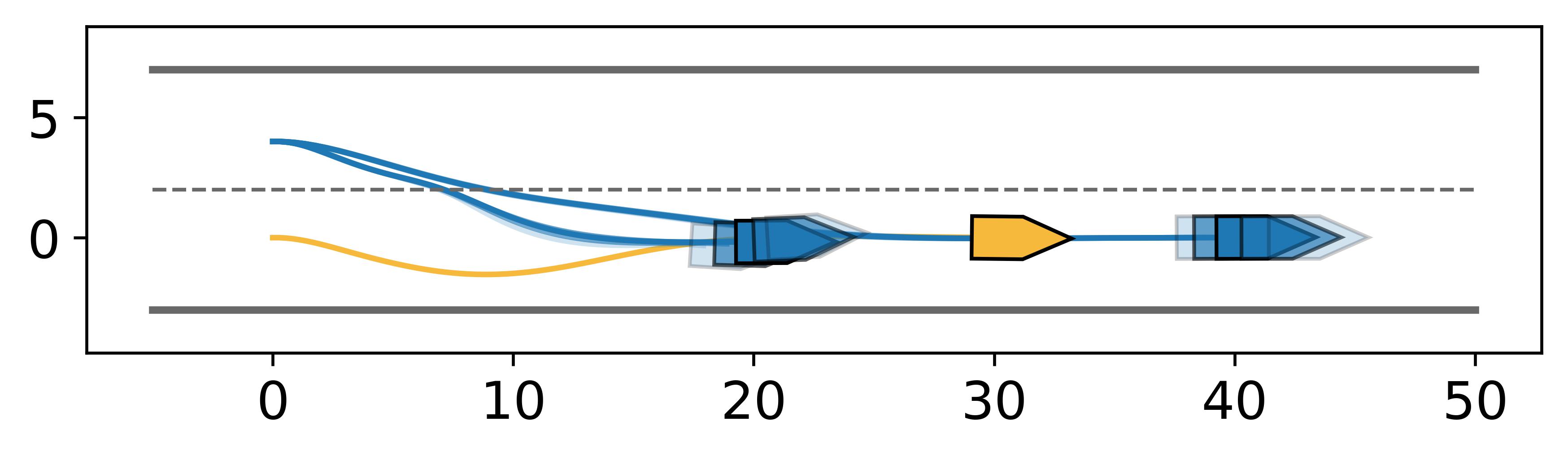}}

\caption{Simulation results in the merging scenario with $[w_1,w_2]=[0.5,0.5]$. The average longitudinal velocity of EA is roughly equal to 3\,m/s, which is the reference velocity.}

\label{fig:M1}
\end{figure*}

\begin{figure*}[t]
\centering
\subfigure[$\tau=33$]{\includegraphics[scale=0.38]{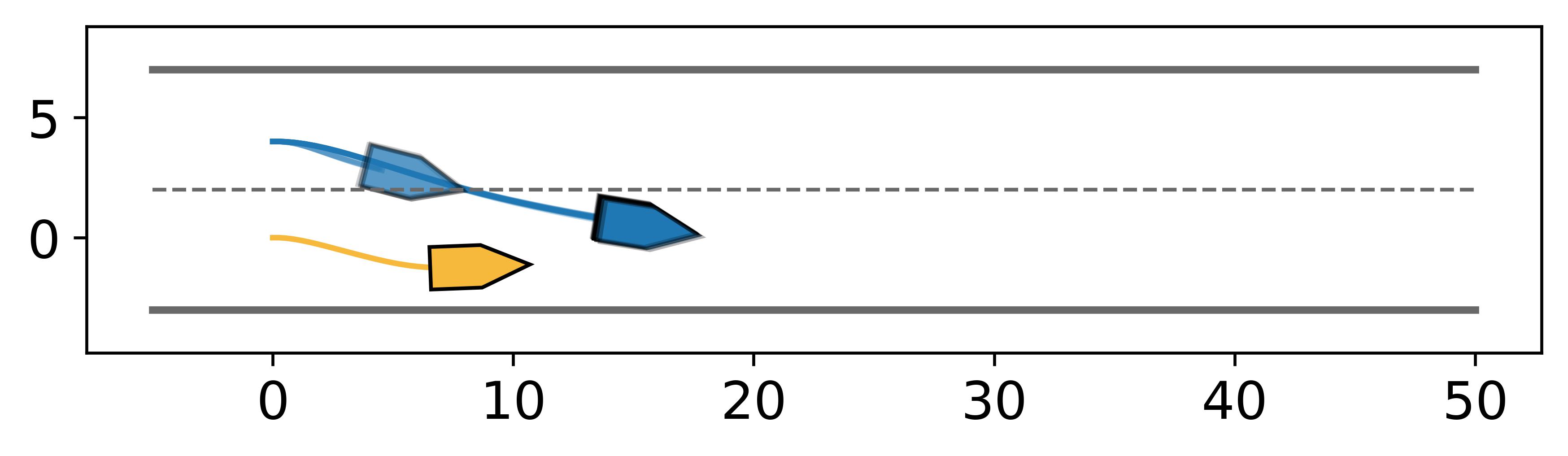}}
\subfigure[$\tau=66$]{\includegraphics[scale=0.38]{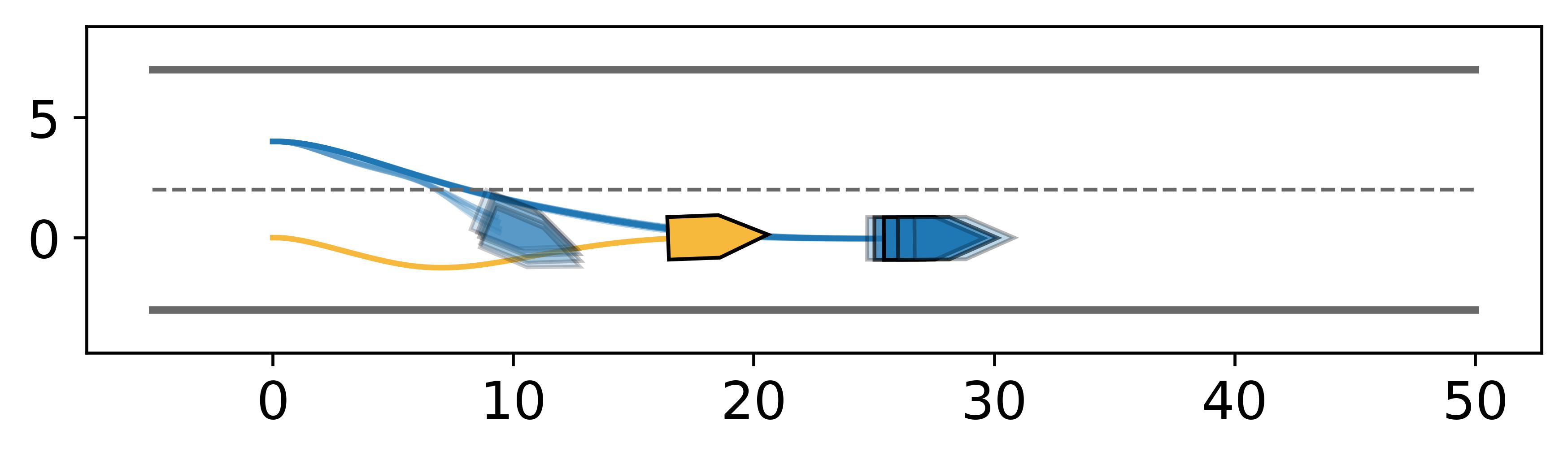}}
\subfigure[$\tau=99$]{\includegraphics[scale=0.38]{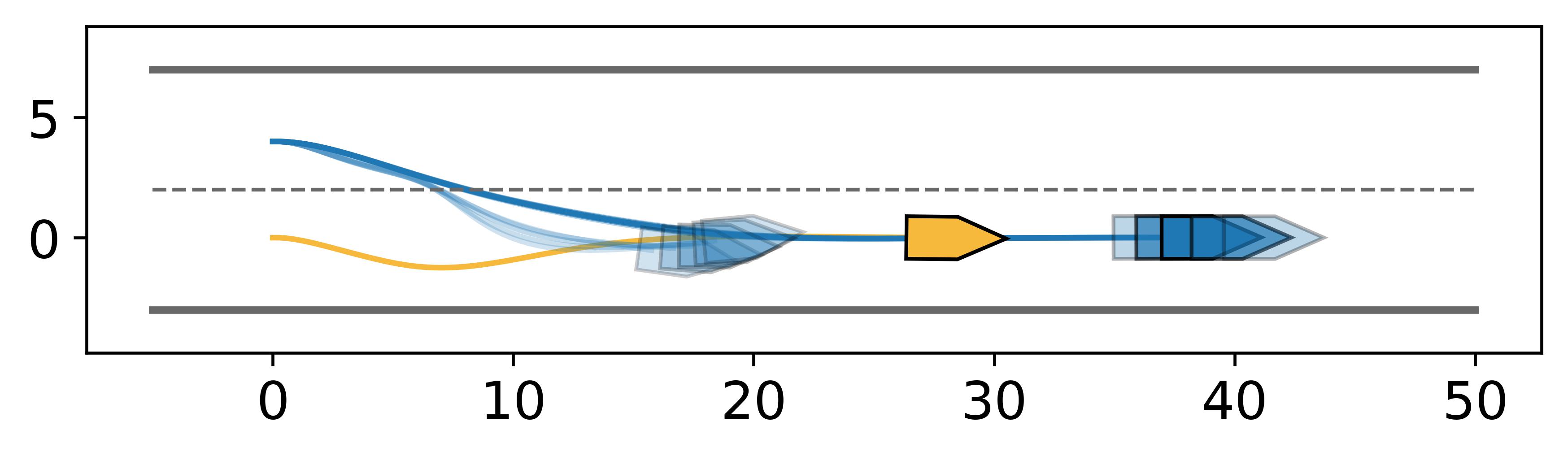}}

\caption{Simulation results in the merging scenario with $[w_1,w_2]=[0.9,0.1]$. The average longitudinal velocity of EA is smaller than 3\,m/s, indicating that it slows down to yield to OA.}

\label{fig:M2}
\end{figure*}

\begin{figure*}[t]
\centering
\subfigure[$\tau=33$]{\includegraphics[scale=0.38]{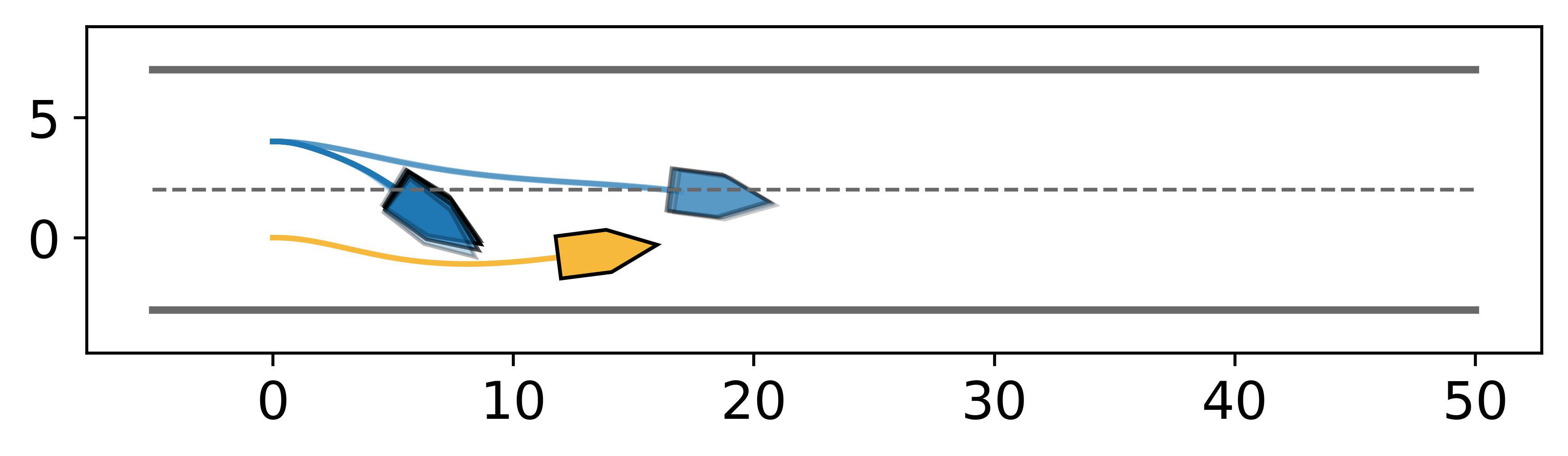}}
\subfigure[$\tau=66$]{\includegraphics[scale=0.38]{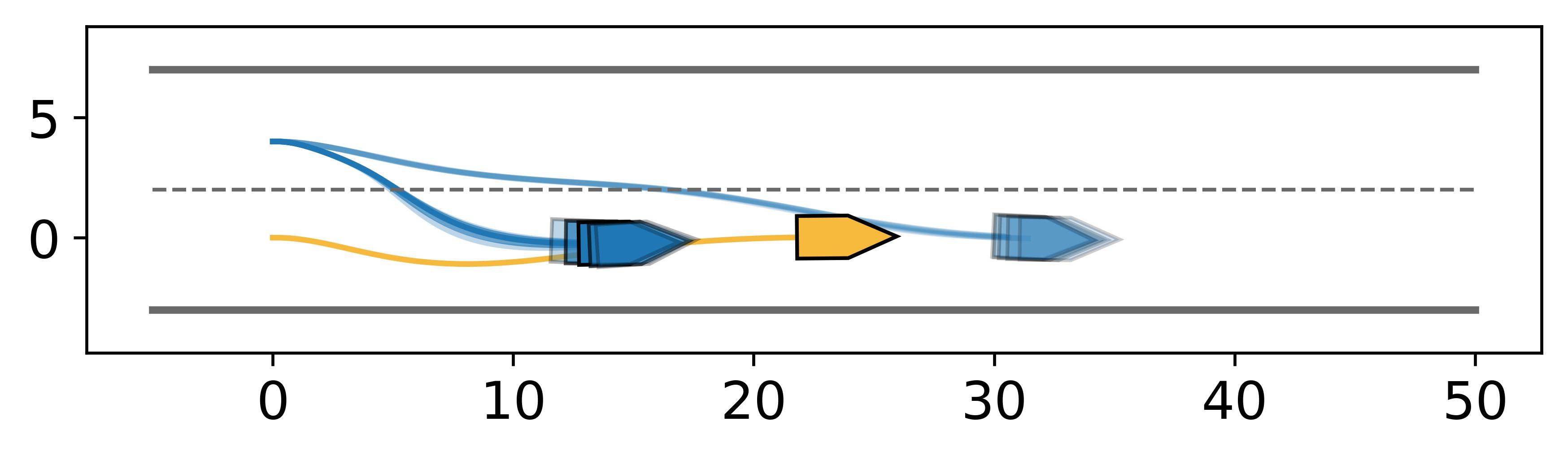}}
\subfigure[$\tau=99$]{\includegraphics[scale=0.38]{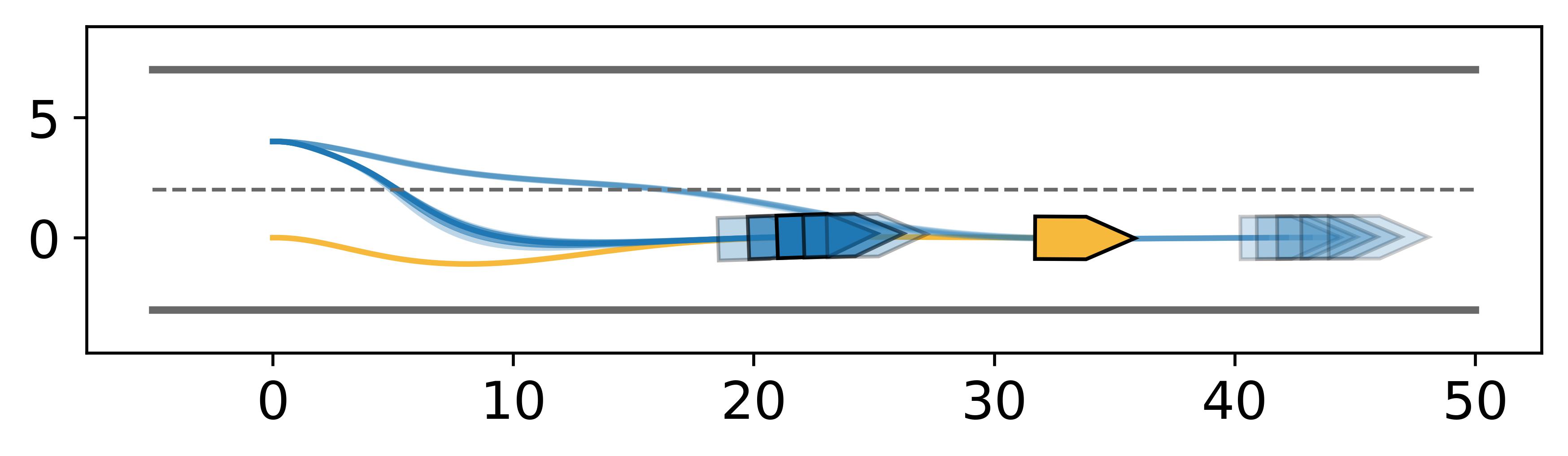}}

\caption{Simulation results in the merging scenario with $[w_1,w_2]=[0.1,0.9]$. The average longitudinal velocity of EA is higher than 3\,m/s, indicating that it speeds up for OA to merge from behind.}

\label{fig:M3}
\end{figure*}

\begin{figure}[t]
\centering
% \subfigure[$\tau=25$]{\includegraphics[scale=0.18]{figures/results_C1/result25.jpg}}
\subfigure[$\tau=50$]{\includegraphics[scale=0.18]{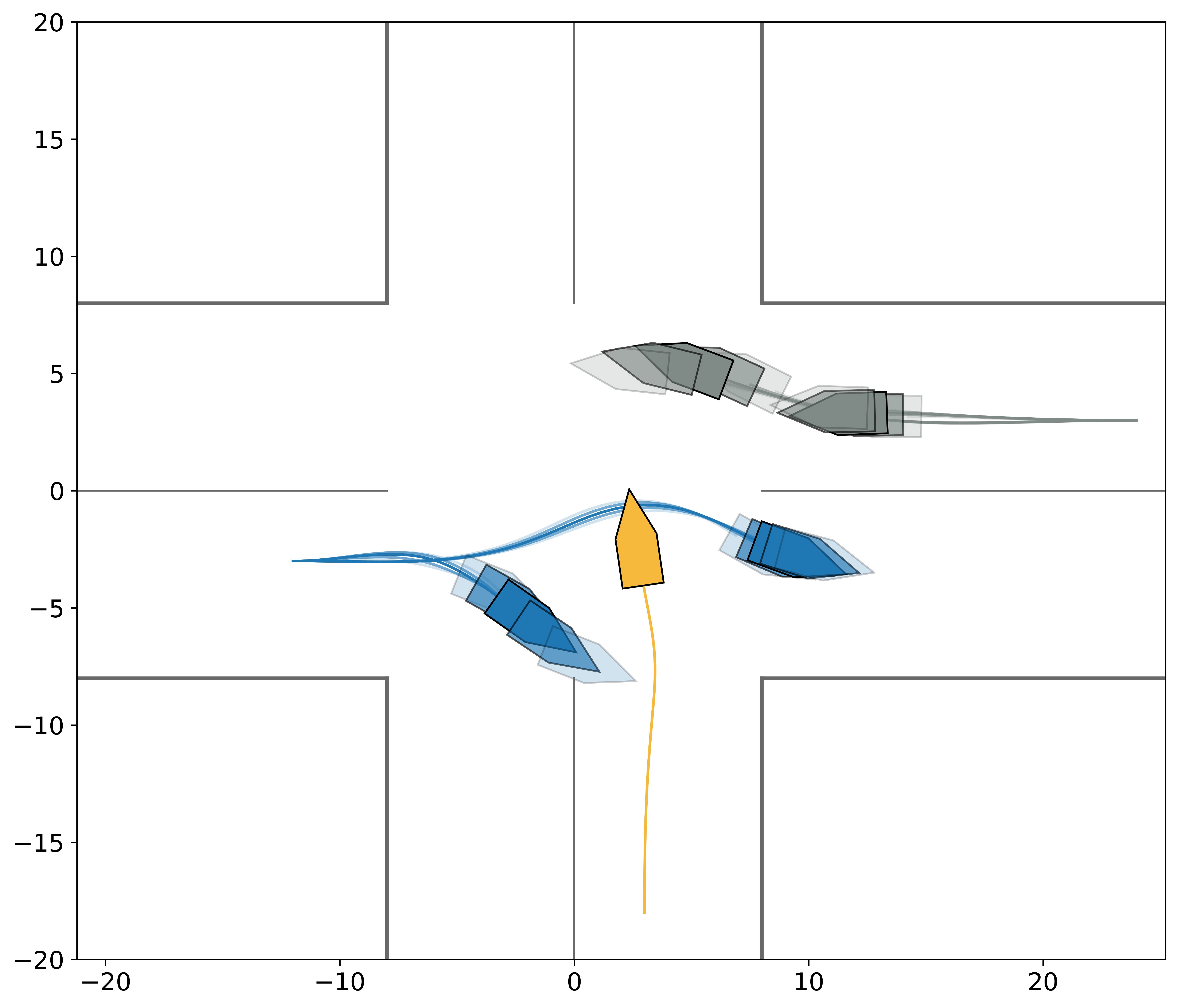}}
\subfigure[$\tau=75$]{\includegraphics[scale=0.18]{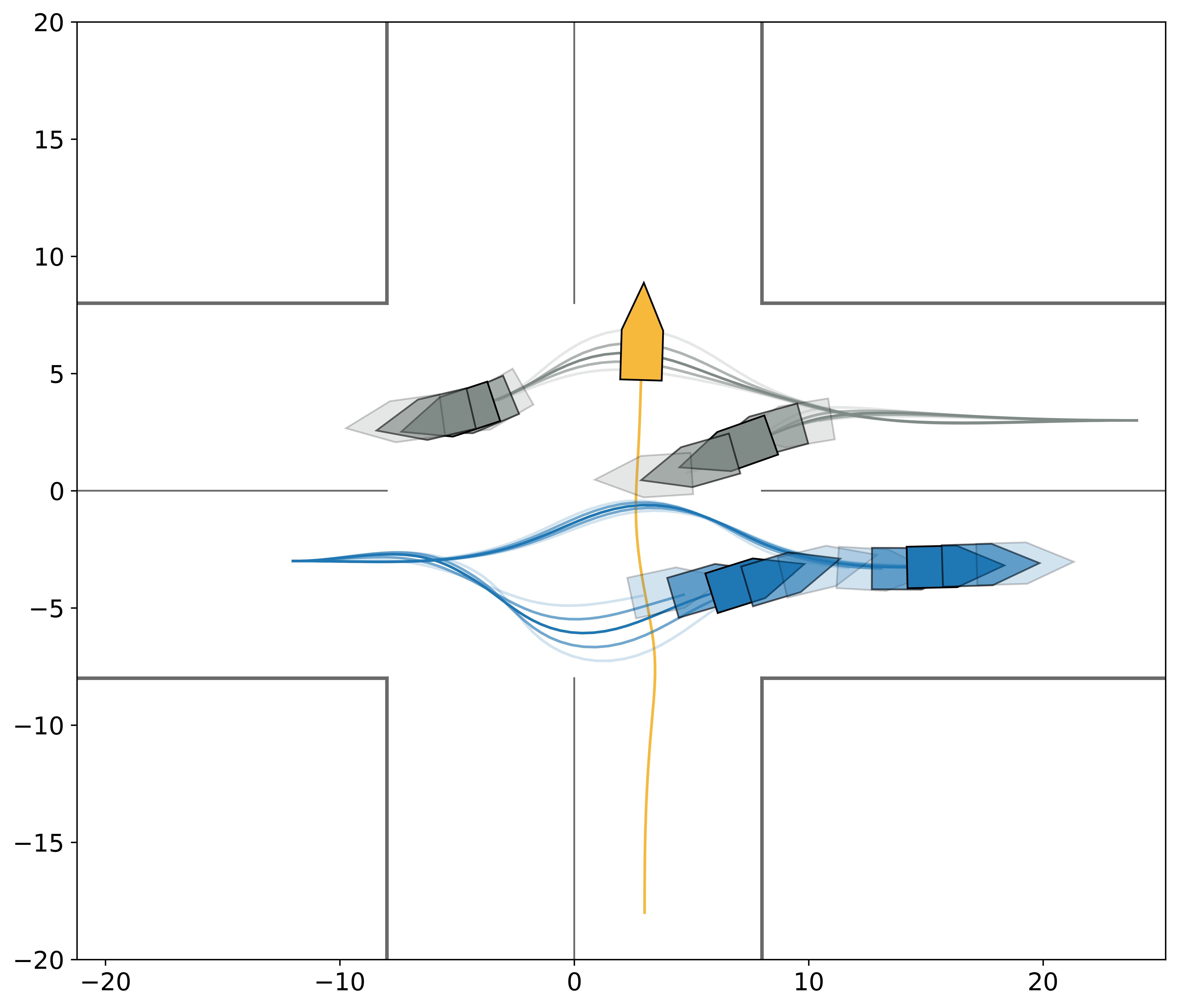}}
% \subfigure[$\tau=99$]{\includegraphics[scale=0.18]{figures/results_C1/result99.jpg}}

\caption{Simulation results in the intersection scenario with $[w^{OA1}_1,w^{OA1}_2,w^{OA2}_1,w^{OA2}_2]=[0.5,0.5,0.5,0.5]$. In this baseline case, equal probabilities are assumed for all behavioral modes of OAs. EA adopts an almost straight trajectory to cross the intersection scenario.}

\label{fig:C1}
\end{figure}

\begin{figure}[t]
\centering
% \subfigure[$\tau=25$]{\includegraphics[scale=0.18]{figures/results_C2/result25.jpg}}
\subfigure[$\tau=50$]{\includegraphics[scale=0.18]{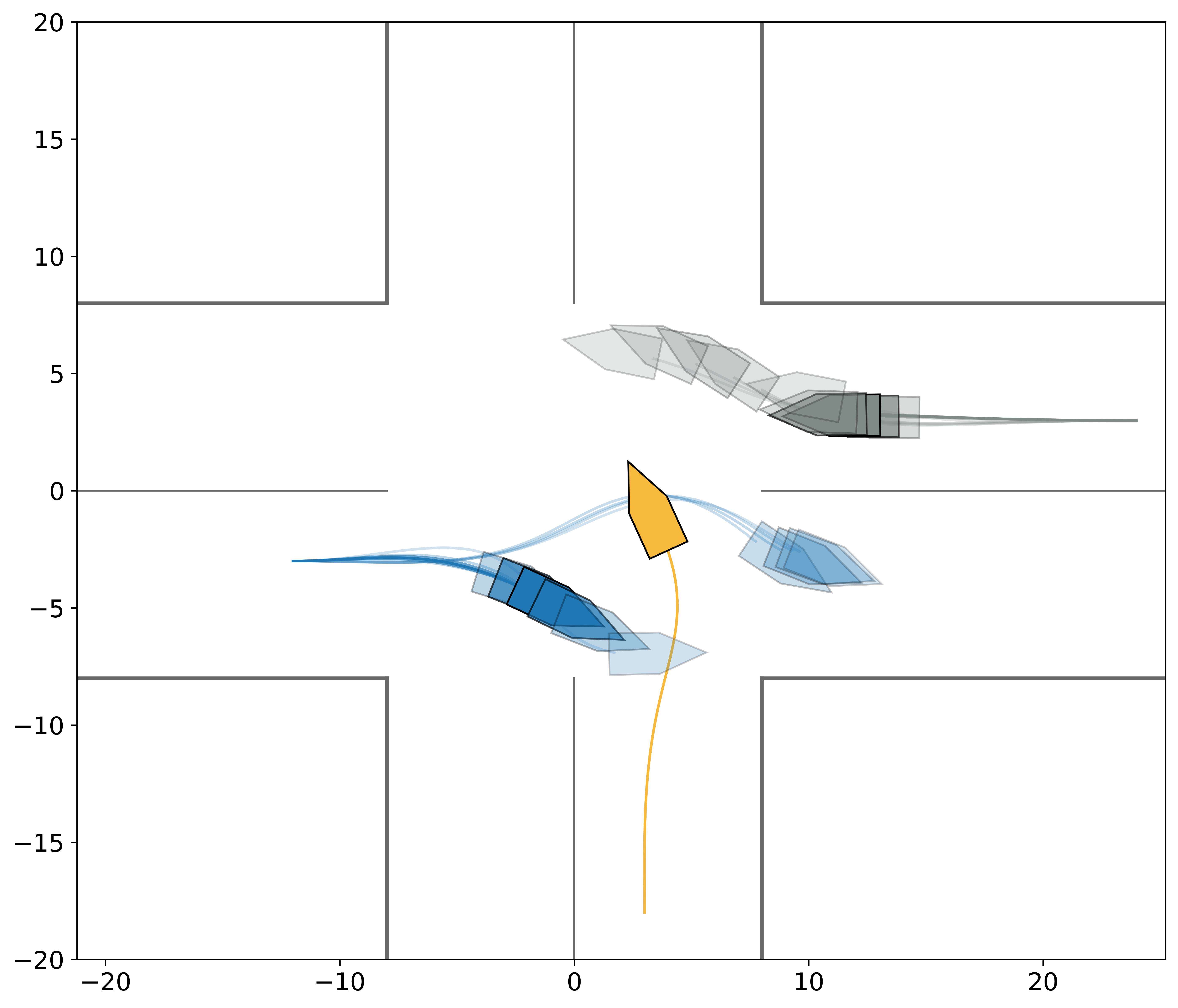}}
\subfigure[$\tau=75$]{\includegraphics[scale=0.18]{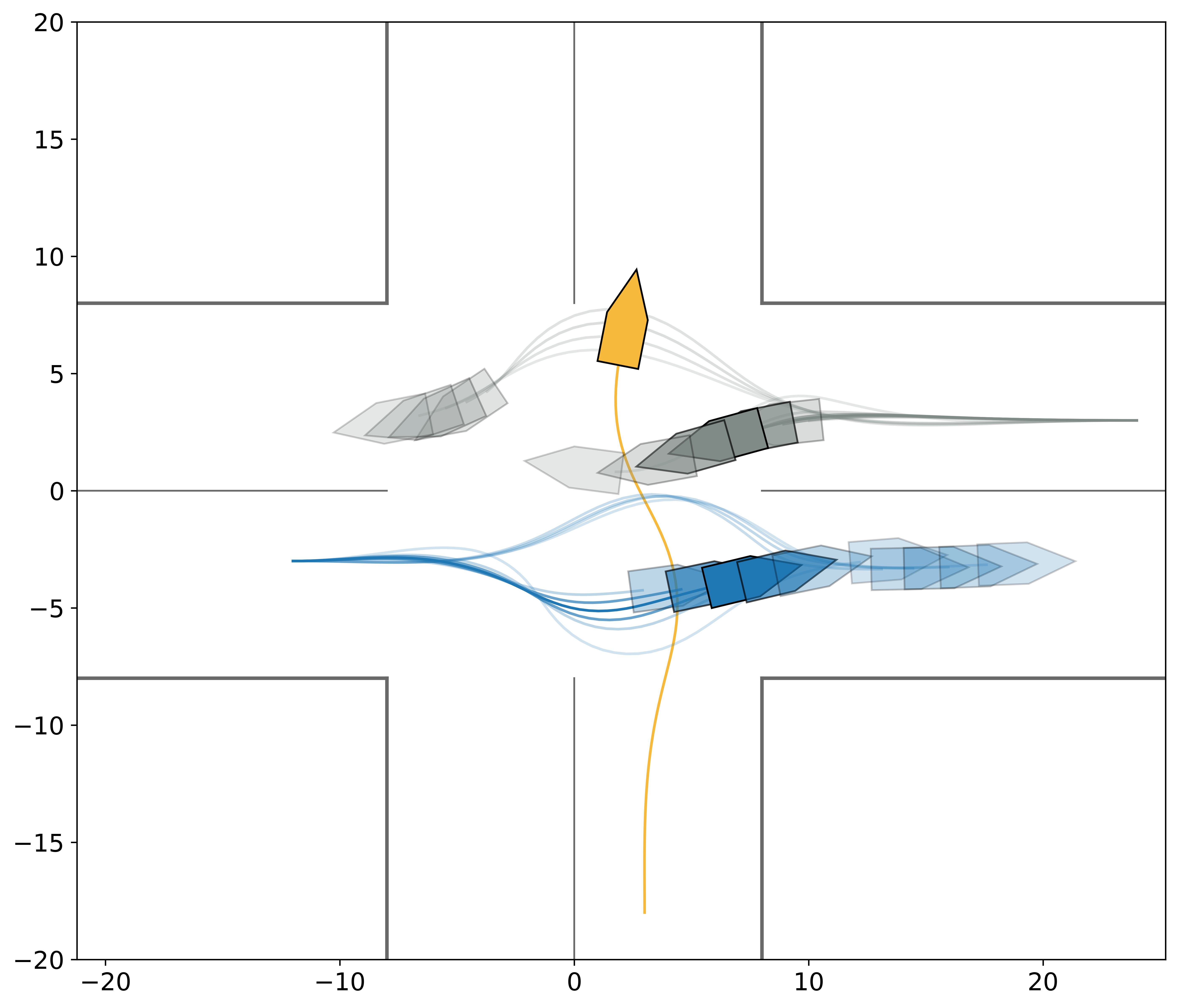}}
% \subfigure[$\tau=99$]{\includegraphics[scale=0.18]{figures/results_C2/result99.jpg}}

\caption{Simulation results in the intersection scenario with $[w^{OA1}_1,w^{OA1}_2,w^{OA2}_1,w^{OA2}_2]=[0.1,0.9,0.1,0.9]$. In this case, both OAs are likely to slow down and yield. EA first swerves to the right to cross in front of OA1 and then swerves to the left to cross in front of OA2.}

\label{fig:C2}
\end{figure}

\begin{figure}[t]
\centering
% \subfigure[$\tau=25$]{\includegraphics[scale=0.18]{figures/results_C3/result25.jpg}}
\subfigure[$\tau=50$]{\includegraphics[scale=0.18]{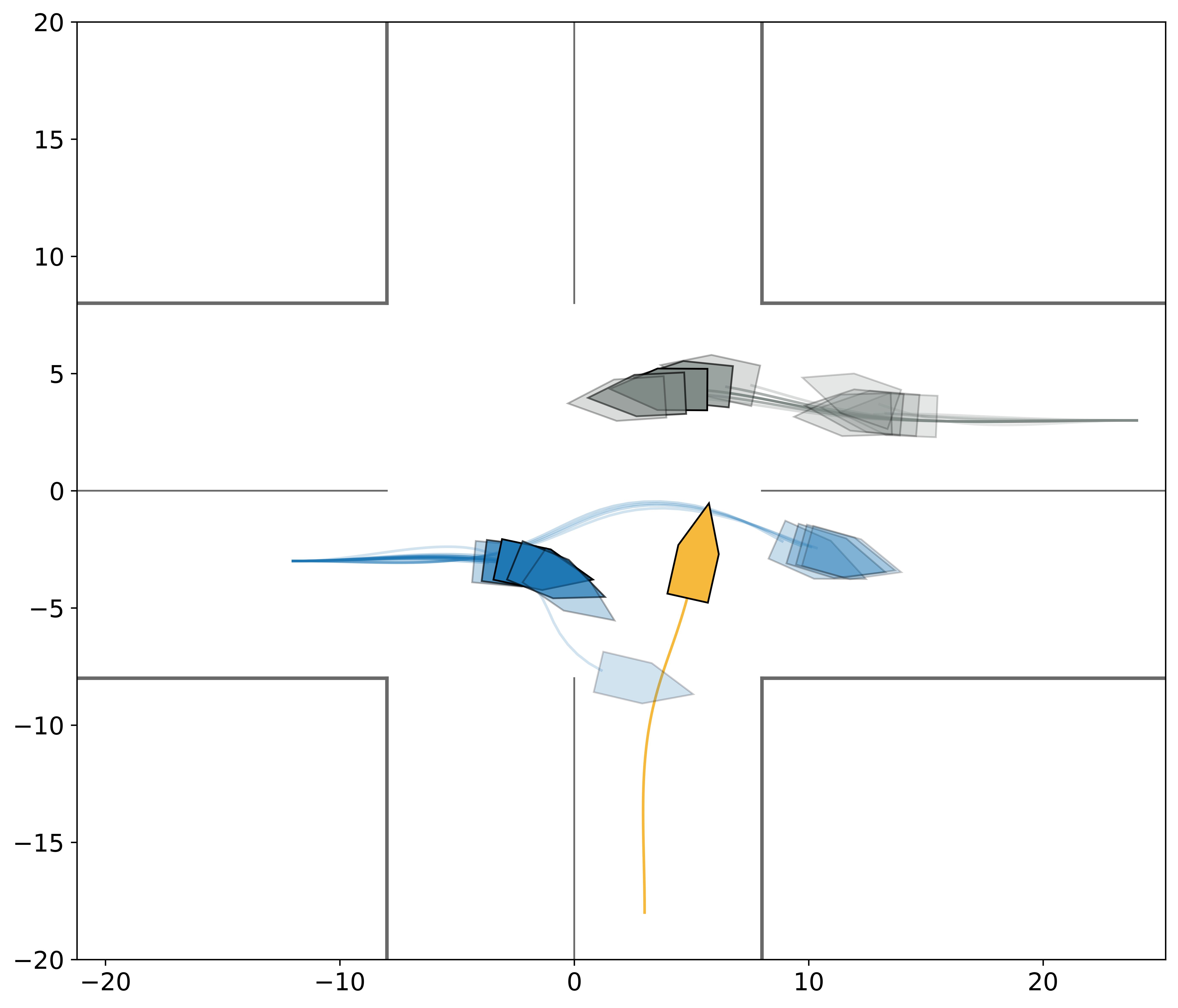}}
\subfigure[$\tau=75$]{\includegraphics[scale=0.18]{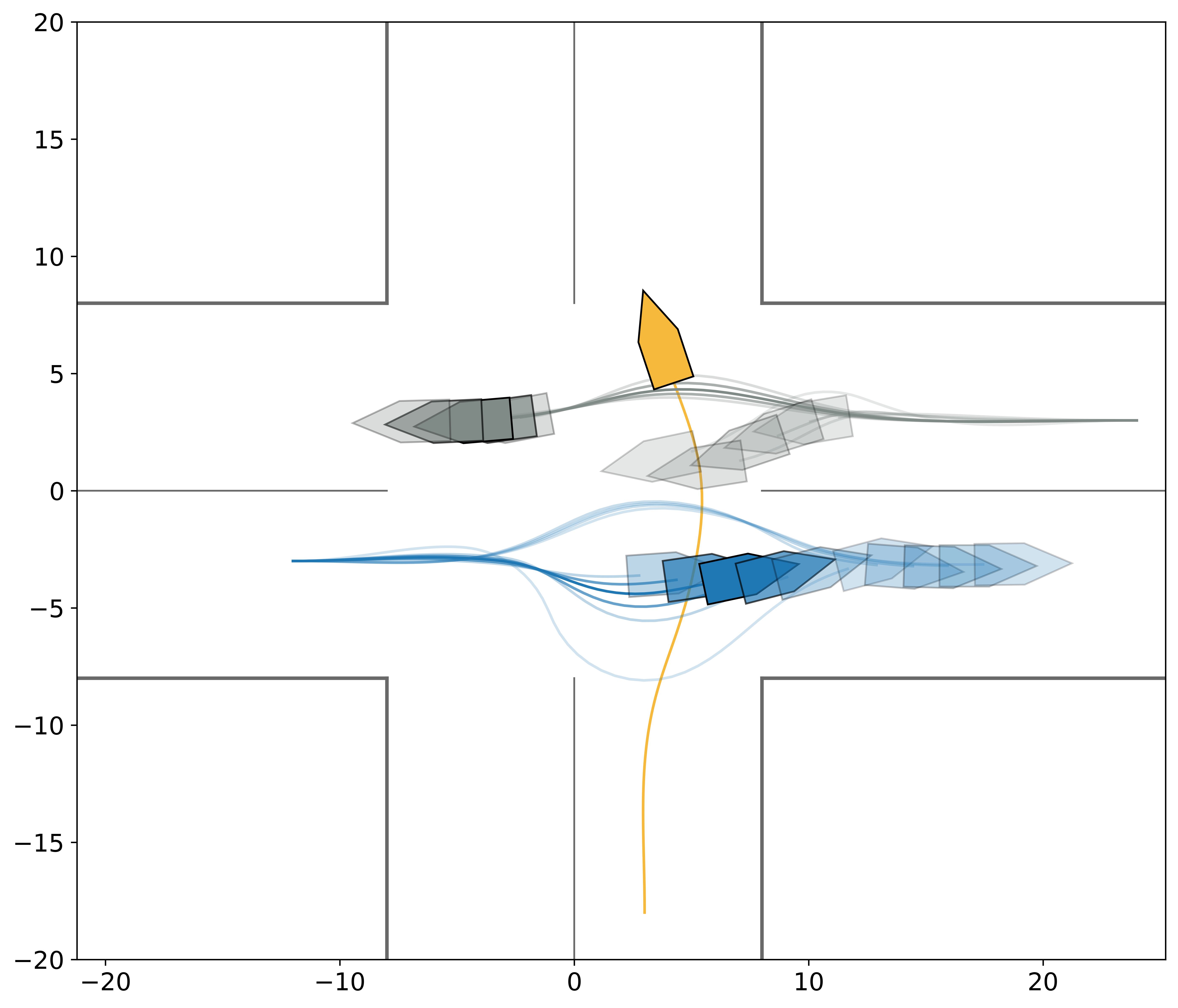}}
% \subfigure[$\tau=99$]{\includegraphics[scale=0.18]{figures/results_C3/result99.jpg}}

\caption{Simulation results in the intersection scenario with $[w^{OA1}_1,w^{OA1}_2,w^{OA2}_1,w^{OA2}_2]=[0.1,0.9,0.9,0.1]$. In this case, OA1 is likely to yield, and OA2 is likely to rush. EA swerves to the right to cross in front of OA1 and from behind OA2.}

\label{fig:C3}
\end{figure}

\begin{figure}[t]
\centering
% \subfigure[$\tau=25$]{\includegraphics[scale=0.18]{figures/results_C4/result25.jpg}}
\subfigure[$\tau=50$]{\includegraphics[scale=0.18]{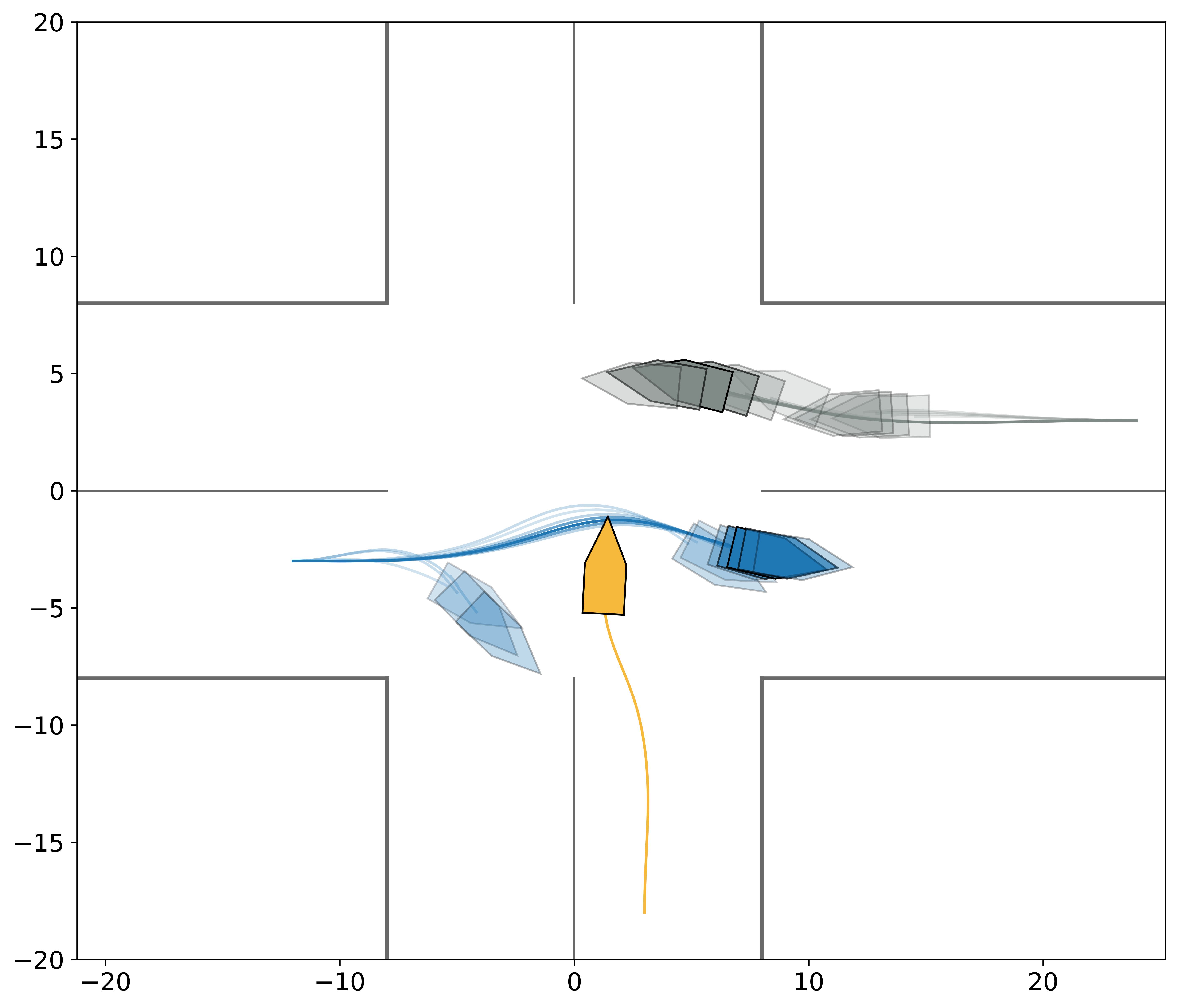}}
\subfigure[$\tau=75$]{\includegraphics[scale=0.18]{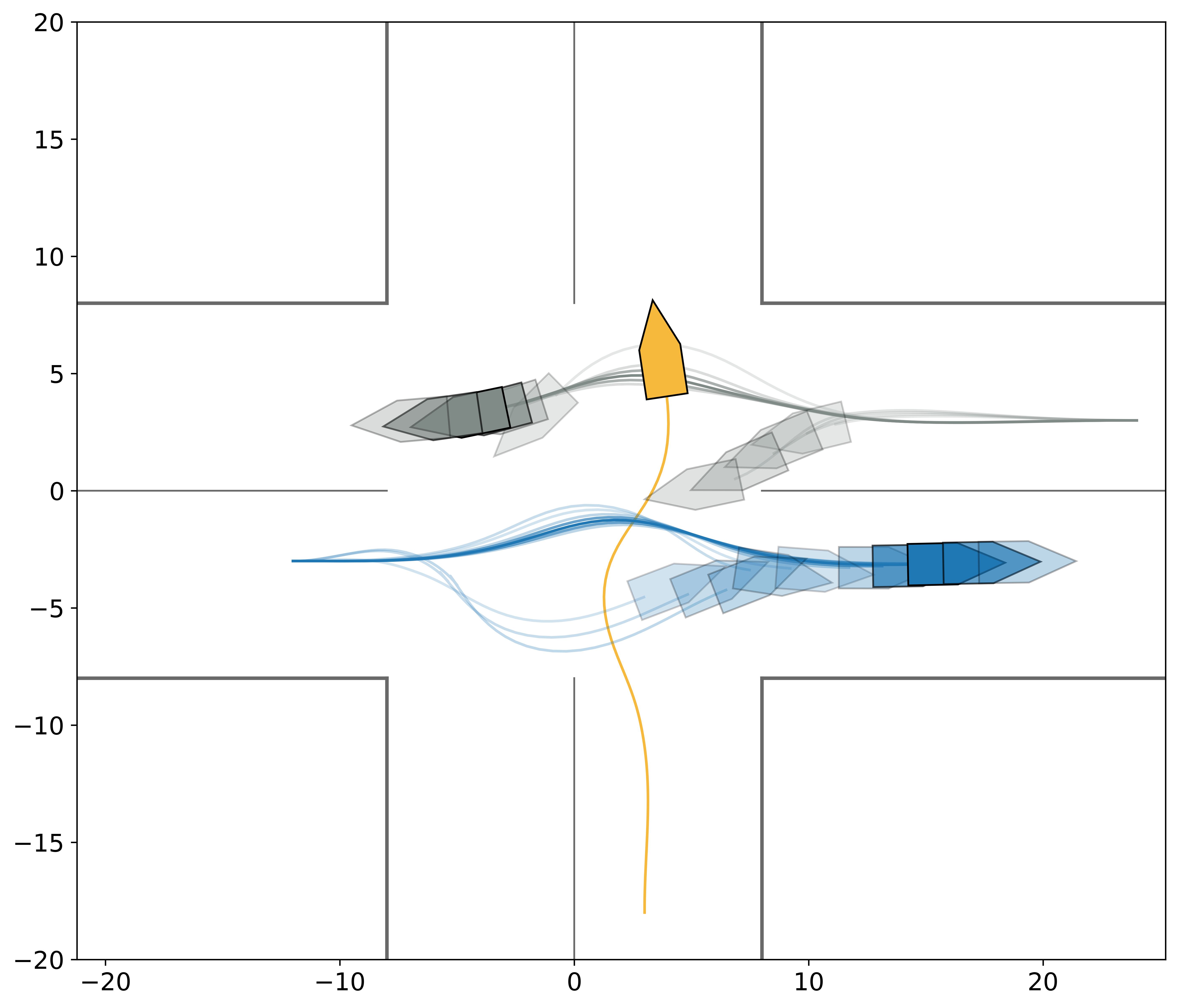}}
% \subfigure[$\tau=99$]{\includegraphics[scale=0.18]{figures/results_C4/result99.jpg}}

\caption{Simulation results in the intersection scenario with $[w^{OA1}_1,w^{OA1}_2,w^{OA2}_1,w^{OA2}_2]=[0.9,0.1,0.9,0.1]$. In this case, both OAs are likely to rush. EA first swerves to the left to cross from behind OA1 and then swerves to the right to cross from behind OA2.}

\label{fig:C4}
\end{figure}

\begin{figure}[t]
\centering
% \subfigure[$\tau=25$]{\includegraphics[scale=0.18]{figures/results_C5/result25.jpg}}
\subfigure[$\tau=50$]{\includegraphics[scale=0.18]{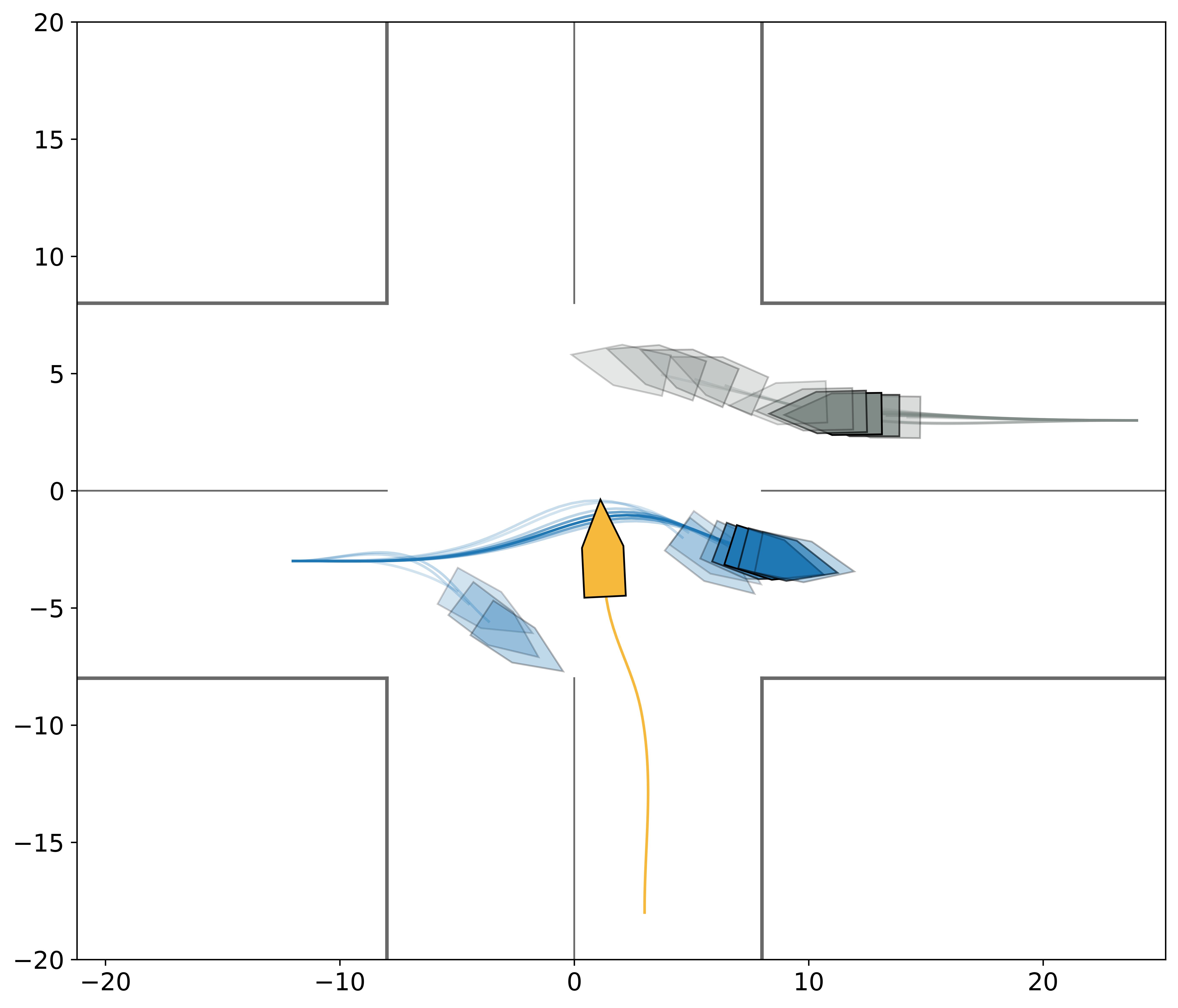}}
\subfigure[$\tau=75$]{\includegraphics[scale=0.18]{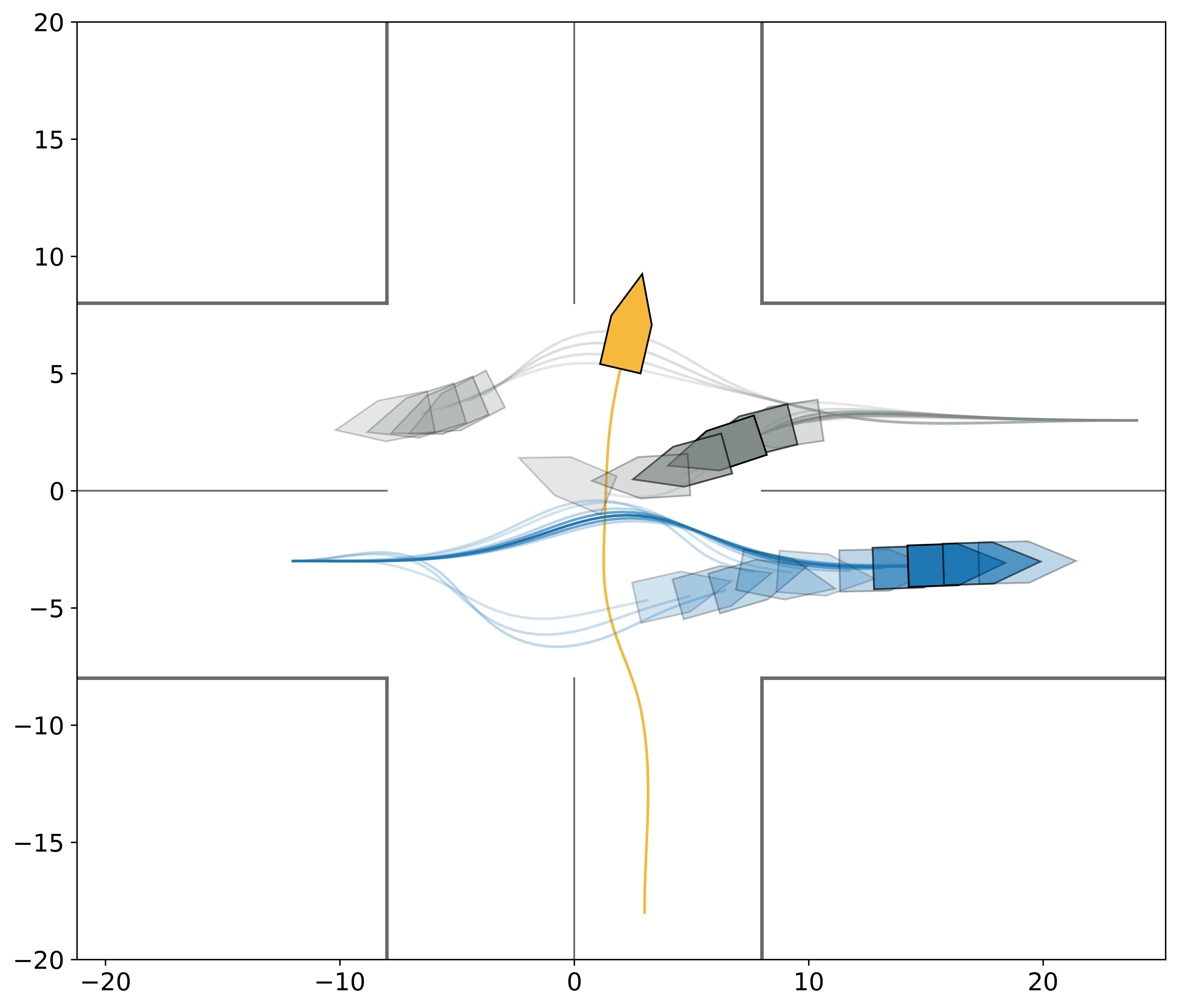}}
% \subfigure[$\tau=99$]{\includegraphics[scale=0.18]{figures/results_C5/result99.jpg}}

\caption{Simulation results in the intersection scenario with $[w^{OA1}_1,w^{OA1}_2,w^{OA2}_1,w^{OA2}_2]=[0.9,0.1,0.1,0.9]$. In this case, OA1 is likely to rush, and OA2 is likely to yield. EA swerves to the left to cross from behind OA1 and in front of OA2.}

\label{fig:C5}
\end{figure}

We first consider a merging scenario with two agents, where the ego agent and the other agent are referred to as EA and OA. The OA intends to merge into the lane kept by the EA. The initial states of EA and OA are $[0,0,0,3]$ and $[0,4,0,3]$, respectively. The target of both agents is to follow the lane with $p_y=0$. The weight matrices for both agents are defined as $Q=\textup{diag}(0,1,0,2)$ and $R=\textup{diag}(10,0.1)$. For collision avoidance, we set $d_\textup{safe}=4.5\,\textup{m}$ and $\beta=1.4$. We set the reference velocity of EA as 3\,m/s, and assume uncertainties over the reference longitudinal velocity of OA, which is subject to the following bimodal Gaussian mixture:
\begin{equation}
\begin{aligned}
    p(v_{ref}) &= w_1\mathcal{N}(v_{ref}|v_{ref,1},\sigma_1^2)+w_2\mathcal{N}(v_{ref}|v_{ref,2},\sigma_2^2),\\
    v_{ref,1}&=3.5\,\textup{m}/\textup{s},\ v_{ref,2}=2.5\,\textup{m}/\textup{s},\sigma_1=\sigma_2=0.2\,\textup{m}/\textup{s}.
\end{aligned}
\end{equation}
$\mathcal{N}$ denotes the Gaussian distribution, $w_1$ and $w_2$ are weight parameters. Note that the proposed method does not preclude other forms of distribution. To obtain the optimal strategy in mathematical expectation under this uncertainty, we sample 10 samples from the Gaussian mixture as $\{v_{ref,i}-2\sigma_i,v_{ref,i}-\sigma_i,v_{ref,i},v_{ref,i}+\sigma_i,v_{ref,i}+2\sigma_i\}$ with $i\in\{1,2\}$ to generate the types of OA. By varying combinations of $w_1$ and $w_2$, different BNEs are obtained. The simulation results are shown in Figs. \ref{fig:M1}, \ref{fig:M2}, and \ref{fig:M3}, where EA is yellow, and transparency is used to denote the probability of the type players of OA. The baseline results are shown in Fig. \ref{fig:M1}, where EA holds equal beliefs over two behavioral modes of OA. In this case, the EA manages to track the preassigned reference velocity. Meanwhile, when EA has a higher belief in the higher intended velocities of OA and therefore it is more likely to merge in the front, the best strategy is to slow down to yield, resulting in slightly slower longitudinal velocities of EA compared to baseline results (see Fig. \ref{fig:M2}). In contrast, if EA believes that OA is more likely to slow down, the best strategy is to accelerate to allow OA to merge from behind, resulting in higher longitudinal velocities (see Fig. \ref{fig:M3}). These simulation results indicate that our method manages to obtain different optimal strategies conditioned on different beliefs.

We also examine an intersection scenario with three agents crossing simultaneously, where the ego agent is referred to as EA and two other agents are referred to as OA1 and OA2. The initial states of EA, OA1, and OA2 are $[3,-18,1.57,3],[-12,-3,0,3],[24,3,3.14,3]$, respectively. EA intends to cross the intersection from the bottom to the top. OA1 intends to cross the intersection from left to right, and OA2 intends to cross the intersection from right to left. Weight matrices are given as $Q=\textup{diag}(1,1,0,0)$ and $R=\textup{diag}(10,0.1)$. For collision avoidance, we set $d_\textup{safe}=6.5\,\textup{m}$ and $\beta=1.4$. The reference velocity of EA is given as 3\,m/s, and we assume uncertainties over the reference longitudinal velocities of both OA1 and OA2 with the following bimodal Gaussian mixture:

\begin{equation}
\begin{aligned}
    p(v^{OA1}_{ref}) &= w^{OA1}_1\mathcal{N}(v^{OA1}_{ref}|v^{OA1}_{ref,1},\sigma_{OA1,1}^2)\\
    &+w^{OA1}_2\mathcal{N}(v^{OA1}_{ref}|v^{OA1}_{ref,2},\sigma_{OA1,2}^2),\\
        p(v^{OA2}_{ref}) &= w^{OA2}_1\mathcal{N}(v^{OA2}_{ref}|v^{OA2}_{ref,1},\sigma_{OA2,1}^2)\\
        &+w^{OA2}_2\mathcal{N}(v^{OA2}_{ref}|v^{OA2}_{ref,2},\sigma_{OA2,2}^2),\\
    v^{OA1}_{ref,1}&=v^{OA2}_{ref,1}=3.6\,\textup{m}/\textup{s},\\ v^{OA1}_{ref,2}&=v^{OA2}_{ref,2}=2.4\,\textup{m}/\textup{s},\\
    \sigma_{OA1,1}&=\sigma_{OA1,2}=\sigma_{OA2,1}=\sigma_{OA2,2}=0.2\,\textup{m}/\textup{s}.
\end{aligned}
\end{equation}
We also assume that EA admits a unique type and sample 10 samples from each Gaussian mixture to obtain the types for each OA. For the EA and each type of OAs, we generate the reference trajectory by assuming uniform motion with the corresponding reference velocity. Simulation results are shown in Figs. \ref{fig:C1}-\ref{fig:C5}. In particular, baseline results are shown in Fig. \ref{fig:C1}, such that equal probabilities are assumed over all behavioral modes of OAs. Compared with the baseline results, results in Figs. \ref{fig:C2}-\ref{fig:C5} show that EA either decides to rush and cross in front of the OA, or it decides to yield and cross from behind, conditioned on what it believes is the most likely behavioral mode of the OA, resulting in different optimal trajectories. These results verify the effectiveness of the proposed method in the intersection scenario by showing different sets of BNE trajectories conditioned on beliefs of the driving intentions of the OAs.

Meanwhile, to demonstrate the computational efficiency and solution accuracy of the proposed method, we perform quantitative comparisons between the proposed method and the following baseline methods.

\textit{Baseline 1. ILQGames}: The iLQGames~\cite{fridovich2020efficient} is a general-purpose game solver for various types of game settings. We utilize this game solver to obtain the NE corresponding to the agent-form game featured by Problem 1.

\textit{Baseline 2. IPOPT}: For this baseline, we use an interior point optimizer (IPOPT) to obtain the solution for Problem 3. The IPOPT solver is implemented by CasADi, which is an open-source software tool for numerical optimization in general and optimal control.

\textit{Baseline 3. P-iLQR}: For this baseline, we solve Problem 3 by potential iLQR~\cite{kavuncu2021potential}, which is a centralized solving method that utilizes the iLQR algorithm to obtain multi-agent trajectories.

\textit{Baseline 4. D-iLQR}: We solve Problem 3 by the decentralized iLQR~\cite{huang2023decentralized} algorithm, which is developed directly based on the method proposed in~\cite{banjac2019decentralized,grontas2022distributed}.

\begin{figure}[t]
\centering
\subfigure[Intersection]{\includegraphics[scale=0.45]{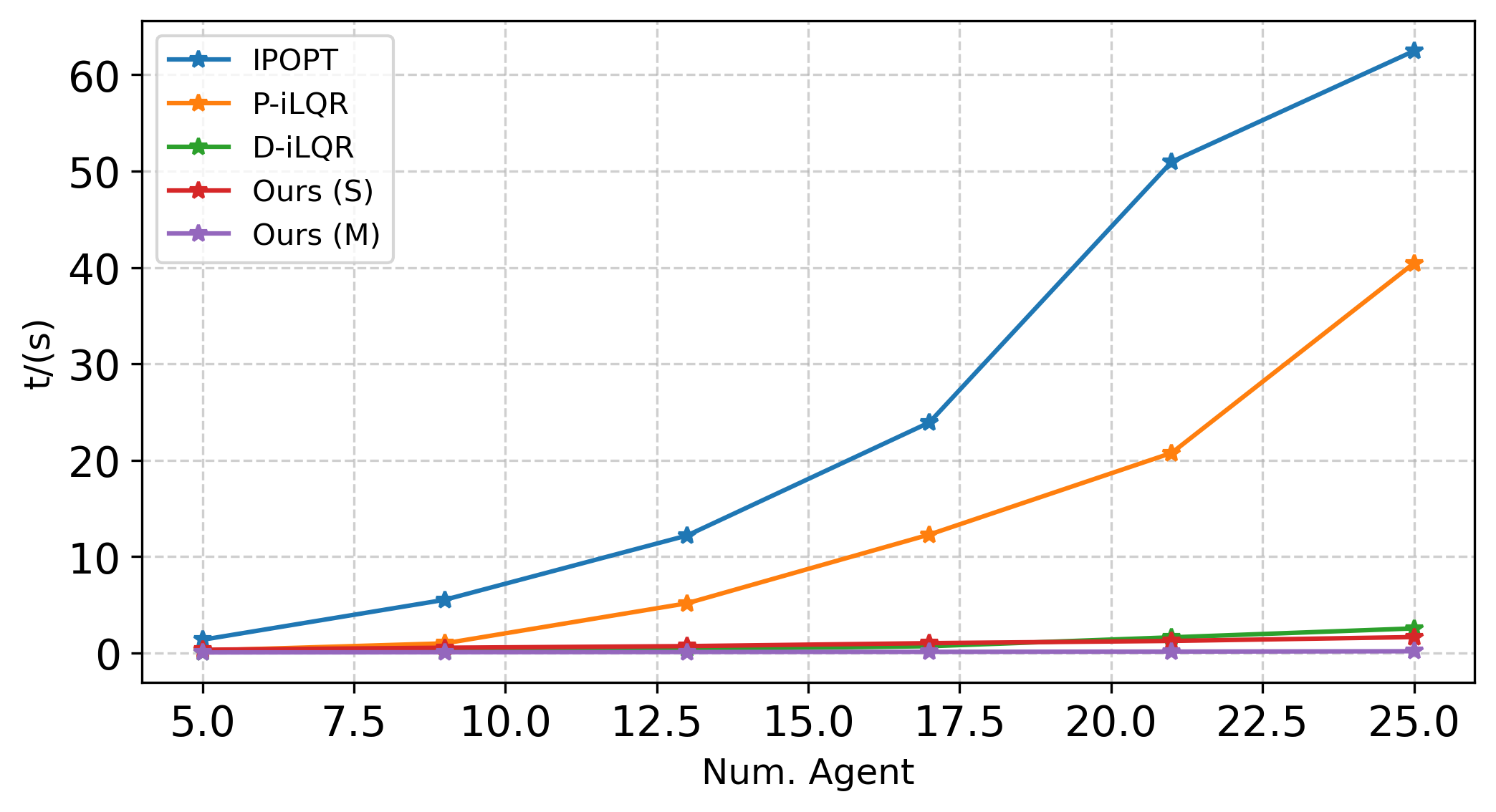}}
\subfigure[Merging]{\includegraphics[scale=0.45]{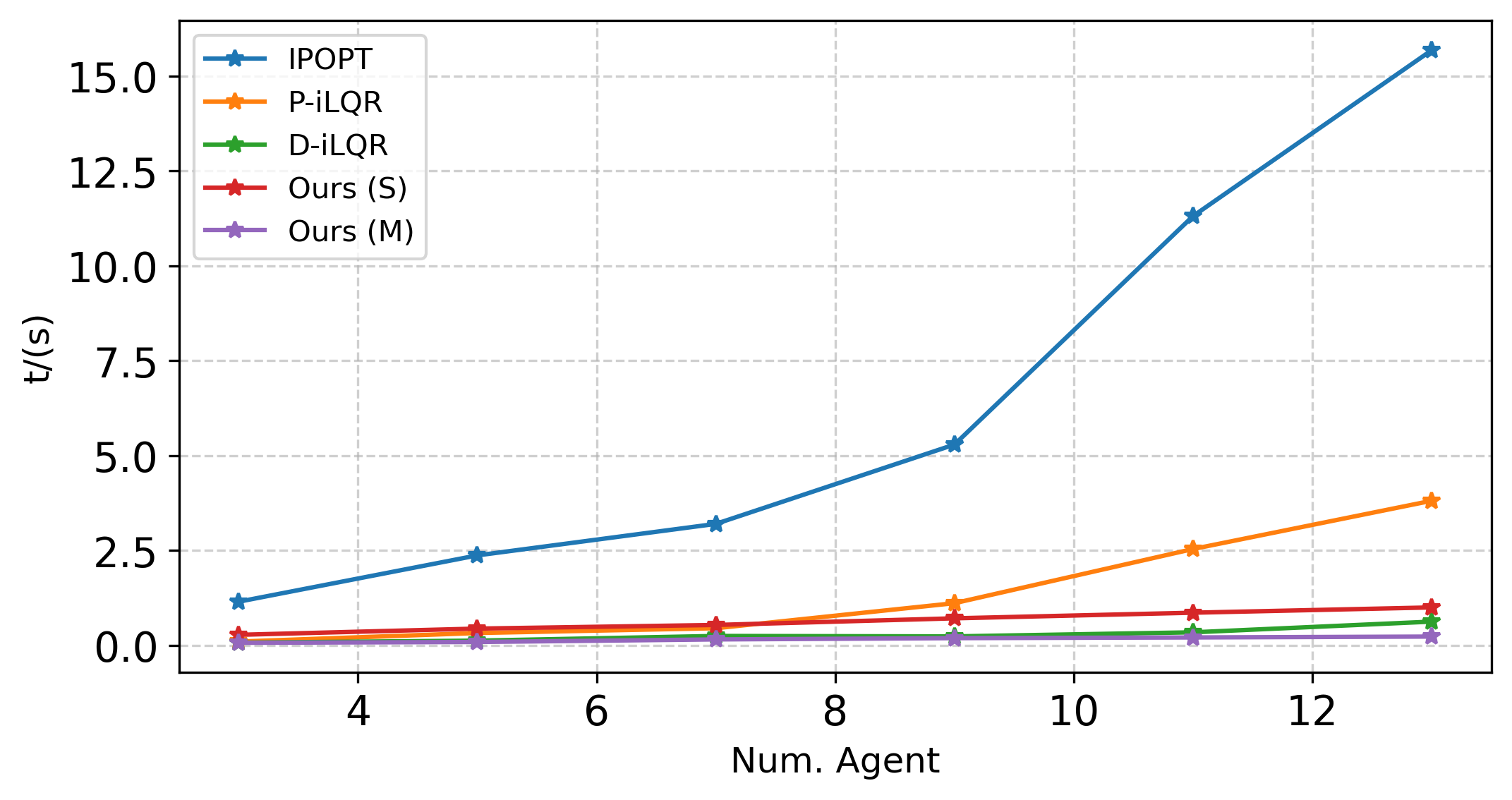}}

\caption{Computation time with respect to the number of type-players for the baselines and the proposed method. It can be shown from the curves that the proposed method with multi-process implementation has much better scalability than other methods in both scenarios.}

\label{fig:solutionComparison}
\end{figure}

For the proposed method, we provide both a single-process implementation \textit{Ours\,(S)} and a multi-process implementation \textit{Ours\,(M)} for comparison. In the multi-process implementation, the number of processes is set to be the number of type-players. All methods are
running on a server with 2\,× Intel(R) Xeon(R) Gold 6348
CPU @ 2.60GHz. The quantitative results are shown in Table \ref{Tab:time}, with the fastest solving time marked in \textbf{bold} and the best solution results marked with \underline{underline}. Note that in the table, T. and C. denote the computation time and the total cost at termination, respectively. While iLQGames is the only method that solves the original agent-form game instead of the corresponding potential game, results show that it takes significantly longer compared to all other methods. In the limited cases that it manages to finish within meaningful computation time, the costs at termination show that the quality of the planning results obtained is poor. From these outcomes, we conclude that solving the Bayesian game introduced in this paper is extremely hard without the merit of potential game formulation, even with a state-of-the-art general-purpose game solver like iLQGames. This finding supports our claim that the potential formulation of the Bayesian game greatly enhances the accessibility of the corresponding BNE. Meanwhile, among all methods that adopt the potential game formulation, the multi-process implementation of the proposed method achieves the lowest computational time in all cases, which clearly demonstrates its superiority in computational efficiency. Termination costs show that the proposed method achieves optimal/near-optimal solutions in all cases. We also plot the curves of solution time with respect to the number of agents to show the scalability. Quantitatively, in the intersection scenario, when the number of agent types increases by 5$\times$, the computation time increases by 45.6$\times$, 163.8$\times$, 28.0$\times$, 4.9$\times$, and 3.3$\times$ for IPOPT, P-iLQR, D-iLQR, Ours\,(S), and Ours\,(M), respectively. In the merging scenario, when the number of agent types increases by 4.3$\times$, the computation time increases by 13.6$\times$, 38.5$\times$, 8.8$\times$, 3.6$\times$, and 3.4$\times$, for IPOPT, P-iLQR, D-iLQR, Ours\,(S), and Ours\,(M), respectively. These qualitative and quantitative results show the superiority of the proposed parallel solving scheme in scalability compared with all the centralized/decentralized baselines in the simulation.

% Please add the following required packages to your document preamble:
% \usepackage{multirow}
% Please add the following required packages to your document preamble:
% \usepackage{multirow}
% Please add the following required packages to your document preamble:
% \usepackage{multirow}
% Please add the following required packages to your document preamble:
% \usepackage{multirow}
\begin{table*}[]
\centering
\caption{Comparison of computation time and solution accuracy between iLQGames, IPOPT, P-iLQR, D-iLQR, and the proposed method. }
\renewcommand\arraystretch{1.5} % Increase row height for better readability
\begin{tabular}{cc|cccccc|cccccc}
\toprule[1.5pt]
\multicolumn{2}{c|}{Scenarios}     & \multicolumn{6}{c|}{Intersection}                    & \multicolumn{6}{c}{Merging}                           \\ \toprule[1.0pt]
\multicolumn{2}{c|}{Num. Types}   & 5       & 9      & 13     & 17     & 21     & 25     & 3       & 5       & 7      & 9      & 11     & 13     \\ \toprule[1.5pt]
\multirow{2}{*}{iLQGames} & T. (s) & 253.870 & $>$600 & $>$600 & $>$600 & $>$600 & $>$600 & 137.240 & 517.269 & $>$600 & $>$600 & $>$600 & $>$600 \\
                          & C.     & 35017.1 & ---    & ---    & ---    & ---    & ---    & 2551.46 & 1016.87 & ---    & ---    & ---    & ---    \\ \toprule[1pt]
\multirow{2}{*}{IPOPT}    & T. (s) & 1.370   & 5.520  & 12.190 & 23.920 & 50.970 & 62.470 & 1.150   & 2.370   & 3.200  & 5.290  & 11.310 & 15.680 \\
                          & C.     & \underline{918.27}  & \underline{918.38} & \underline{927.78} & \underline{929.85} & 938.79 & \underline{935.77} & \underline{645.87}  & \underline{650.21}  & \underline{648.22} & 670.64 & 669.7  & 673.68 \\ \toprule[1pt]
\multirow{2}{*}{P-iLQR}   & T. (s) & 0.247   & 0.999  & 5.157  & 12.288 & 20.758 & 40.450 & 0.099   & 0.325   & 0.451  & 1.111  & 2.538  & 3.809  \\
                          & C.     & 919.52  & 920.66 & 981.57 & 983.22 & 988.13 & 997.63 & 646.99  & 781.25  & 801.58 & 764.33 & 772.62 & 769.74 \\ \toprule[1pt]
\multirow{2}{*}{D-iLQR}   & T. (s) & 0.091   & 0.155  & 0.405  & 0.699  & 1.624  & 2.550  & 0.071   & 0.124   & 0.247  & 0.236  & 0.343  & 0.625  \\
                          & C.     & 921.58  & 928.55 & 943.3  & 951.11 & 960.27 & 967.76 & 649.69  & 659.83  & 728.68 & 832.22 & 831.68 & 835.84 \\ \toprule[1pt]
\multirow{2}{*}{Ours\,(S)} & T. (s) & 0.336   & 0.545  & 0.712  & 1.020  & 1.250  & 1.650  & 0.277   & 0.441   & 0.537  & 0.712  & 0.860  & 0.999  \\
                          & C.     & 919.93  & 920.01 & 929.99 & 932.63 & \underline{937.26} & 939.64 & 646.94  & 650.9   & 649.12 & \underline{650.75} & \underline{651.04} & \underline{655.86} \\ \toprule[1pt]
\multirow{2}{*}{Ours\,(M)} & T. (s) & \textbf{0.055}   & \textbf{0.077}  & \textbf{0.102}  & \textbf{0.119}  & \textbf{0.143}  & \textbf{0.182}  & \textbf{0.067}   & \textbf{0.088}   & \textbf{0.156}  & \textbf{0.190}  & \textbf{0.208}  & \textbf{0.231}  \\
                          & C.     & 919.93  & 920.01 & 929.99 & 932.63 & \underline{937.26} & 939.64 & 646.94  & 650.9   & 649.12 & \underline{650.75} & \underline{651.04} & \underline{655.86} \\ \toprule[1.5pt]
\end{tabular}
\label{Tab:time}
\end{table*}
 
\begin{figure}[t]
\centering
\subfigure[Ours, $p(\textup{up})=0.9$.]{\includegraphics[scale=0.33]{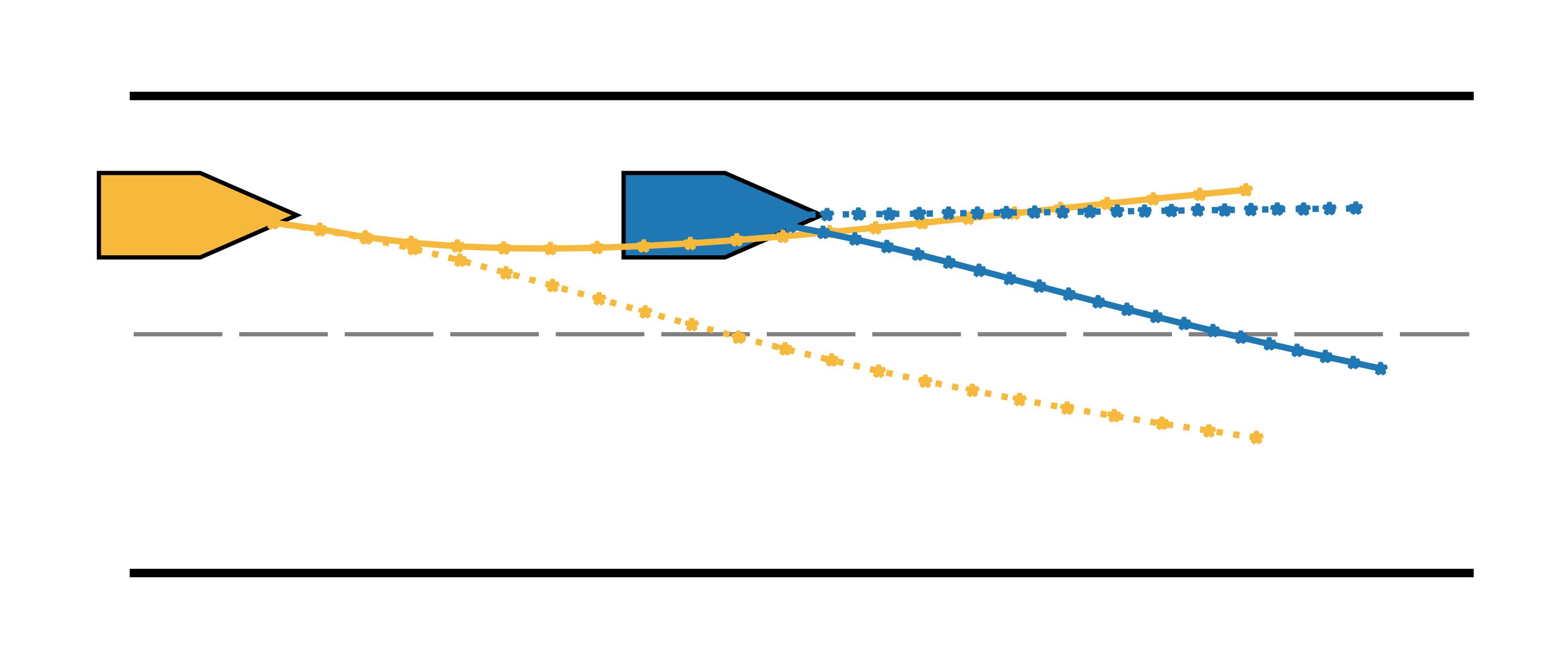}}
\subfigure[Ours, $p(\textup{up})=0.5$.]{\includegraphics[scale=0.33]{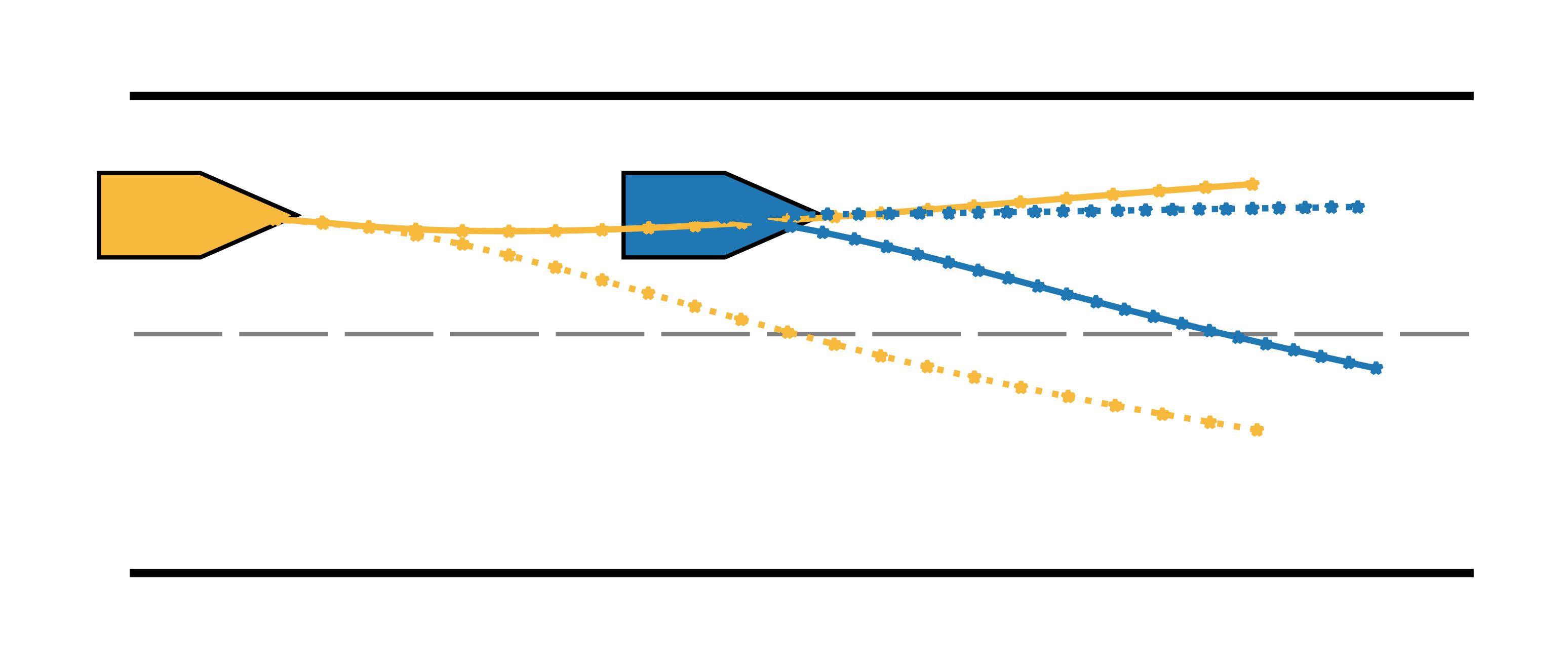}}
\subfigure[Ours, $p(\textup{up})=0.1$.]{\includegraphics[scale=0.33]{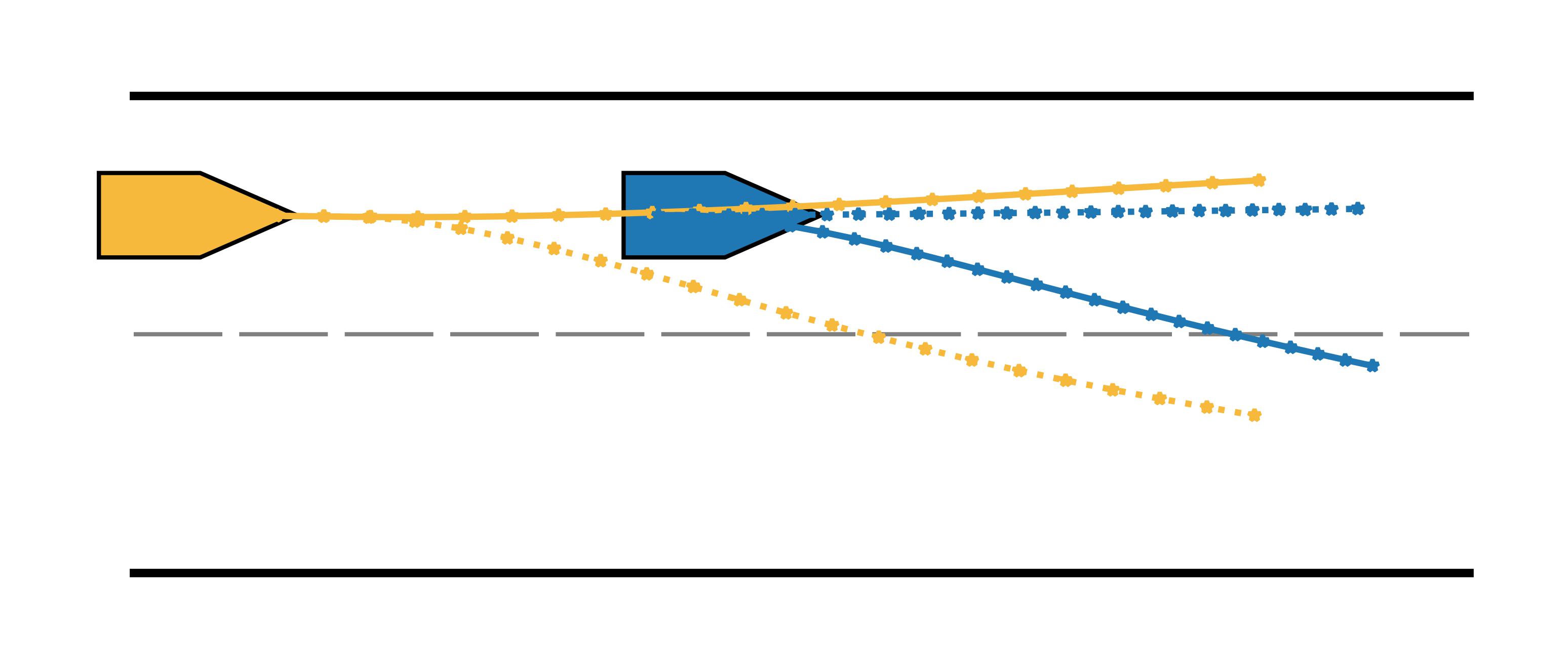}}
\subfigure[Baseline, $p(\textup{up})=0.9$.]{\includegraphics[scale=0.33]{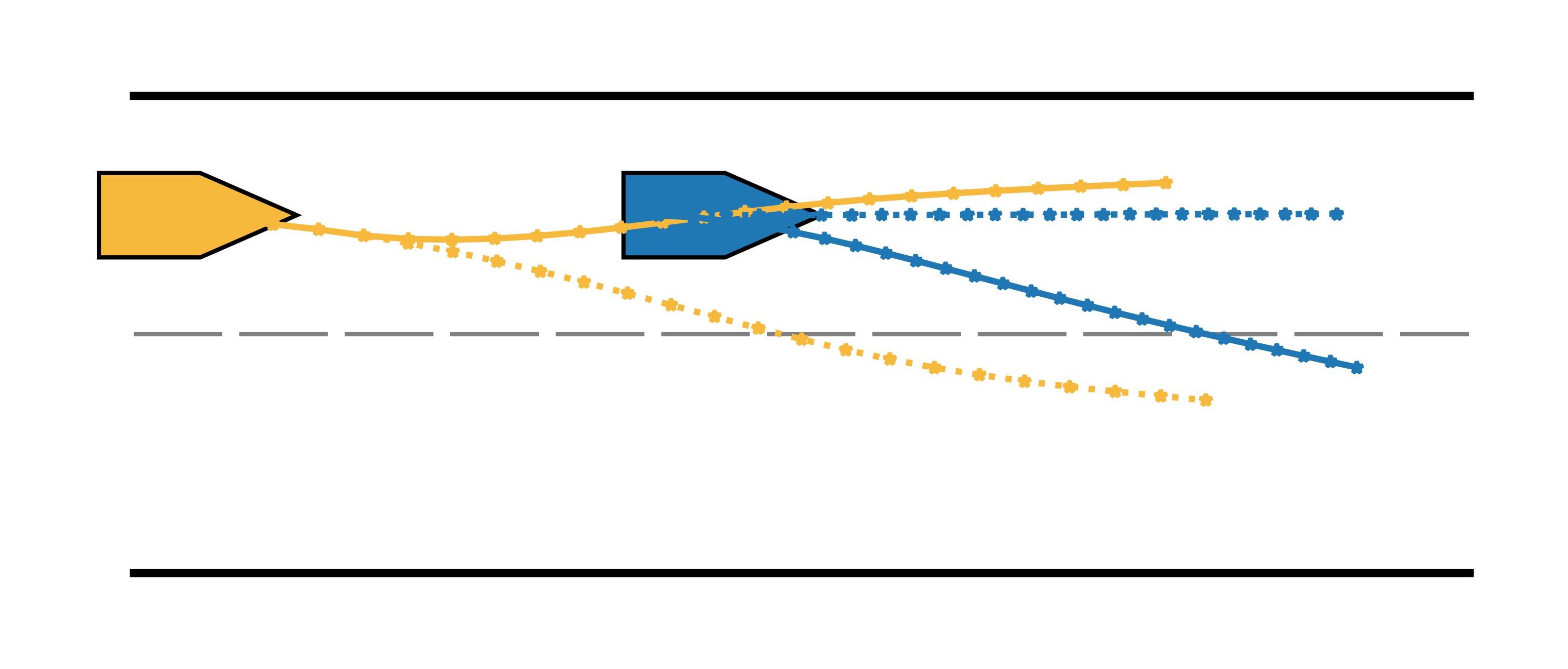}}
\subfigure[Baseline, $p(\textup{up})=0.5$.]{\includegraphics[scale=0.33]{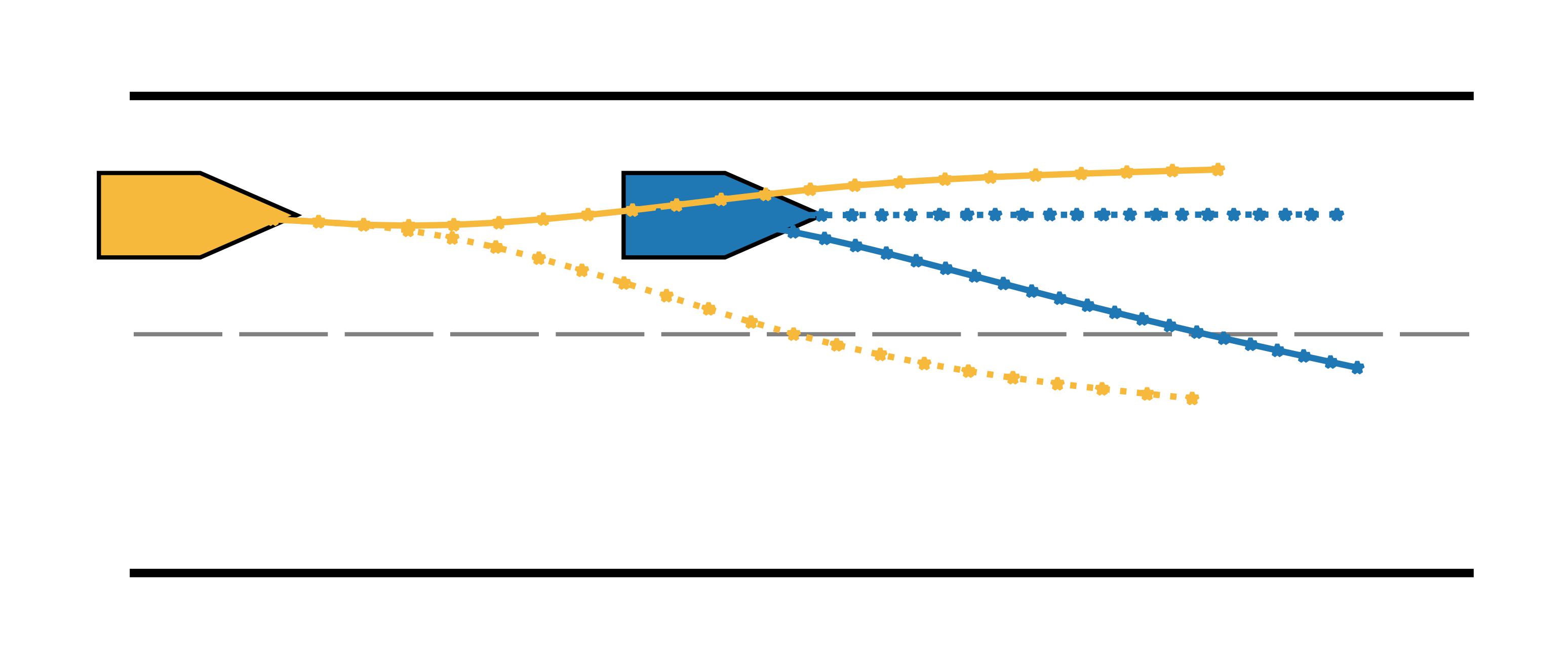}}
\subfigure[Baseline, $p(\textup{up})=0.1$.]{\includegraphics[scale=0.33]{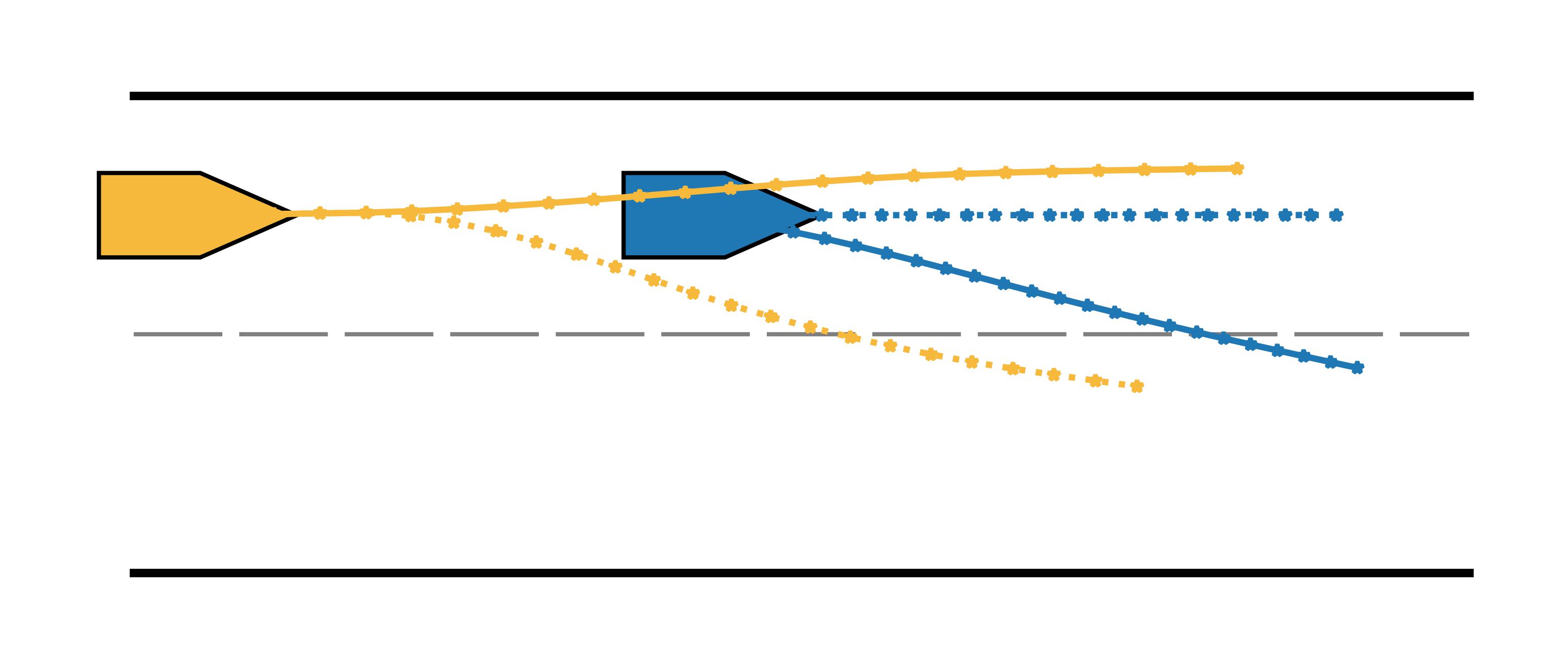}}

\caption{Simulation results of the contingency game by the proposed method and the baseline method under different beliefs. OA is in blue, and trajectories of the same linestyle correspond to the same hypothesis. $p(\textup{up})$ denotes the probability that the OA aims to follow the upper lane $p(p^{OA}_{y,ref}=0.5\,\textup{m})$. The results of the proposed method are similar to those of the baseline. When $p(\textup{up})$ is high, namely that the OA is likely to follow the upper lane, the contingency plan is biased towards a lane changing decision so as to avoid the OA and perform the overtaking. As $p(\textup{up})$ reduces, the contingency plan of the EA gradually shifts to favor lane keeping. These qualitative results demonstrate the effectiveness of the proposed method in interactive contingency planning.}

\label{fig:contingency}
\end{figure}

% \subsection{Open-Loop Simulations of Potential Contingency Game}

In addition to the potential Bayesian game in Problem 3, we also examine the potential contingency game in Problem 4. We consider a two-agent overtaking scenario, in which the ego agent (referred to as EA) is trying to overtake the slower agent (referred to as OA) in front. In particular, the EA is uncertain about the target lane and target velocity of the OA, and therefore, the EA maintains a belief over the hypothesis space $\Theta=\{(p^{OA}_{y,ref},v^{OA}_{ref})|p^{OA}_{y,ref}\in \mathcal{R}^{OA}_{y,ref},v^{OA}_{ref}\in\mathcal{R}^{OA}_{v,ref}\}$, where $\mathcal{R}^{OA}_{y,ref}$ and $\mathcal{R}^{OA}_{v,ref}$ are the space of possible target lanes and possible target velocities of OA, respectively. The initial states of EA and OA are given as $[-4.0,0.5,0,1.0]$ and $[-2.9,0.5,0,0.75]$.

We first perform a qualitative analysis to verify the effectiveness of the proposed method. We let $\mathcal{R}^{OA}_{y,ref}=\{0,0.5\}\,\textup{m}$ and $\mathcal{R}^{OA}_{v,ref}=\{0.5\}\,\textup{m}/\textup{s}$. For each hypothesis, we set the reference lane of the EA to be different from that of the OA, namely ${p}^{EA}_{y,ref}=0\,\textup{m}$ if ${p}^{OA}_{y,ref}=0.5\,\textup{m}$ and vice versa, so as to perform overtaking. The reference velocity of the EA is set to be $1.0\,\textup{m}/\textup{s}$ for both hypotheses. We use the prediction horizon $T=25$ and the discretization $\tau_s=0.1\,\textup{s}$. Weight matrices are set to be $Q=\textup{diag}(0,0.5,0.25,1.0)$, $R=\textup{diag}(0.5,1.0)$, and $Q_\textup{contingency}=\textup{diag}(50,50,100,10)$. For collision avoidance, we let $d_\textup{safe}=0.5\,\textup{m}$ and $\beta=1.4$. The branching time is set to be $t_b=5$. Results are shown in Fig. \ref{fig:contingency}, where the contingency plans obtained by the proposed method and the baseline method under different beliefs are demonstrated. Similar to the baseline method, the contingency plan of the proposed method is also a function of the OA's intent probability. To facilitate overtaking, it is biased away from the lane that the OA is most likely to follow. Meanwhile, a contingency trajectory is maintained in case the anticipation of the OA's intention is wrong. These qualitative results substantiate the correctness of the potential formulation of the contingency game.

To demonstrate the efficiency and scalability of the proposed potential contingency game, we compare the solution time of the proposed method with that of the original contingency game~\cite{peters2024contingency} with an increasing number of hypotheses. To generate up to 10 hypotheses, we consider the space of possible target velocities of OA as $\mathcal{R}^{OA}_{v,ref}=\{0,0.25,0.5,0.75,1.0\}\,\textup{m}/\textup{s}$ and $\mathcal{R}^{OA}_{y,ref}=\{0,0.5\}\,\textup{m}$. In each case, we assume a uniform probability distribution for all hypotheses. All other parameters remain unchanged. The simulation results are shown in Table II. In our simulations, it generally takes hundreds of milliseconds to several seconds for the baseline method to converge. Meanwhile, owing to the merit of the potential formulation and the parallel solving scheme introduced, the proposed method improves the computational efficiency by orders of magnitude compared with the baseline, enabling the application of the contingency game to complicated traffic scenarios with a large number of hypotheses. Also, when the number of hypotheses increases from 2 to 10, the computation time of the baseline method increases by 82.8$\times$, while the proposed method increases by 9.8$\times$, showing better scalability.

\begin{table}[]
\centering
\caption{Comparison of computation time between contingency game~\cite{peters2024contingency} the proposed method with an increasing number of hypotheses. }
\renewcommand\arraystretch{1.5}
\begin{tabular}{cccccc}
\toprule[1.2pt]
Num. Hypo. & 2        & 4        & 6        & 8        & 10 \\ \toprule[1.2pt]

Baseline~\cite{peters2024contingency}    & 0.275\,s & 0.286\,s         &  0.491\,s        & 1.199\,s  & 22.771\,s\\
Ours (M)   & 0.006\,s & 0.009\,s & 0.014\,s & 0.042\,s & 0.059\,s\\ \toprule[1.2pt]
\end{tabular}
\label{Tab:time2}
\end{table}

% Please add the following required packages to your document preamble:
% \usepackage{multirow}
\begin{table*}[]
\centering
\caption{Comparison of performance between baseline methods and the proposed method}
\renewcommand\arraystretch{1.6}
\begin{tabulary}{\textwidth}{C|CCCCC|CCCCC}
\toprule[1.5pt]
\multirow{2}{*}{} & \multicolumn{5}{c|}{Intersection}                                                   & \multicolumn{5}{c}{Merging}                                                        \\
          &  $\Tilde{||\Delta V||}\downarrow$        
          &  $\Tilde{||\Delta X||}\downarrow$              
          &  $\Tilde{||\delta||}\downarrow$              
          &  $\Tilde{||a||}\downarrow$           
          &  $\Tilde{||d||}\uparrow$             
          &  $\Tilde{||\Delta V||}\downarrow$        
          &  $\Tilde{||\Delta X||}\downarrow$              
          &  $\Tilde{||\delta||}\downarrow$              
          &  $\Tilde{||a||}\downarrow$           
          &  $\Tilde{||d||}\uparrow$  
          \\
            &  $(\textup{m}/\textup{s})$     
            &  $(\textup{m})$              
          &  $(\textup{rad})$              
          &  $(\textup{m}/\textup{s}^2)$           
          &  $(\textup{m})$             
          &  $(\textup{m}/\textup{s})$        
          &  $(\textup{m})$              
          &  $(\textup{rad})$              
          &  $(\textup{m}/\textup{s}^2)$           
          &  $(\textup{m})$  
          \\ \toprule[1.0pt]
MLE               & 0.568          & 0.810          & 0.202          & 0.991          & 2.389          & 0.486          & \textbf{3.204} & 0.084          & 0.343          & 2.115          \\ 
BNE               & 0.522          & \textbf{0.713} & 0.196          & 0.866          & 2.828          & 0.458          & 3.307          & 0.074          & 0.291          & \textbf{2.347} \\ 
MLE-Update        & 0.504          & 0.812          & 0.194          & 0.839          & 2.873          & 0.472          & 3.263          & 0.077          & 0.327          & 2.202          \\ 
BNE-Update        & \textbf{0.482} & 0.719          & \textbf{0.187} & \textbf{0.816} & \textbf{2.953} & \textbf{0.418} & 3.298          & \textbf{0.073} & \textbf{0.280} & 2.318          \\ \toprule[1.5pt]
\end{tabulary}
\end{table*}

\subsection{Closed-Loop Simulations}
We also perform closed-loop simulations to further demonstrate the effectiveness of the proposed method in addressing uncertainties. To generate the trajectories of the OAs, we assume that they know the underlying intentions of all agents. Therefore, the OAs are playing a game without uncertainties, and they follow the NE trajectories obtained through this game. Meanwhile, the EA does not know the intentions of the OAs for certain but has access to the distribution of the underlying intentions of the OAs. We compare the performance of the EA under the following four game settings.

\textit{MLE}: In this setting, the EA plays a game without uncertainties by assuming the intentions of the OAs through maximum-likelihood estimation (MLE), and follows the NE trajectory obtained through this game.

\textit{BNE}: In this setting, the EA plays the Bayesian game in Problem 3 and follows the BNE trajectory.

\textit{MLE-Update}: Similar to the MLE setting, the EA also plays a game without uncertainties by assuming the intentions of the OAs through MLE, but performs Bayesian filtering to update the distribution in each planning cycle. Bayesian filtering is performed based on the BNE trajectories obtained in the BNE setting.

\textit{BNE-Update}: Similar to the BNE setting, the EA plays the Bayesian game and follows the BNE trajectory with Bayesian filtering to update the distribution in each planning cycle.

\begin{figure}[t]
\centering

\subfigure[MLE setting]{\includegraphics[scale=0.45]{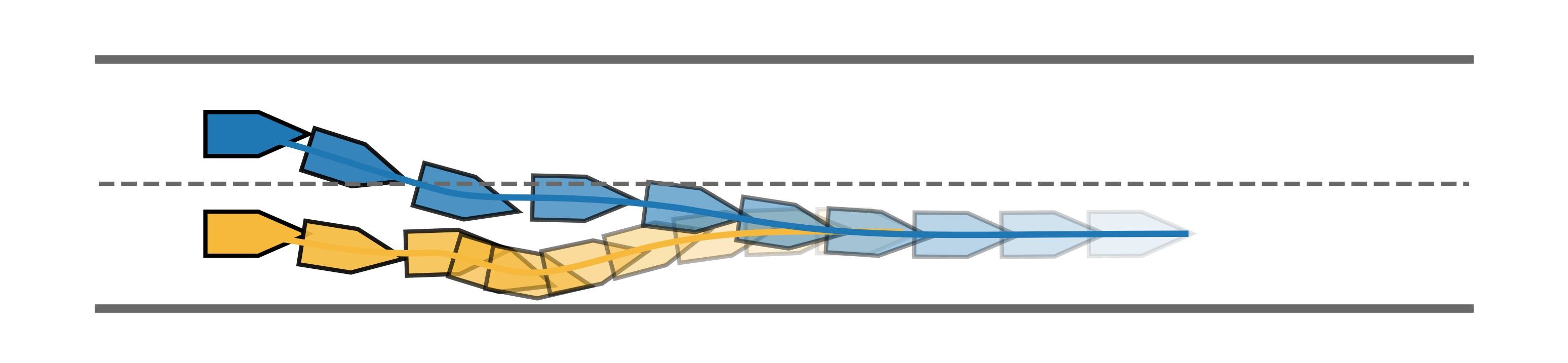}}
\subfigure[BNE setting]{\includegraphics[scale=0.45]{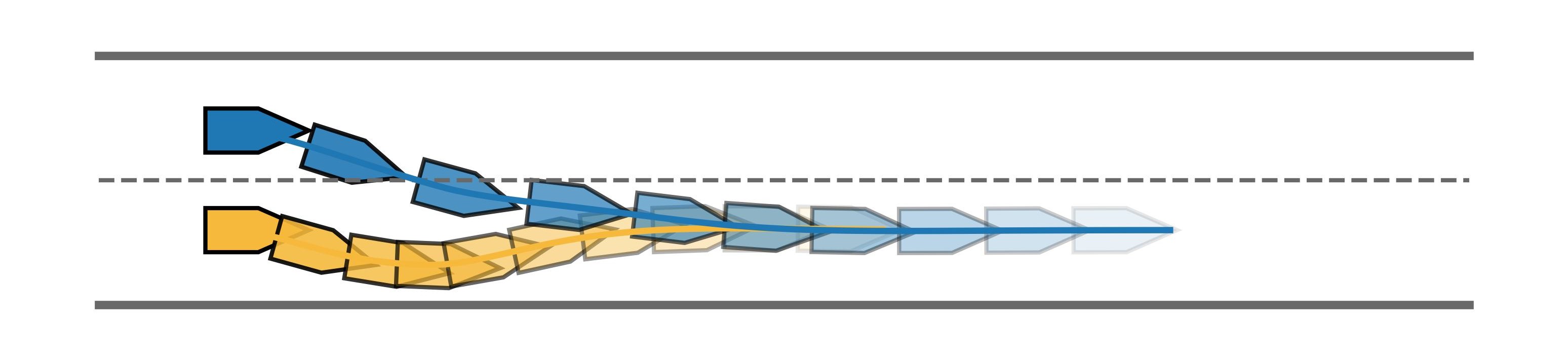}}
\caption{Closed-loop simulation results in the merging scenario under MLE and BNE game settings with $[w_1,w_2]=[0.51,0.49]$, and the real type of OA is that $v_{ref}=v_{ref,2}$. Under the MLE game setting, the EA believes that the OA is going to slow down while in fact the OA does the opposite. Due to the misjudgment of the OA's intention, the EA first speeds up and then takes an emergency brake to avoid a collision. Meanwhile, the BNE game setting generates a trajectory that is smooth and secure for both cases.}

\label{fig:MM}
\end{figure}

\begin{figure}[t]
\centering

\subfigure[MLE setting]{\includegraphics[scale=0.225]{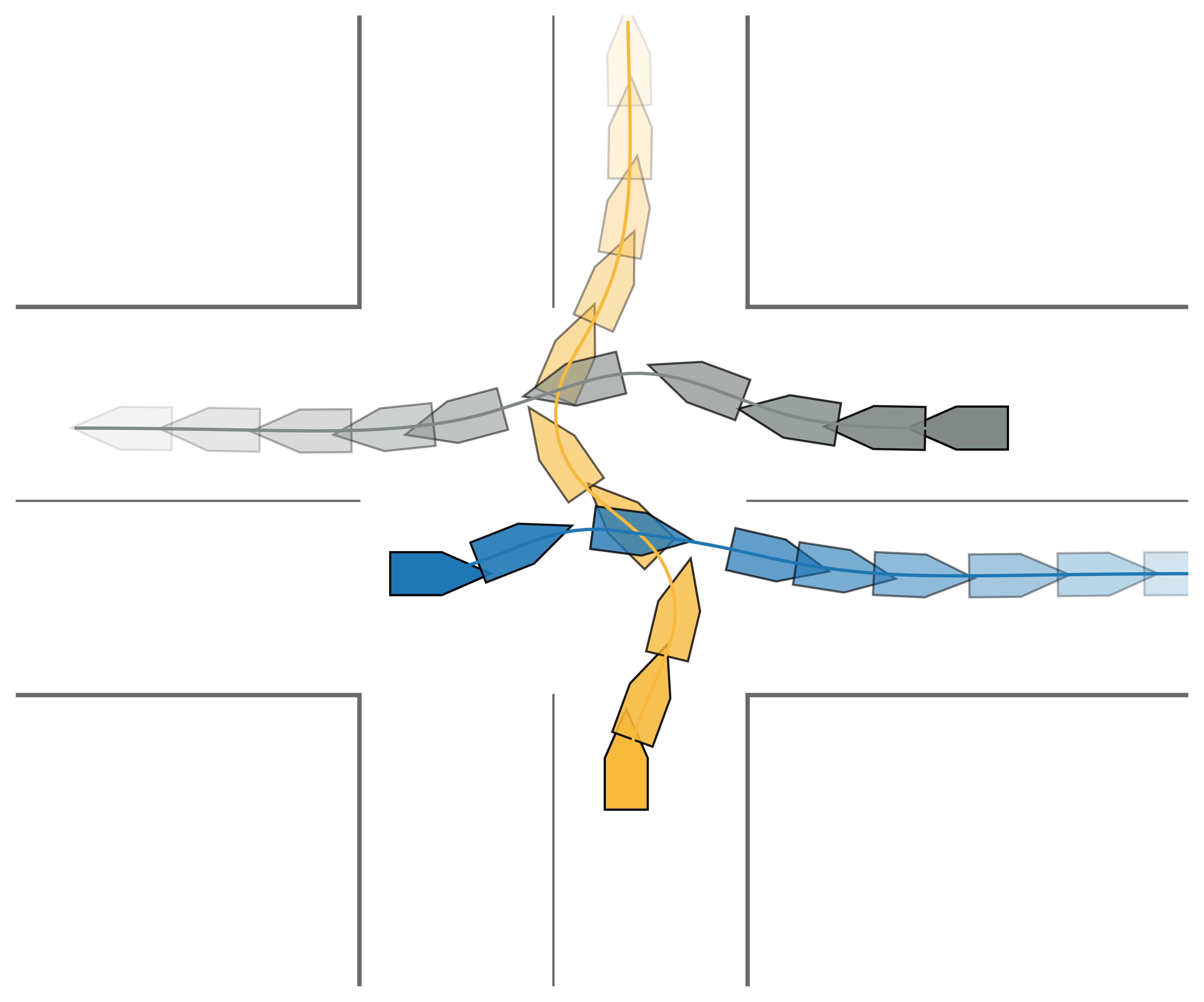}}
\subfigure[BNE setting]{\includegraphics[scale=0.225]{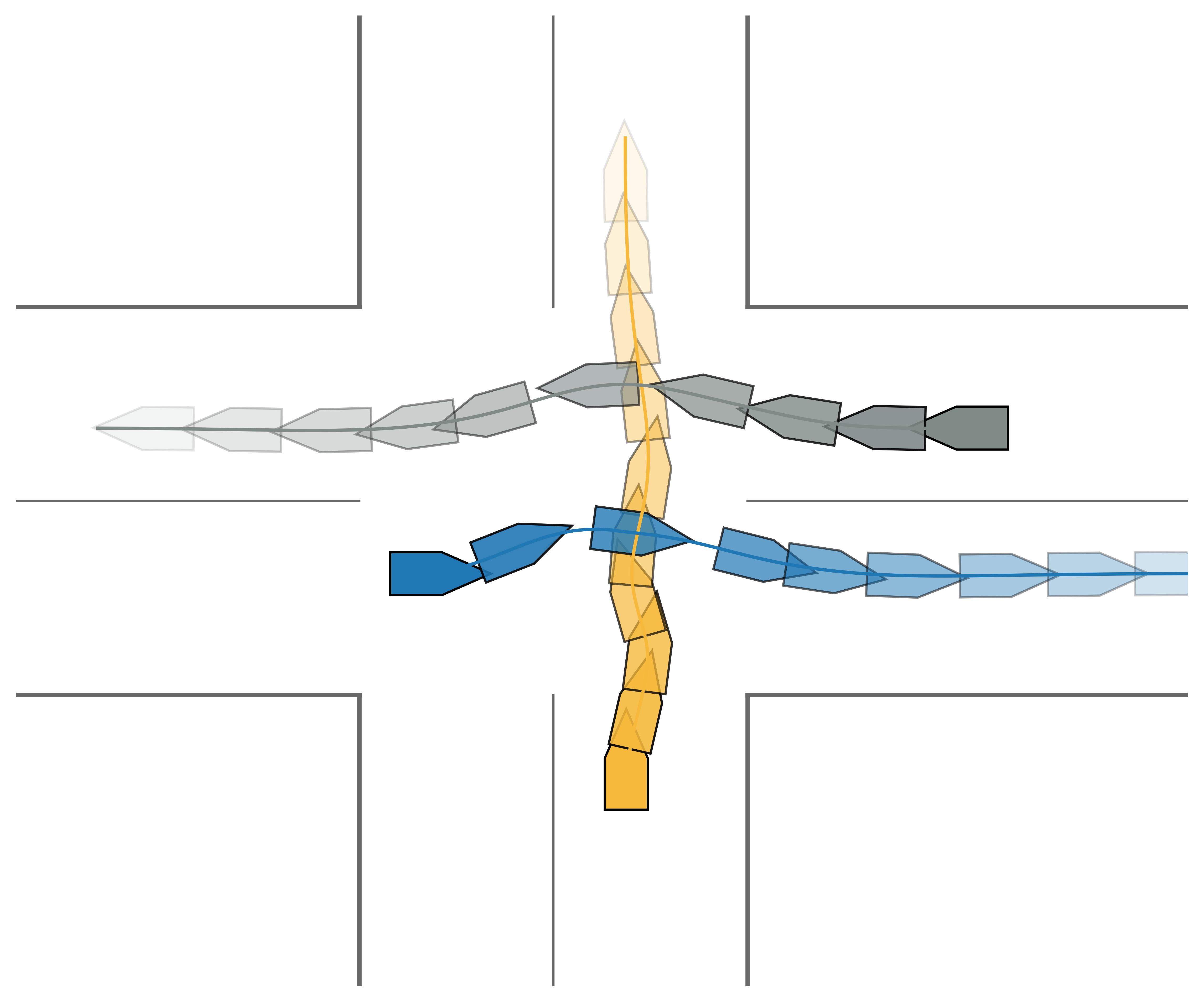}}
\caption{Closed-loop simulation results in the intersection scenario under MLE and BNE game settings with $[w^{OA1}_1,w^{OA1}_2,w^{OA2}_1,w^{OA2}_2]=[0.51,0.49,0.51,0.49]$, and the real types of OA1 and OA2 are that $v^{OA1}_{ref}=v^{OA1}_{ref,2}$ and $v^{OA2}_{ref}=v^{OA2}_{ref,2}$. Under the MLE game setting, the EA mistakenly believes that both OAs are going to slow down and yield. Therefore, it attempts to pass through the intersection in front of both OAs. This misjudgment leads to a collision as both OAs do the opposite. Meanwhile, the game with the BNE setting manages to generate a smooth and safe trajectory.}

\label{fig:CC}
\end{figure}

Qualitative results comparing between MLE and BNE game settings are shown in Fig. \ref{fig:MM} and Fig. \ref{fig:CC}. These results clearly demonstrate the limitation of the game method that only considers the most likely hypothesis. When there is a discrepancy between the hypothesis considered by the EA and the real intention of the OAs, the trajectory generated by the game method can lead to potential danger and even collisions. On the contrary, the game method based on BNE considers different hypotheses and is well-prepared for different intentions of the OAs, leading to smooth and safe trajectories. For quantitative analysis, performances of game methods under different settings are evaluated through a Monte Carlo study. For both the intersection scenario and the merging scenario, we randomly sample the initial states of both the OAs and the EA from uniform distributions centered around the initial conditions in Section VII-A. In particular, for the OAs, we also sample the distribution of their intended reference velocity by sampling the reference velocity $v_{ref}$ and the weight parameters $w_1$ and $w_2$. For each scenario, 100 random initial conditions are drawn. Meanwhile, for each initial condition, we draw 5 random samples of the actual intended reference velocities of the OAs from the distribution. Altogether 500 simulations are performed for each scenario. In each simulation, we simulate the receding horizon trajectories for 10\,s with the length of the planning cycle to be 2\,s. We compare the average deviation from the reference speed $\Tilde{||\Delta V||}$, the average deviation from the reference trajectory $\Tilde{||\Delta X||}$, the average steering angle $\Tilde{||\delta||}$, and the average acceleration $\Tilde{||a||}$ of the EA, as well as the average minimal distance between the EA and the OAs $\Tilde{||d||}$. Simulation results are shown in Table III. Comparing with the uni-hypothesis game solution that only considers the most likely hypothesis (the MLE method), the game solution based on BNE significantly enhances security by increasing the minimal distance between the EA and the OAs through interaction, and other performance indices are also improved. This is because the BNE trajectory of the EA is optimal in terms of mathematical expectation over the distribution of the OAs' intentions. Meanwhile, the results also verify the effectiveness of the Bayesian update in estimating the underlying intentions of the OAs; as in both the MLE and the BNE settings, the performance improves by applying the Bayesian filtering.

\begin{figure}[!htb]
\centering
\includegraphics[scale=0.23]{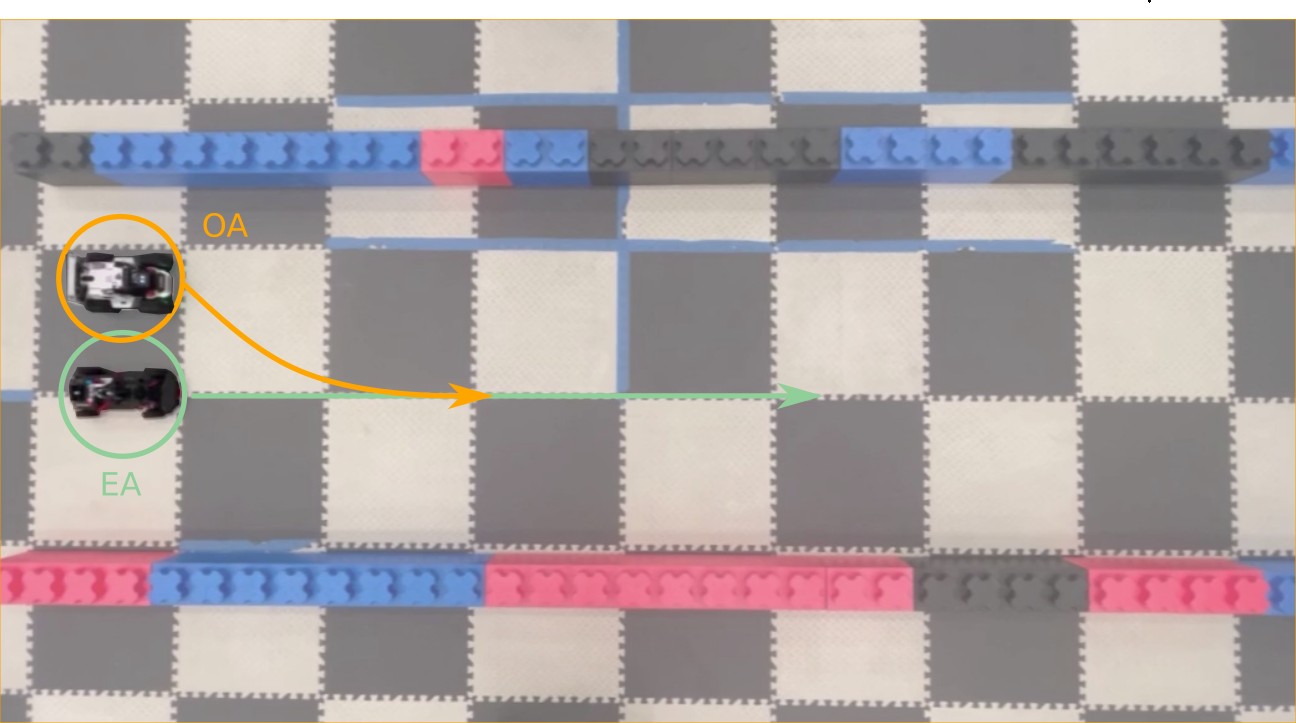}
\caption{Settings of the real-world experiments for the merging scenario. The EA is moving from left to right, and the OA aims to merge into the lane currently occupied by the EA. The reference speed of the EA is 0.15\,m/s. The OA is either a fast-type player with a reference longitudinal speed of 0.175\,m/s or a slow-type player with a reference longitudinal speed of 0.125\,m/s. Equal prior probabilities are assumed for both types.}

\label{fig:settings2}
\end{figure}

\begin{figure}[!htb]
\centering
\includegraphics[scale=0.37]{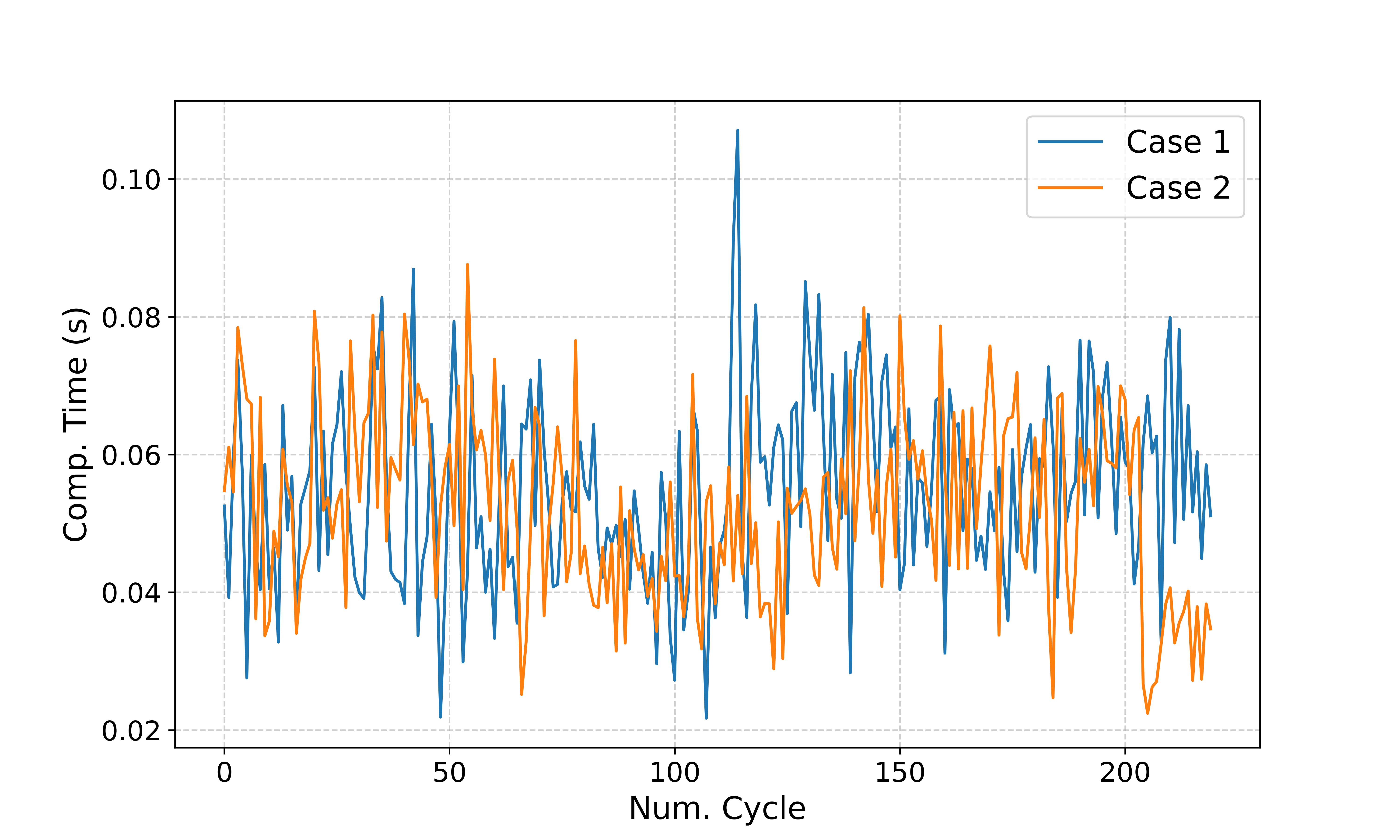}

\caption{Computation time in real-world experiments for the merging scenario. In particular, Case 1 refers to the case where OA is a slow agent; case 2 refers to the case where OA1 is a fast agent.}

\label{fig:realtimes2}
\end{figure}

\section{Real-World Experiments}

We also conduct real-world experiments to verify the effectiveness of the proposed method. In the experiments, a Tianbot T100 equipped with NVIDIA Jetson Xavier NX is used as the EA, and Agilex LIMO vehicles are used as the OAs. The experiments are set up with the Robot Operating System (ROS)~\cite{quigley2009ros} running on a laptop equipped with an 11th Gen Intel (R) Core (TM) i7-1165G7 CPU with a clock speed of 2.80\,GHz. To provide real-time feedback of the poses of all agents, we use the OptiTrack system, which operates at a frame rate of 200 FPS. The trajectories of all agents are generated following the same scheme as in Section VII-C with the BNE game settings. Meanwhile, a pure pursuit algorithm is utilized to perform the low-level control in order to track the generated trajectories. The entire system is running at a frequency of roughly 10\,Hz.

Similar to the closed-loop simulations, we also consider two scenarios, including the merging scenario and the intersection scenario. Particularly for the merging scenario, the settings of the experiments are shown in Fig. \ref{fig:settings2}. In the experiments for this scenario, we examine both cases where the OA is a fast and a slow agent.  Computation time against the number of planning cycles is plotted in Fig. \ref{fig:settings}, and the average planning time is roughly 0.05\,s, which satisfies the real-time requirement. Snapshots of experiments are shown in Fig. 
\ref{fig:realm1} and Fig. \ref{fig:realm2}. By performing the BNE strategy, the EA properly reacts to different intentions of the OA to facilitate smooth merging. Particularly, when the OA is a fast agent, the EA slows down to yield to the OA, allowing the OA to merge in the front; instead, when the OA is a slow agent, the EA accelerates to allow the OA to merge from behind. While comparing the first snapshot of both cases, a common behavior of the EA is to steer away from the OA, which is part of the optimal maneuver obtained by computing the BNE strategy. This enhances the safety as it increases the distance between the EA and the OA, creating enough space for the OA to merge in both cases.

% real merge exp1
\begin{figure}[!htb]
\centering
\begin{minipage}{0.33\linewidth}
    \includegraphics[scale=0.104]{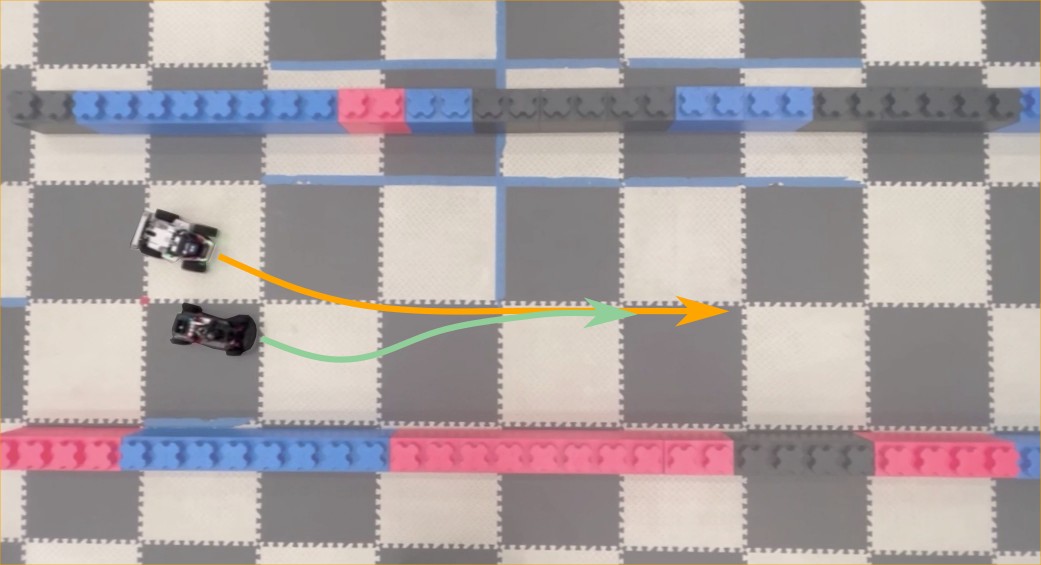}
\end{minipage}\hfill
\begin{minipage}{0.33\linewidth}
    \includegraphics[scale=0.104]{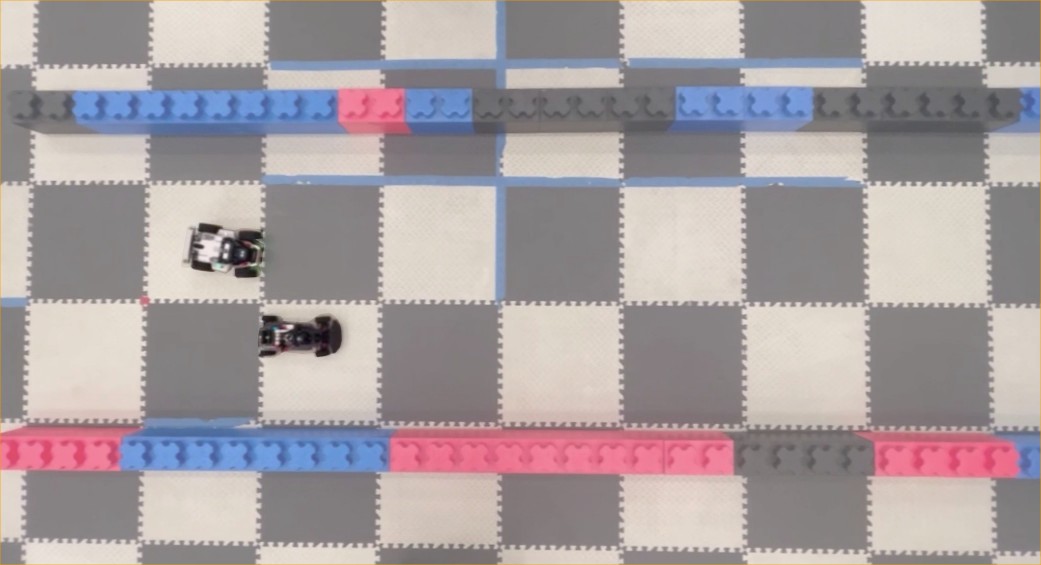}
\end{minipage}\hfill
\begin{minipage}{0.33\linewidth}
    \includegraphics[scale=0.104]{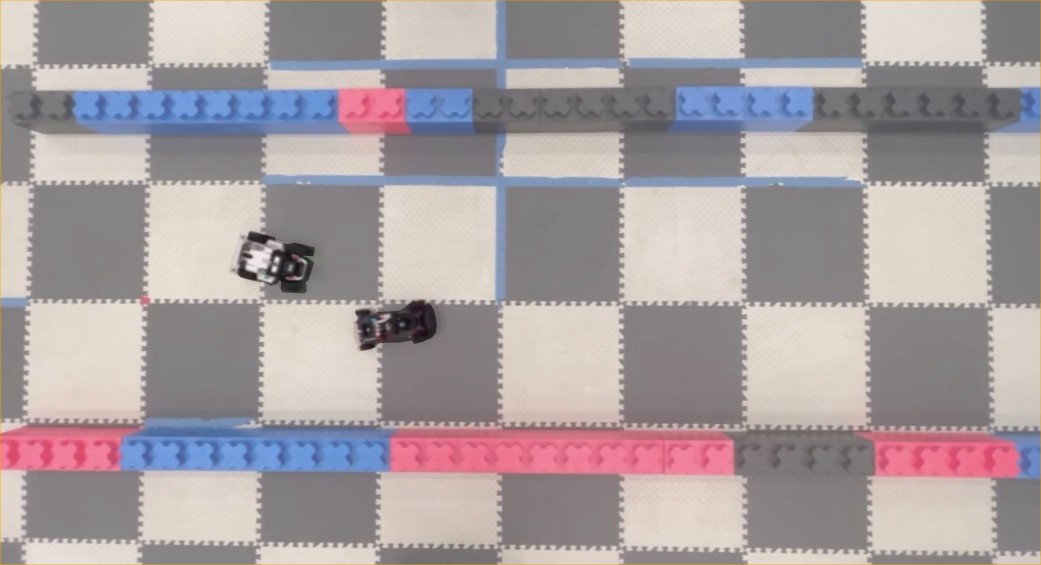}
\end{minipage}

\vspace{0.1cm}

\begin{minipage}{0.33\linewidth}
    \includegraphics[scale=0.104]{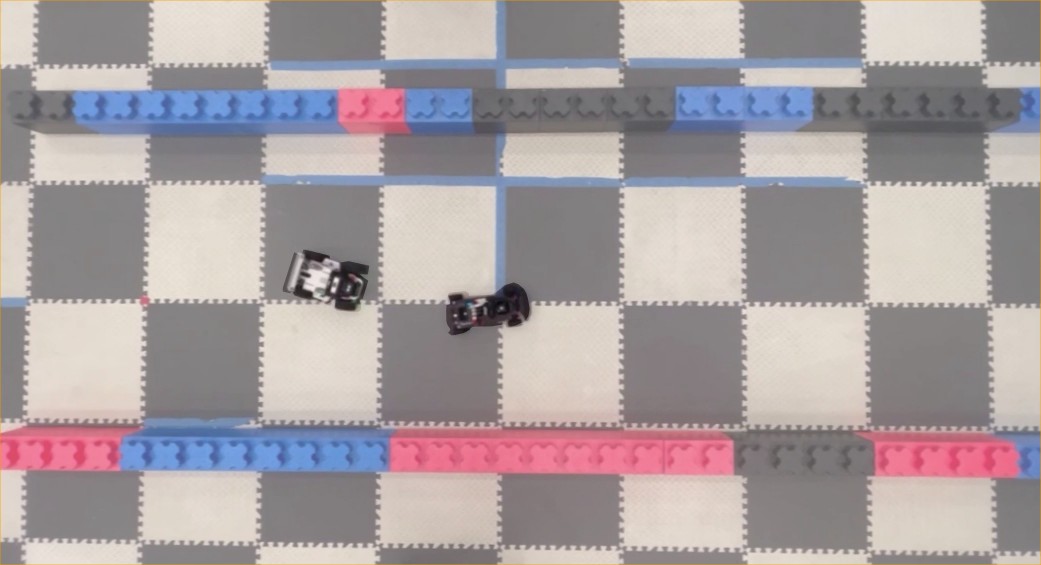}
\end{minipage}\hfill
\begin{minipage}{0.33\linewidth}
    \includegraphics[scale=0.104]{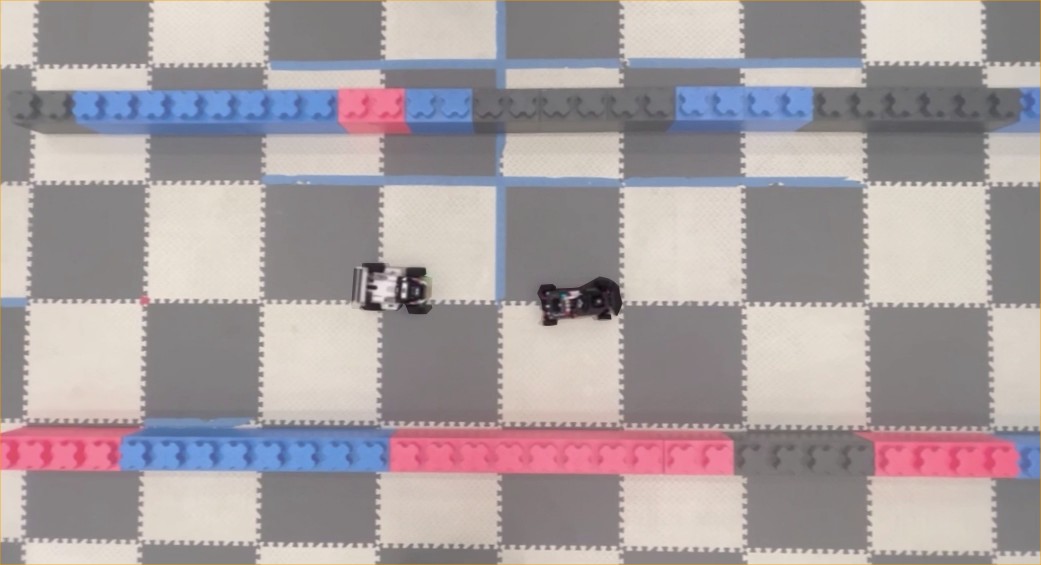}
\end{minipage}\hfill
\begin{minipage}{0.33\linewidth}
    \includegraphics[scale=0.104]{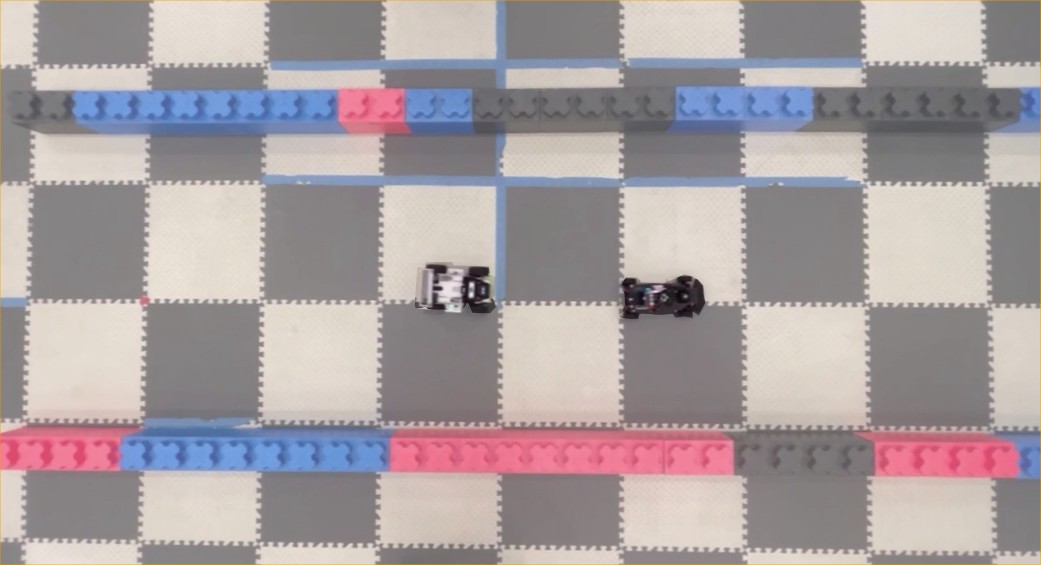}
\end{minipage}
\caption{Snapshots of real-world experiments for the merging scenario under the case when the OA is a slow agent. In this case, the OA merges into the lower lane from behind the EA.}
\label{fig:realm1}
\end{figure}

% real merge exp2
\begin{figure}[!htb]
\centering
\begin{minipage}{0.33\linewidth}
    \includegraphics[scale=0.104]{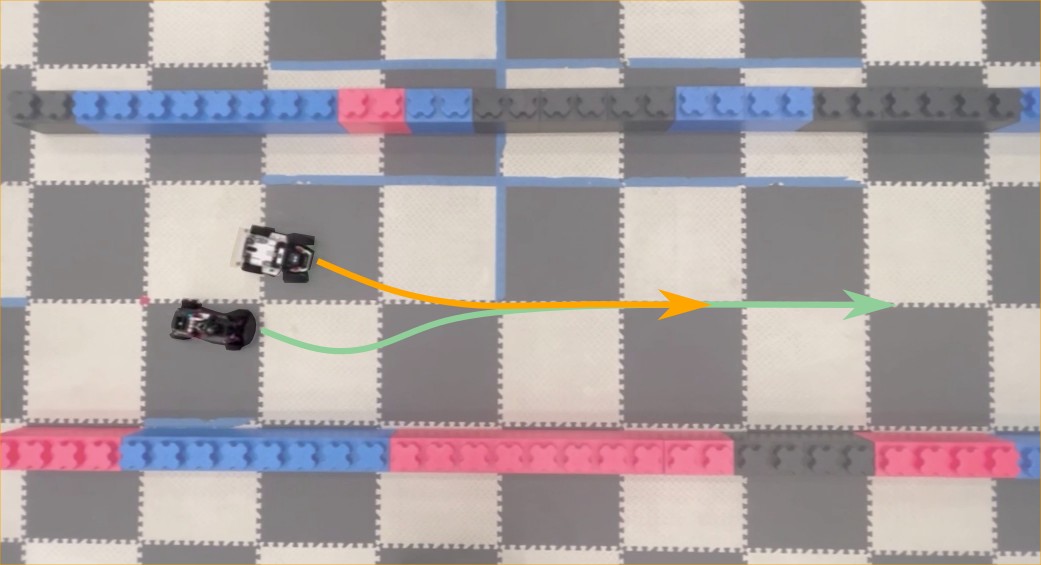}
\end{minipage}\hfill
\begin{minipage}{0.33\linewidth}
    \includegraphics[scale=0.104]{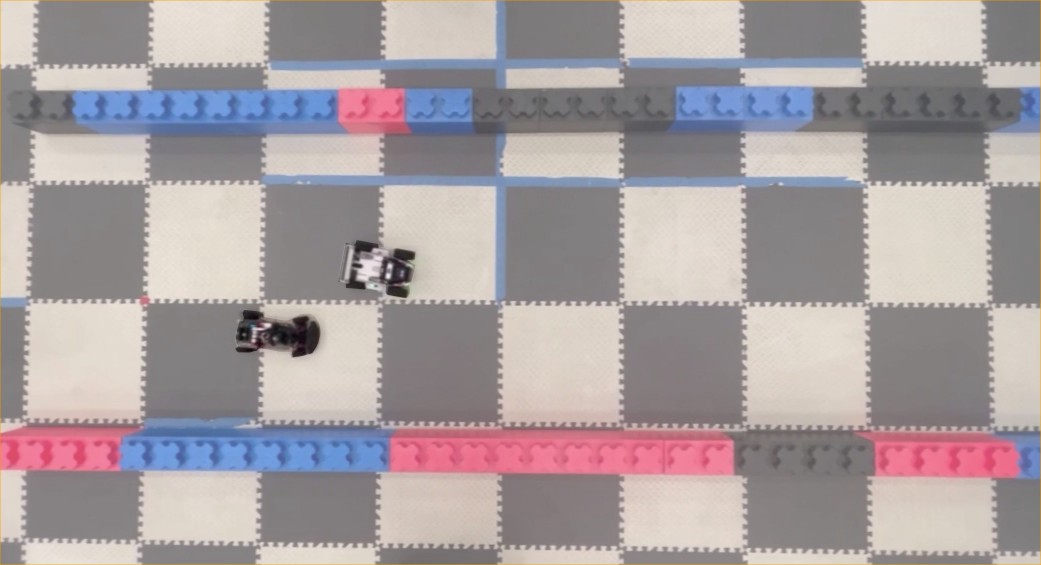}
\end{minipage}\hfill
\begin{minipage}{0.33\linewidth}
    \includegraphics[scale=0.104]{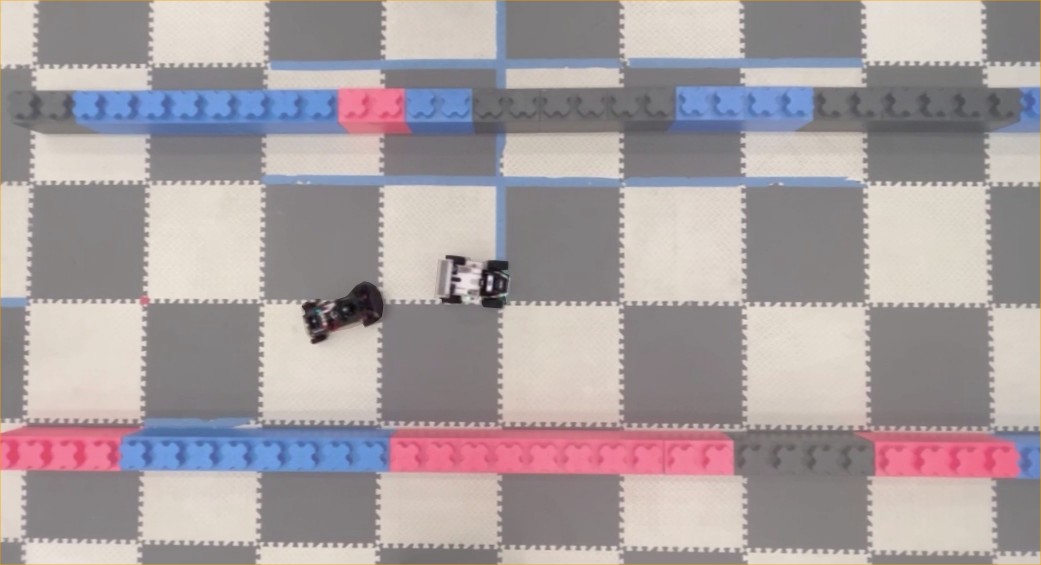}
\end{minipage}

\vspace{0.1cm}

\begin{minipage}{0.33\linewidth}
    \includegraphics[scale=0.104]{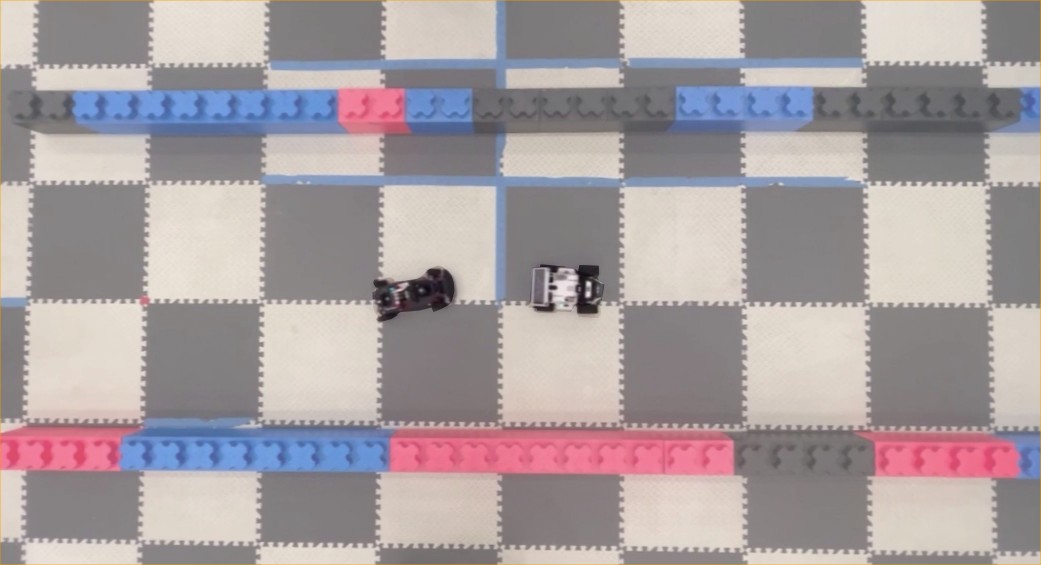}
\end{minipage}\hfill
\begin{minipage}{0.33\linewidth}
    \includegraphics[scale=0.104]{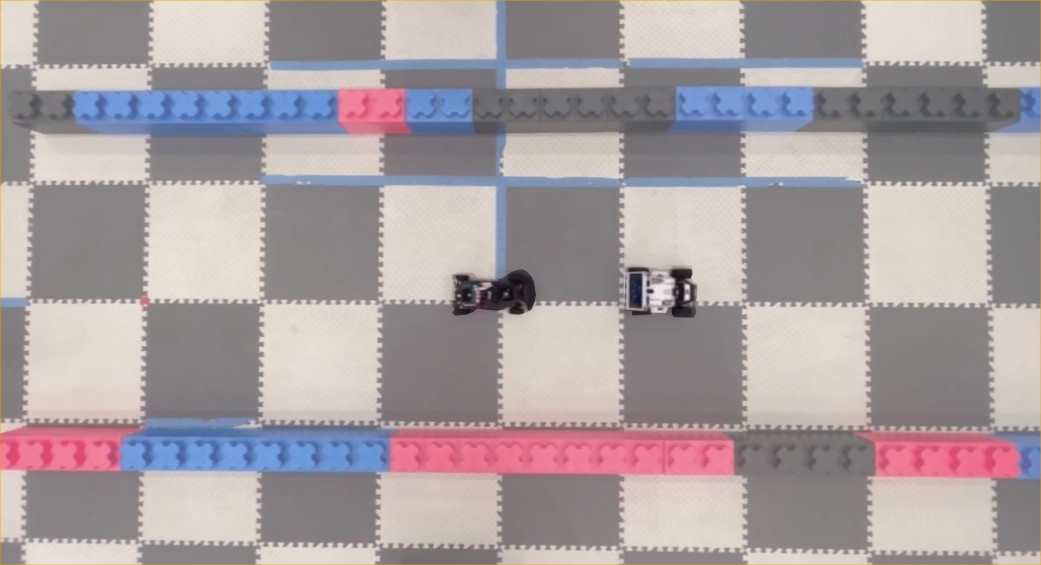}
\end{minipage}\hfill
\begin{minipage}{0.33\linewidth}
    \includegraphics[scale=0.104]{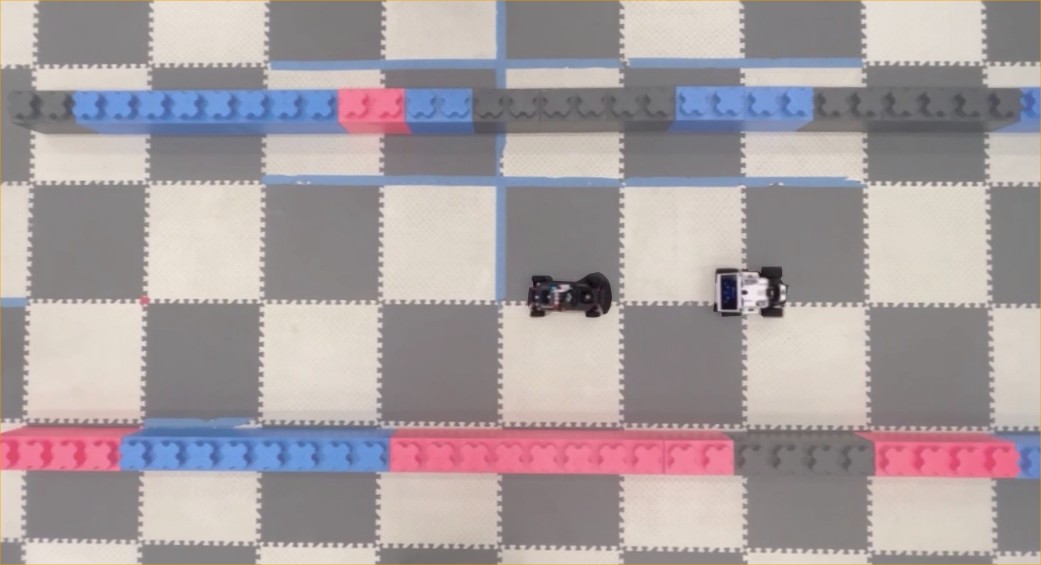}
\end{minipage}
\caption{Snapshots of real-world experiments for the merging scenario under the case when the OA is a fast agent. In this case, the OA merges into the lower lane in front of the EA.}
\label{fig:realm2}
\end{figure}

For the intersection scenario, the settings are shown in Fig. \ref{fig:settings}. We enumerate all possible combinations of the types of OAs. The computation time against the number of planning cycles is plotted in Fig. \ref{fig:realtimes}. From these curves, we conclude that the computation time of the BNE strategy is roughly 0.05\,s in average, featuring real-time performance. Meanwhile, the experimental results are shown in the snapshots in Figs. \ref{fig:real1}-\ref{fig:real4}. It can be seen from these snapshots that by performing the BNE strategy, the EA manages to handle different situations. Specifically, when the OA is a fast-type agent, the EA passes through the intersection from behind it; on the other hand, when the OA is a slow-type agent, the EA passes through the intersection from the front of it. A notable fact observed by comparing the first snapshot of the four different cases is that in all cases, the EA swerves to the right at the beginning, which is part of the BNE strategy given the common initial conditions and the initial beliefs over the intention of OAs in all cases. This maneuver is optimal in the sense that it steers the EA away from OA1, which is closer to the EA, creating space for collision avoidance and safety margins to properly react to different intentions of OAs. As a result, EA manages to pass the intersection without collisions in all cases. 

In conclusion, the above experiments show that the proposed method is deployable to real-world systems, and its computational efficiency meets the real-time requirement of the trajectory planning task with uncertainties in highly interactive scenarios.

\begin{figure}[!htb]
\centering
\includegraphics[scale=0.22]{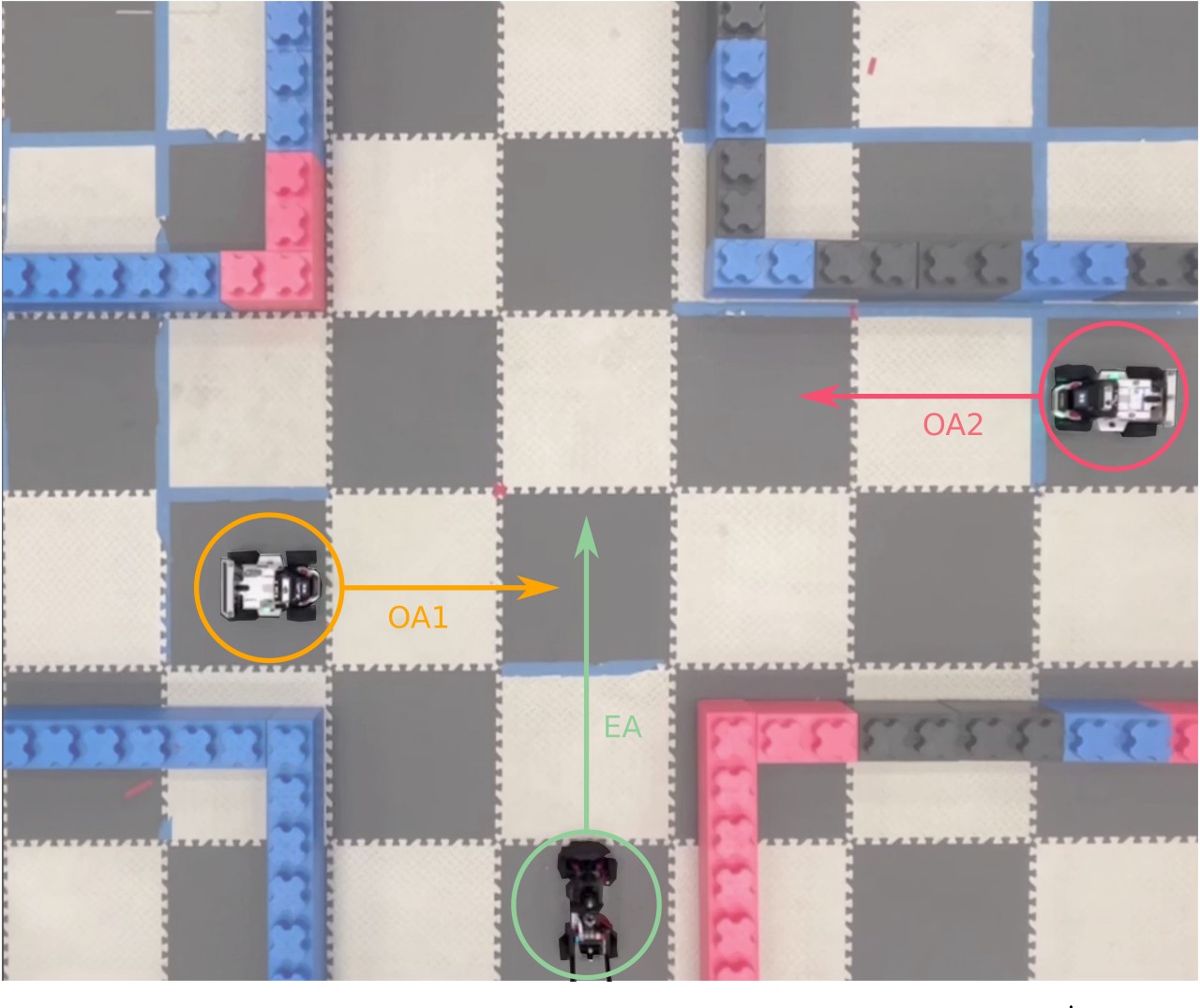}
\caption{Settings of the real-world experiments for the intersection scenario. The EA, the OA1, and the OA2 are crossing the intersection from bottom to top, from left to right, and from right to left, respectively. The reference speed of the EA is 0.15\,m/s. Each OA is set to be either a fast-type player or a slow-type player. The fast-type OA aims to track a longitudinal velocity of 0.18\,m/s, while the slow-type OA aims to track a longitudinal velocity of 0.12\,m/s. Equal prior probabilities are assumed for all types.}

\label{fig:settings}
\end{figure}

\begin{figure}[t]
\centering
\includegraphics[scale=0.37]{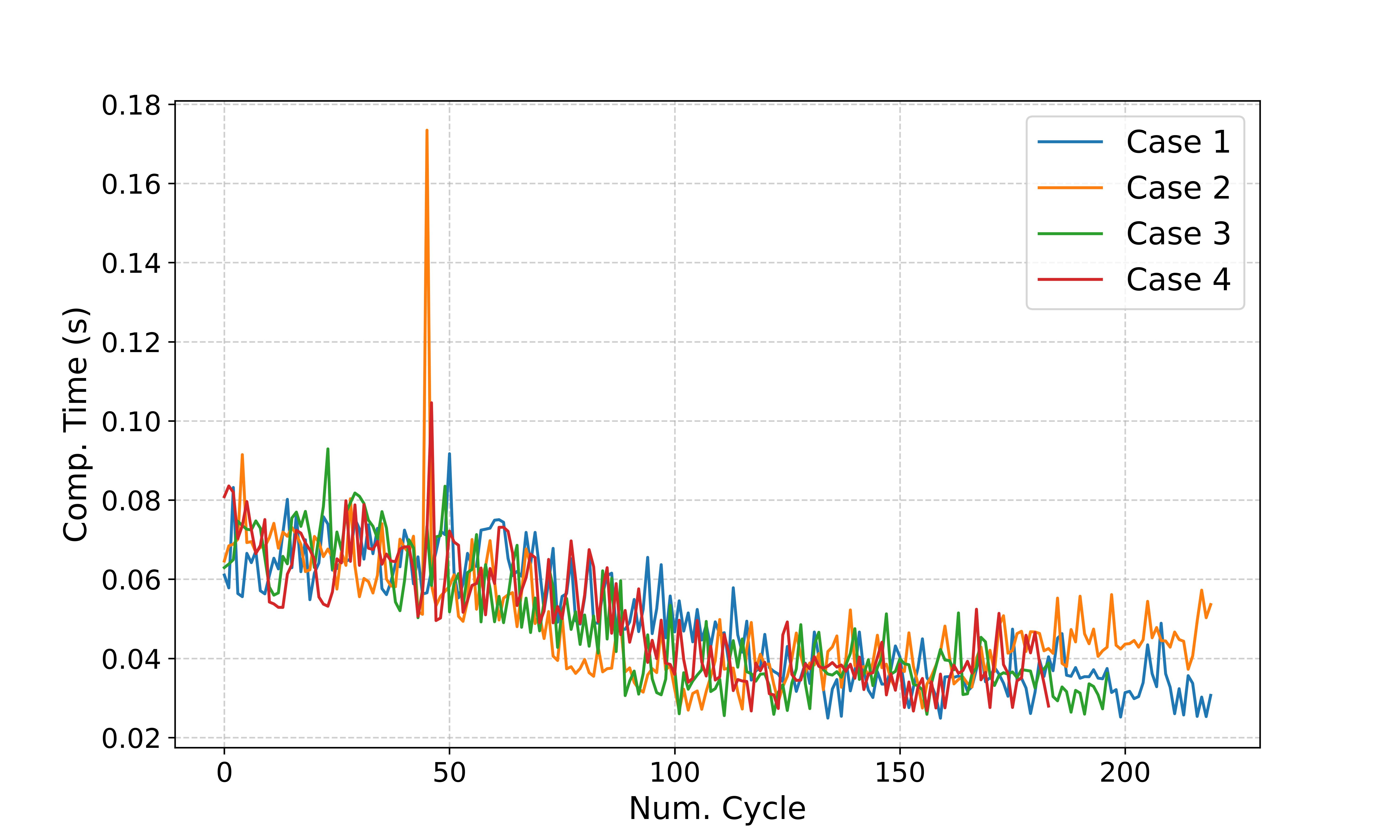}

\caption{Computation time in real-world experiments for the intersection scenario. In particular, Case 1 refers to the case where both OAs are slow agents; Case 2 refers to the case where OA1 is slow and OA2 is fast; Case 3 refers to the case where OA1 is fast and OA2 is slow; Case 4 refers to the case where both OAs are fast.}

\label{fig:realtimes}
\end{figure}

% real exp1
\begin{figure}[!htb]
\centering
\begin{minipage}{0.33\linewidth}
    \includegraphics[scale=0.085]{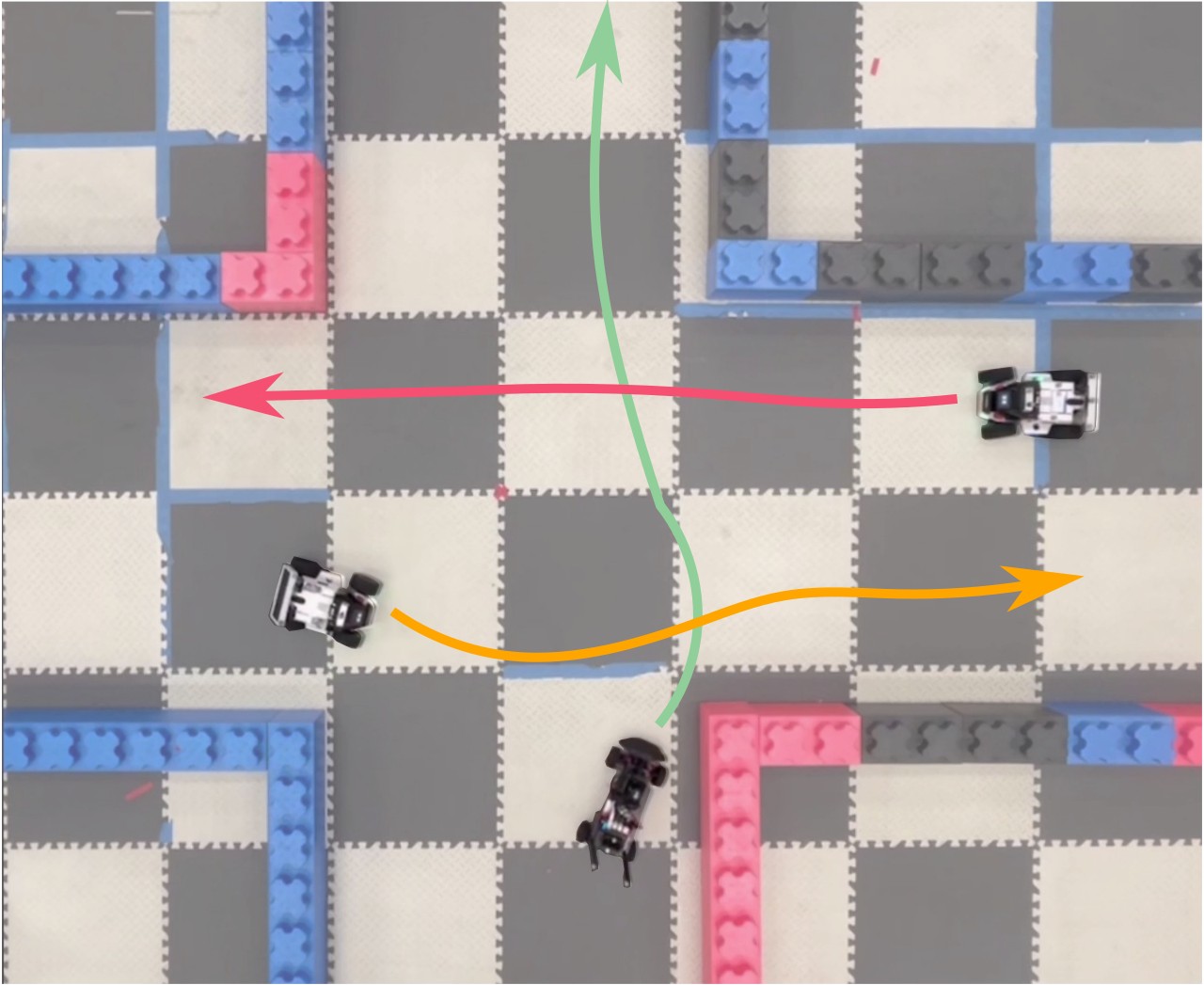}
\end{minipage}\hfill
\begin{minipage}{0.33\linewidth}
    \includegraphics[scale=0.085]{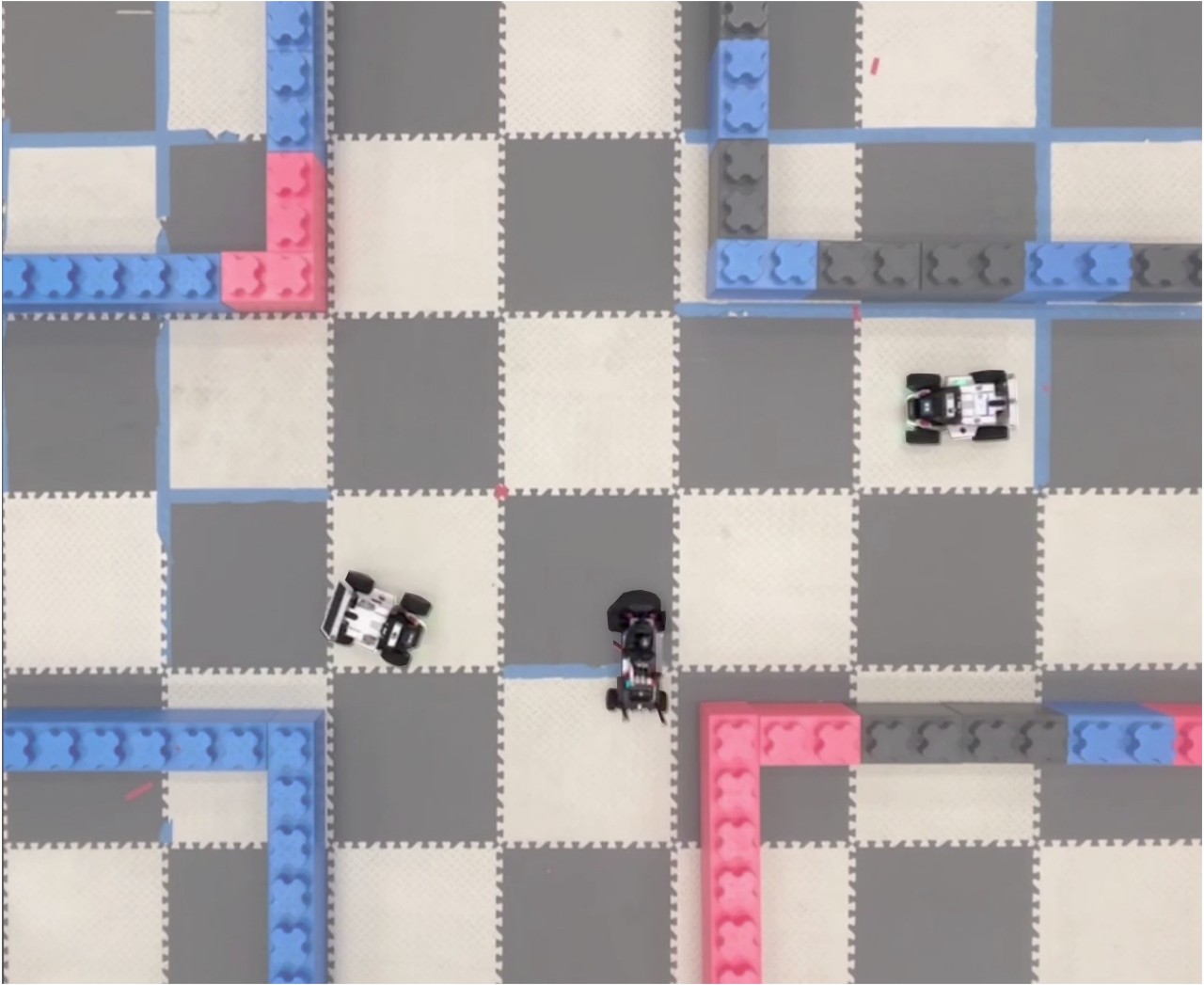}
\end{minipage}\hfill
\begin{minipage}{0.33\linewidth}
    \includegraphics[scale=0.085]{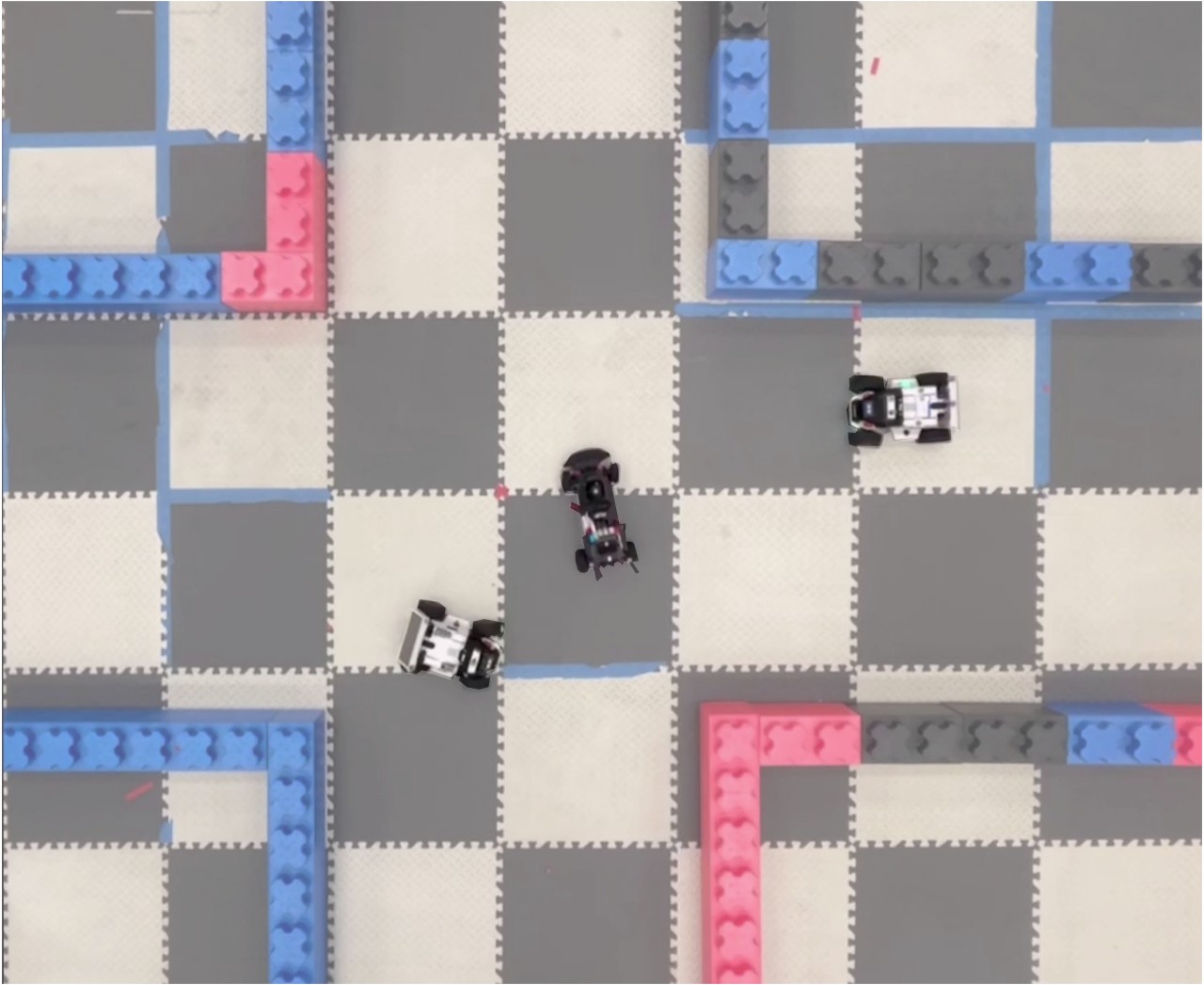}
\end{minipage}

\vspace{0.1cm}

\begin{minipage}{0.33\linewidth}
    \includegraphics[scale=0.085]{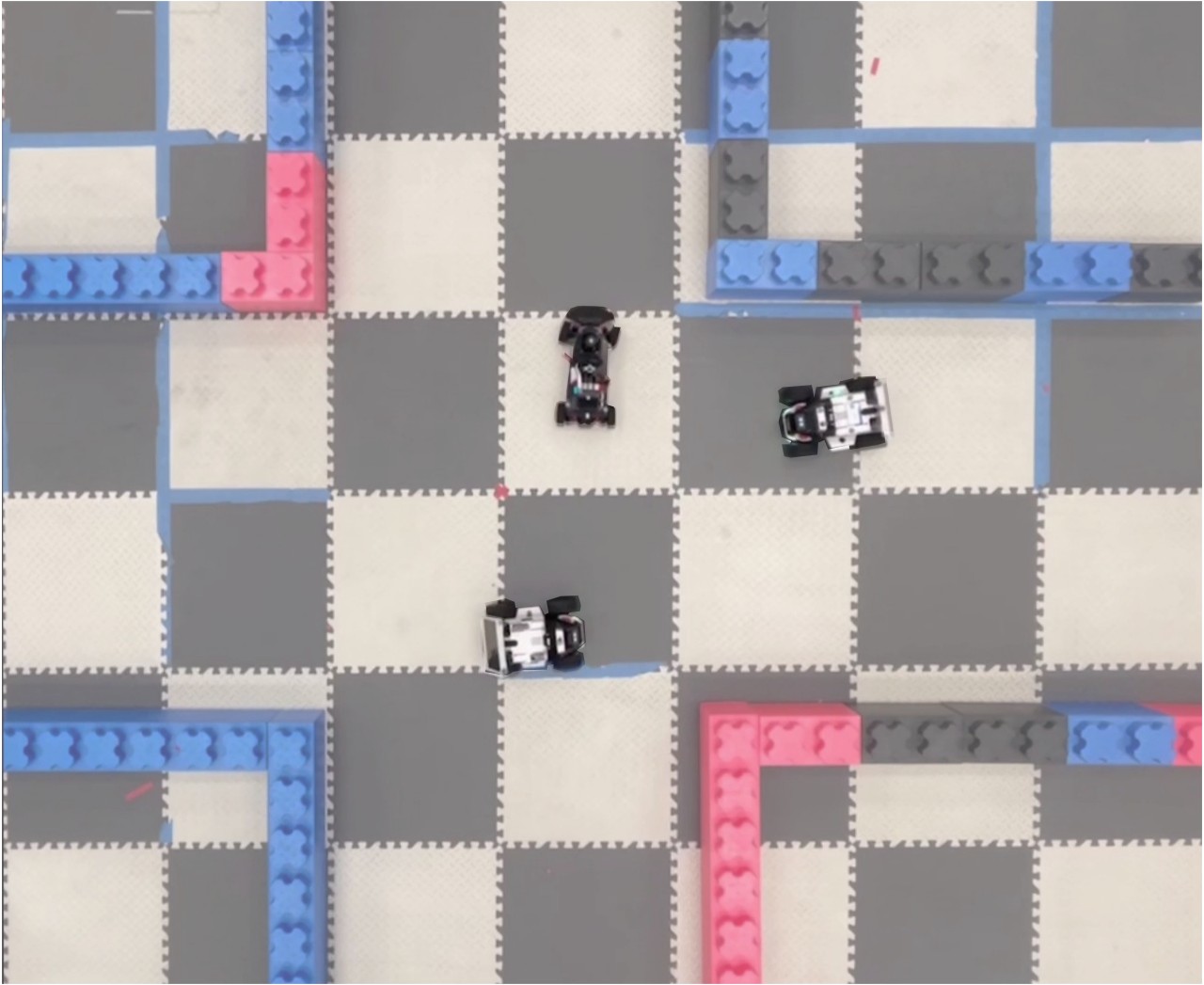}
\end{minipage}\hfill
\begin{minipage}{0.33\linewidth}
    \includegraphics[scale=0.085]{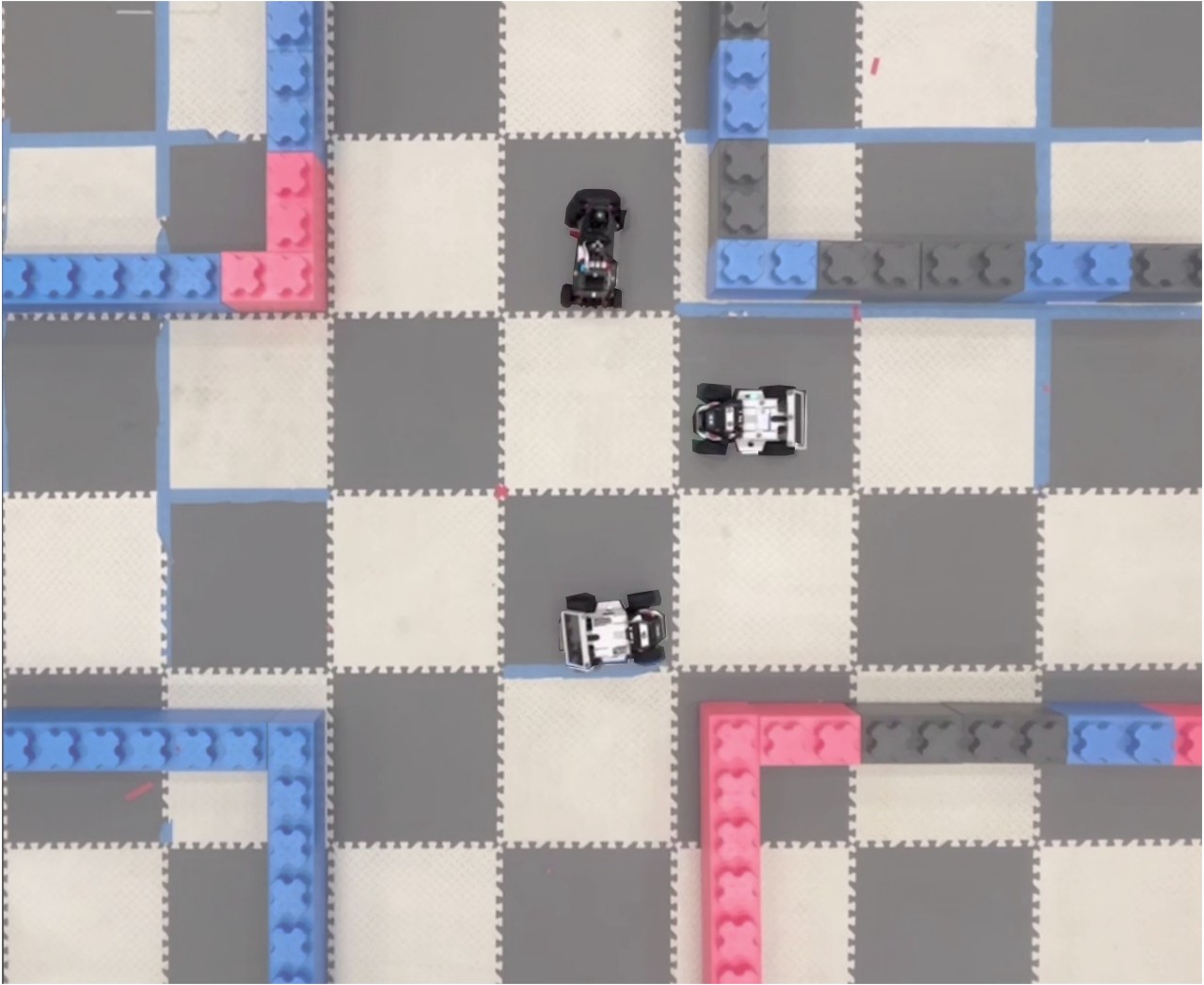}
\end{minipage}\hfill
\begin{minipage}{0.33\linewidth}
    \includegraphics[scale=0.085]{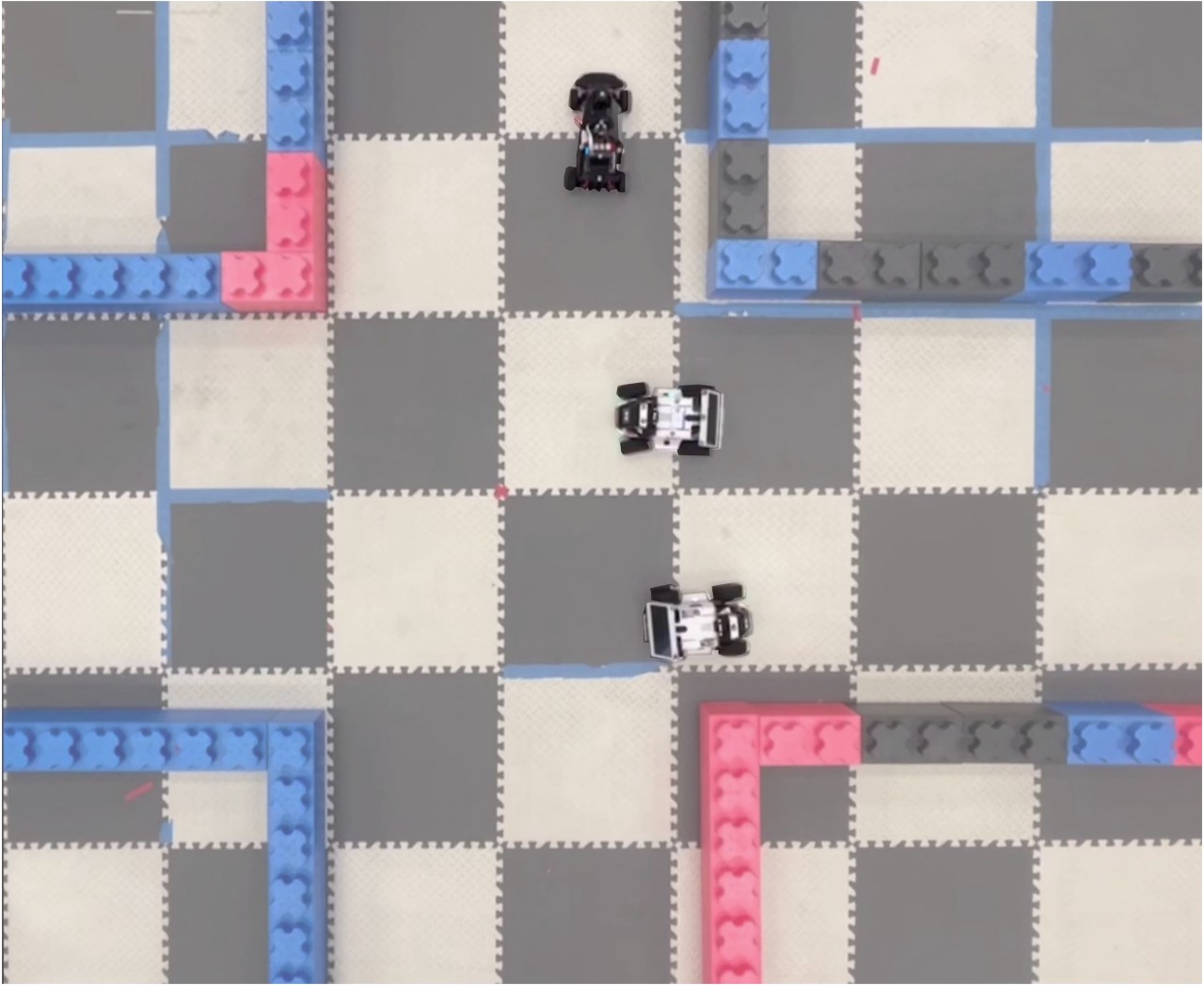}
\end{minipage}
\caption{Snapshots of real-world experiments for the intersection scenario under the case when both OA1 and OA2 are slow agents. In this case, the EA manages to pass through the intersection in front of both OAs.}
\label{fig:real1}
\end{figure}

% real exp2
\begin{figure}[!htb]
\centering
\begin{minipage}{0.33\linewidth}
    \includegraphics[scale=0.085]{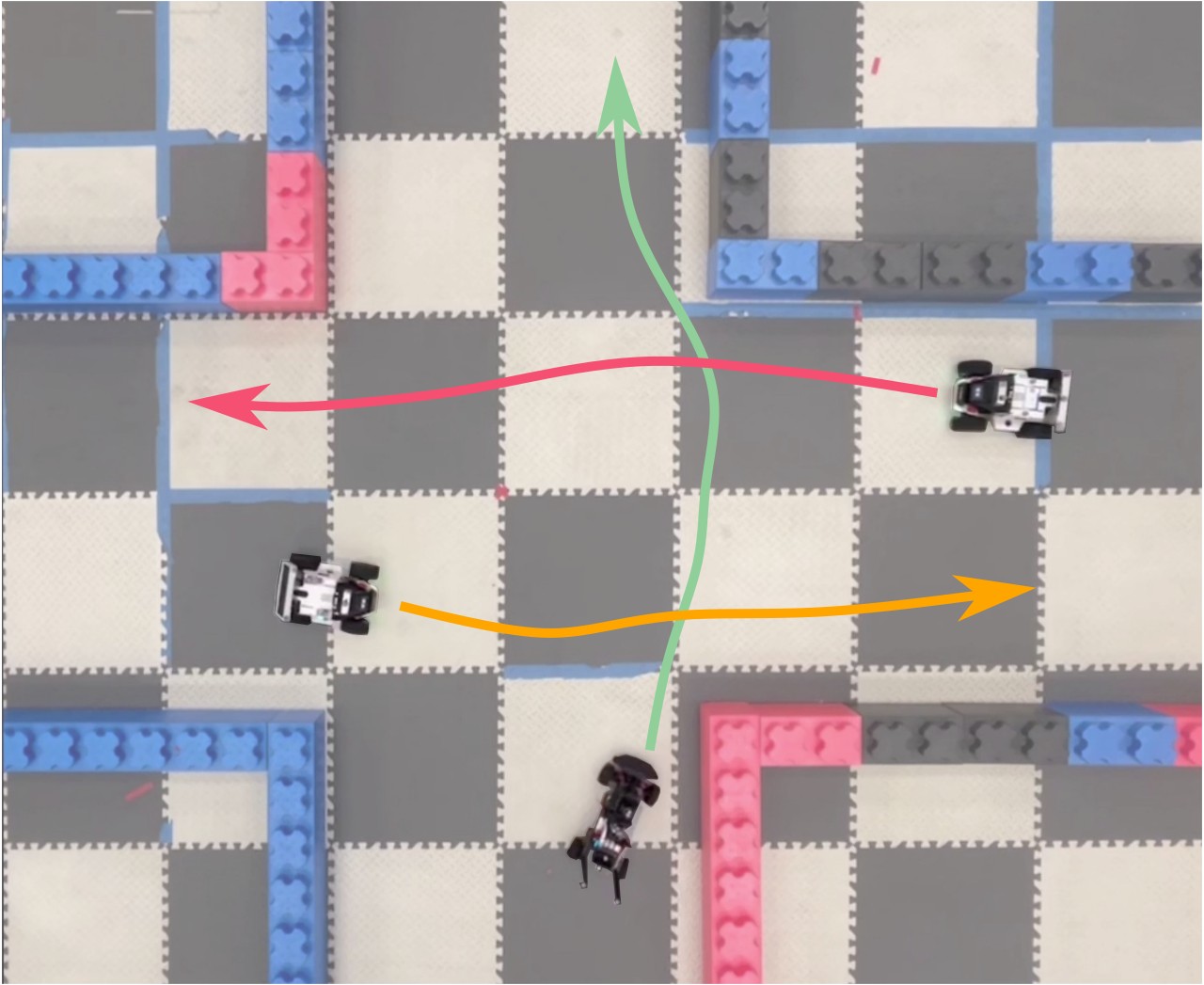}
\end{minipage}\hfill
\begin{minipage}{0.33\linewidth}
    \includegraphics[scale=0.085]{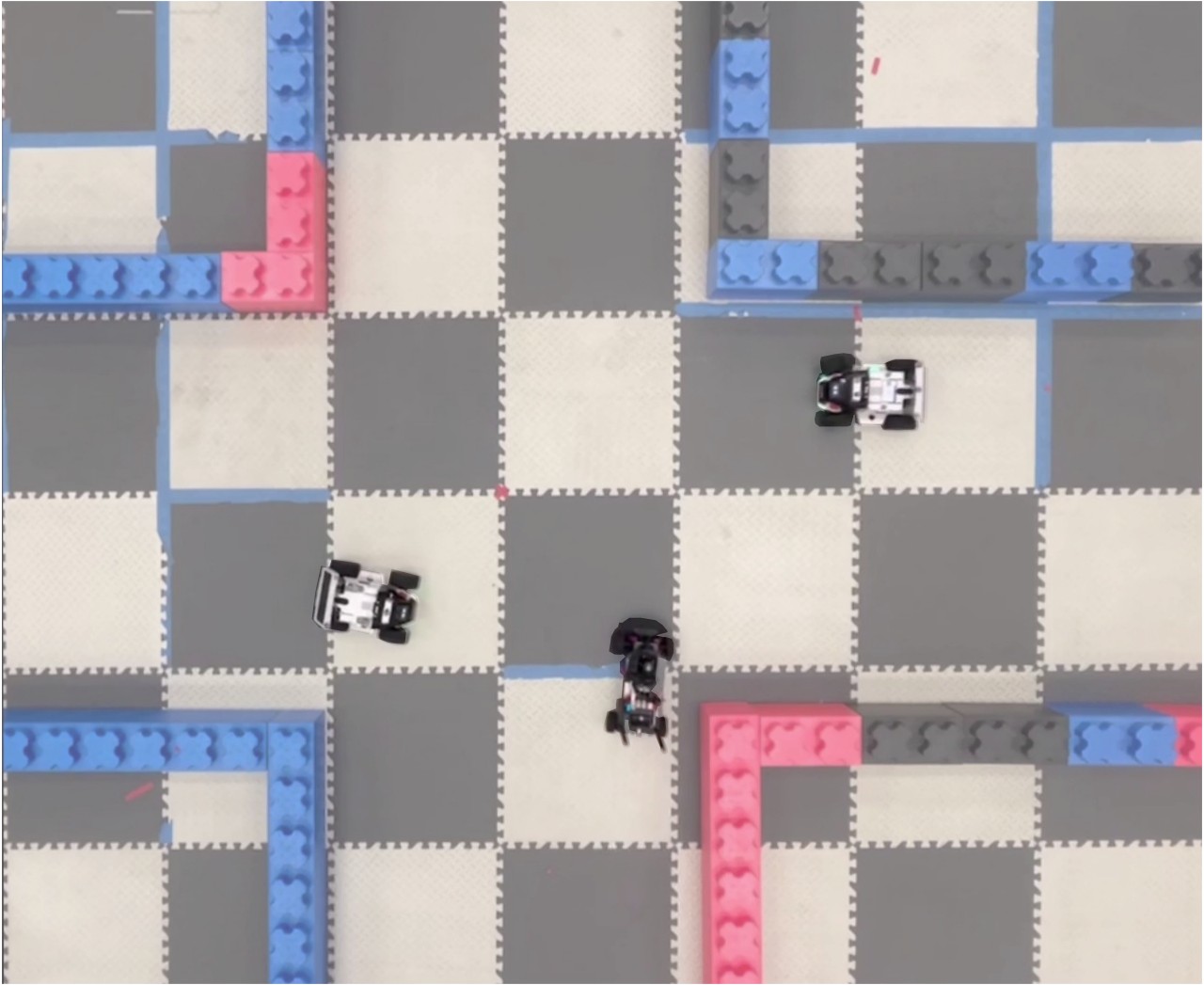}
\end{minipage}\hfill
\begin{minipage}{0.33\linewidth}
    \includegraphics[scale=0.085]{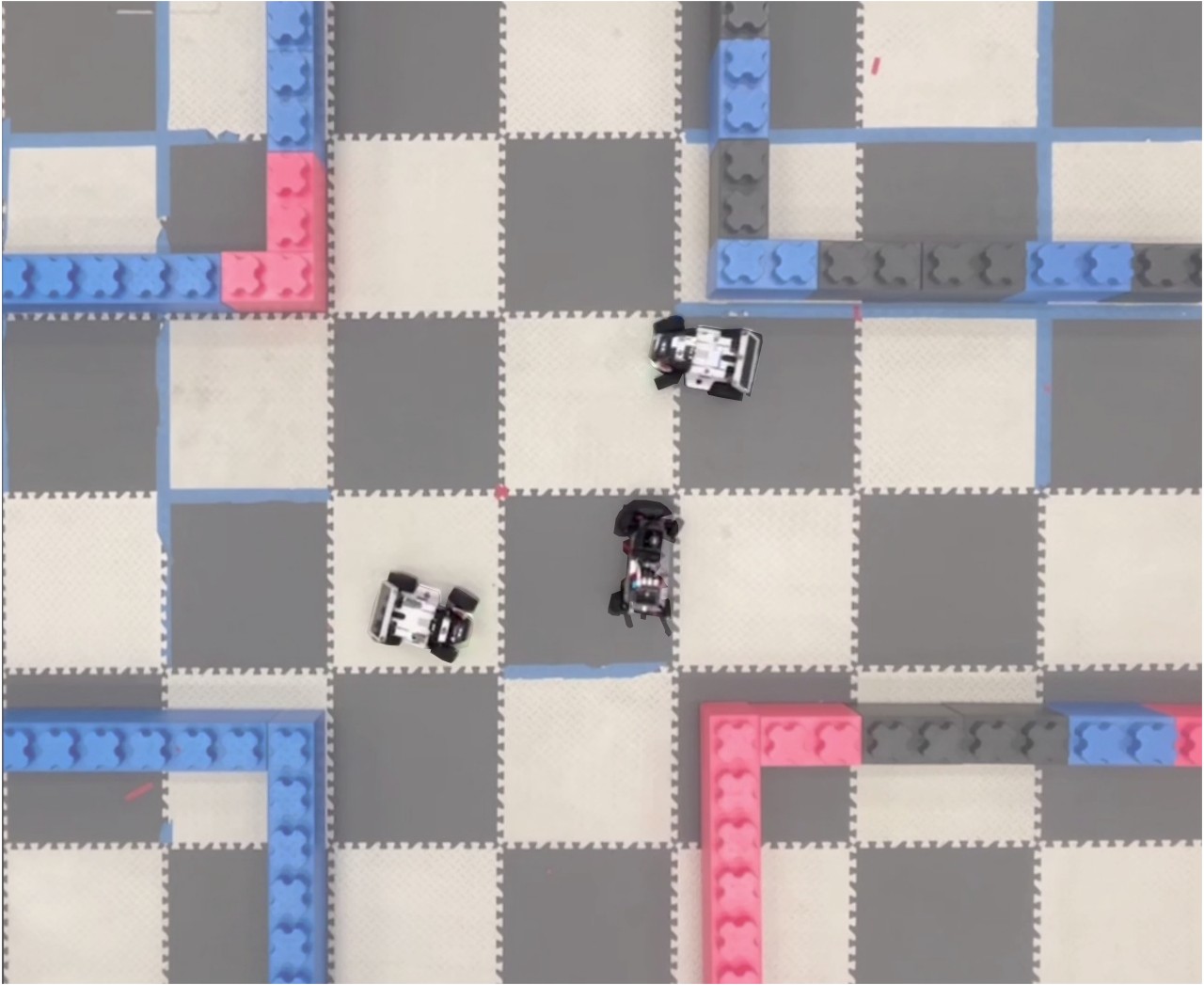}
\end{minipage}

\vspace{0.1cm}

\begin{minipage}{0.33\linewidth}
    \includegraphics[scale=0.085]{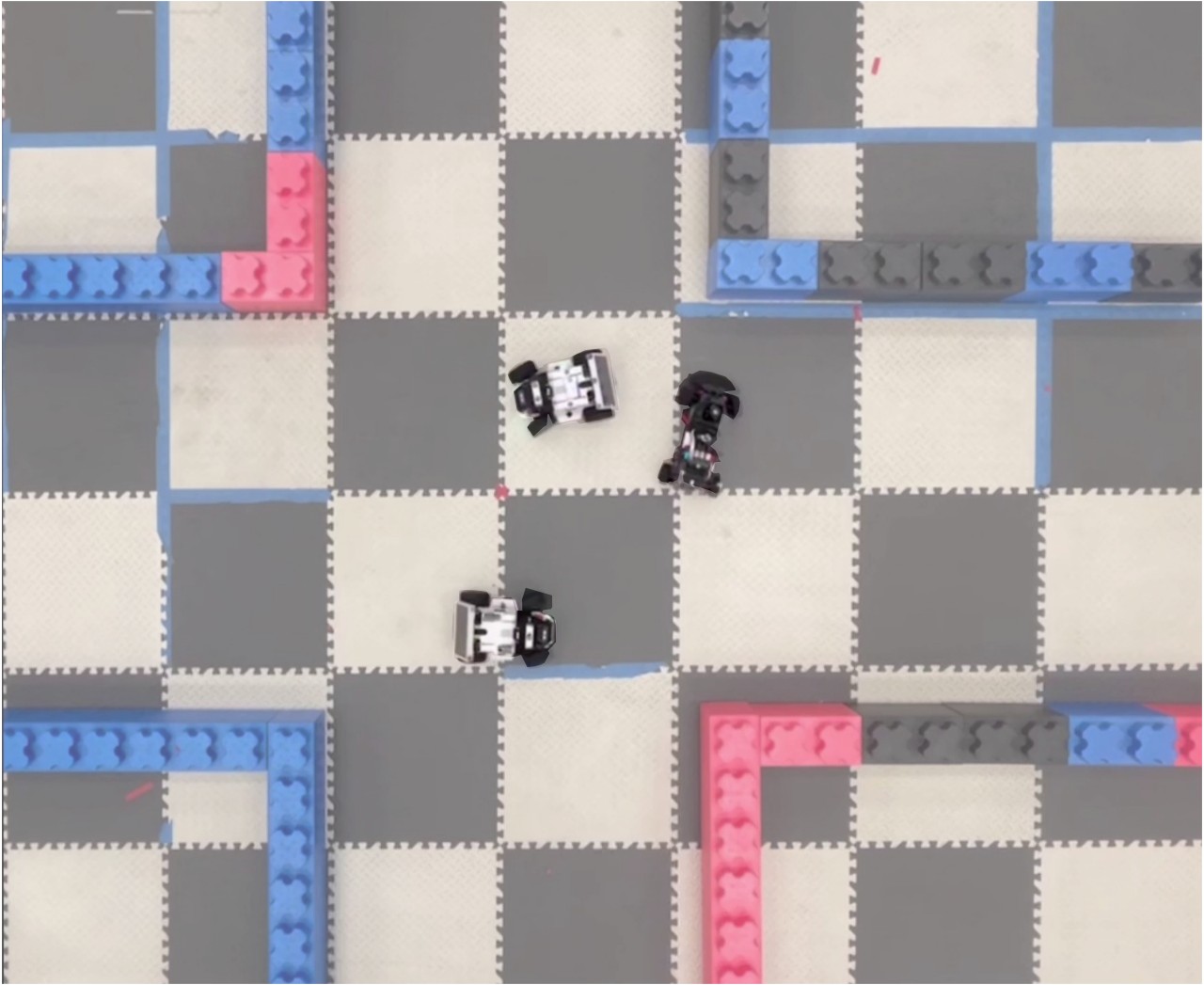}
\end{minipage}\hfill
\begin{minipage}{0.33\linewidth}
    \includegraphics[scale=0.085]{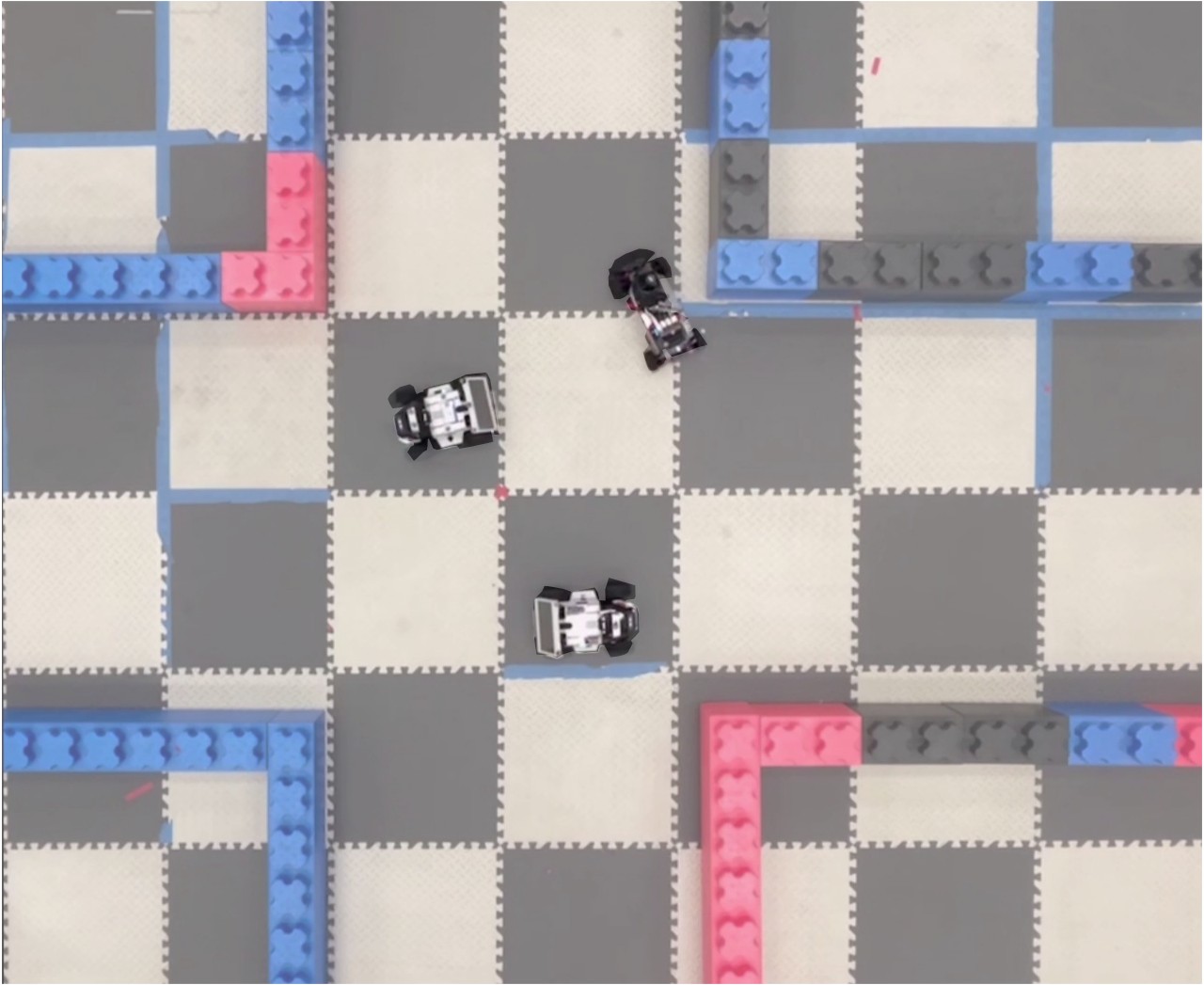}
\end{minipage}\hfill
\begin{minipage}{0.33\linewidth}
    \includegraphics[scale=0.085]{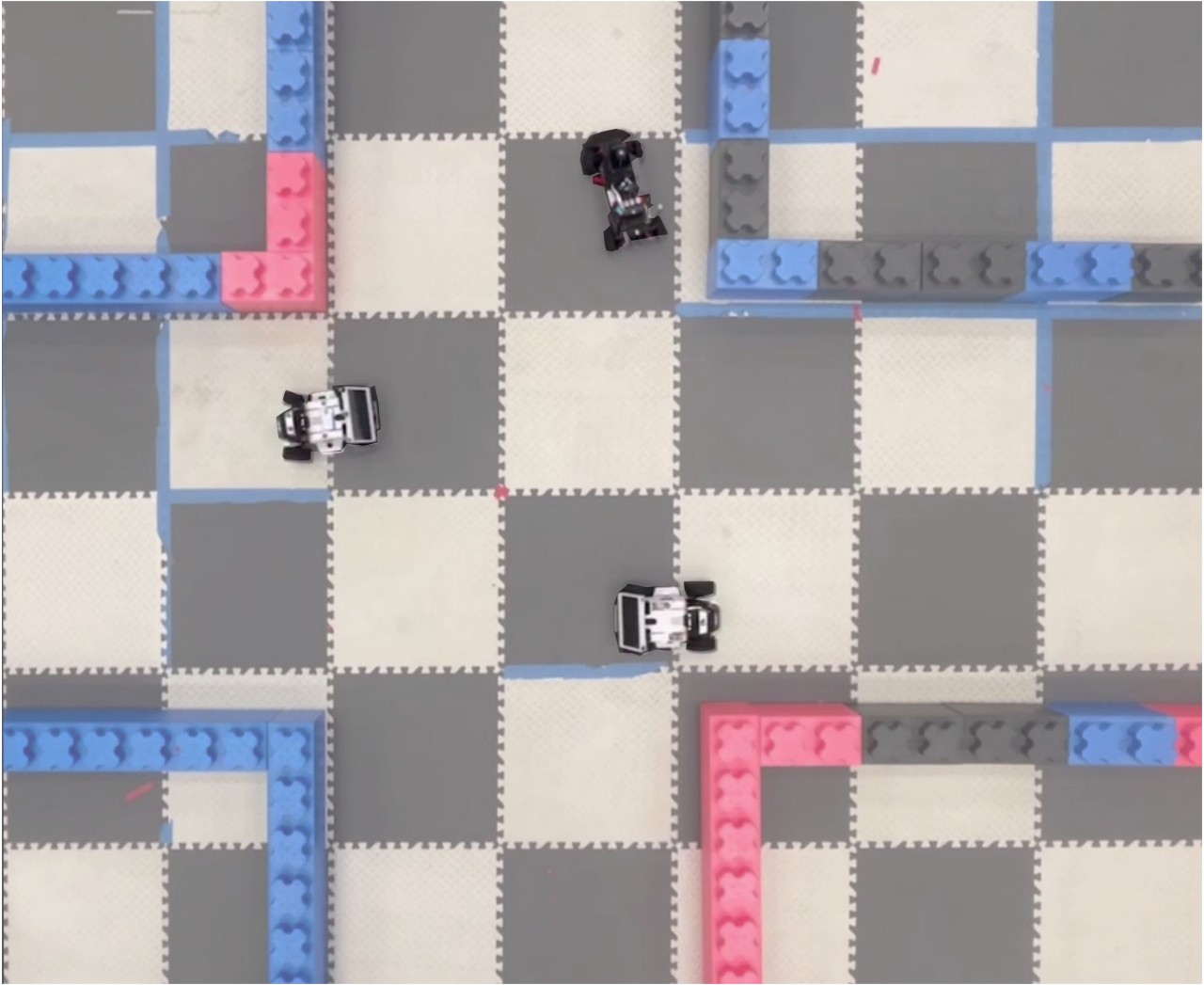}
\end{minipage}
\caption{Snapshots of real-world experiments for the intersection scenario under the case when OA1 is a slow agent and OA2 is a fast agent. In this case, the EA passes through the intersection in front of OA1 and from behind OA2.}
\label{fig:real2}
\end{figure}

% real exp3
\begin{figure}[!htb]
\centering
\begin{minipage}{0.33\linewidth}
    \includegraphics[scale=0.085]{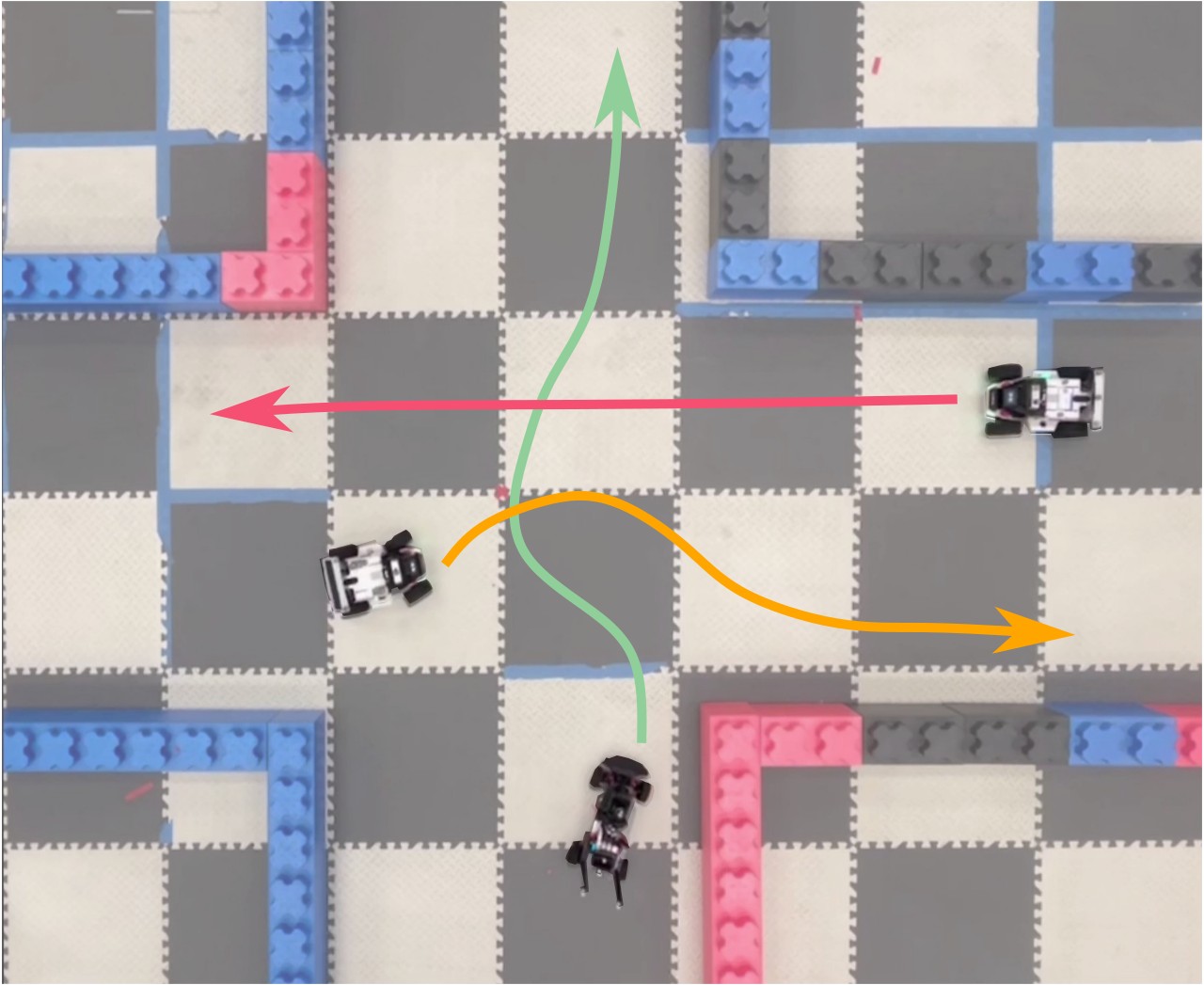}
\end{minipage}\hfill
\begin{minipage}{0.33\linewidth}
    \includegraphics[scale=0.085]{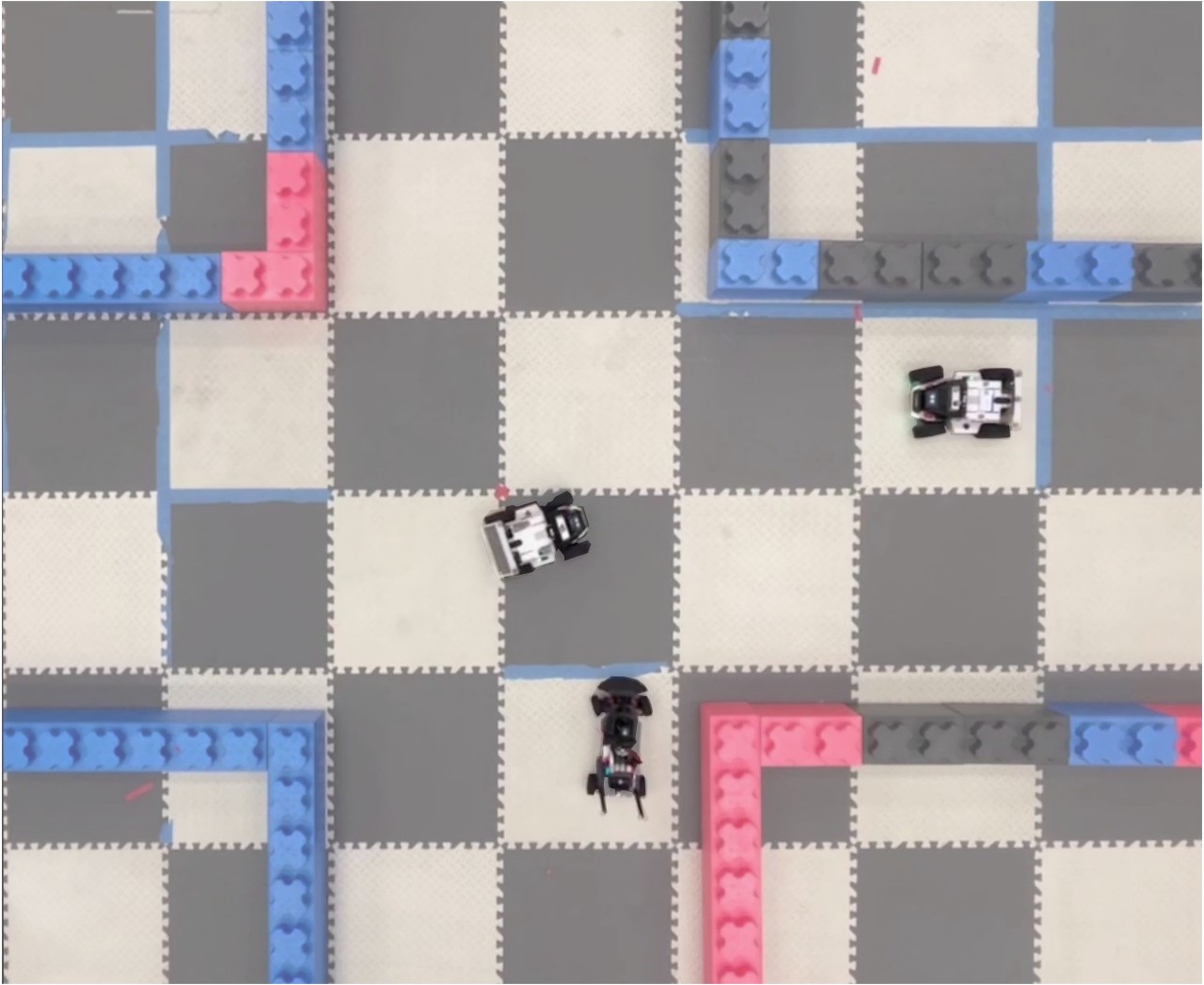}
\end{minipage}\hfill
\begin{minipage}{0.33\linewidth}
    \includegraphics[scale=0.085]{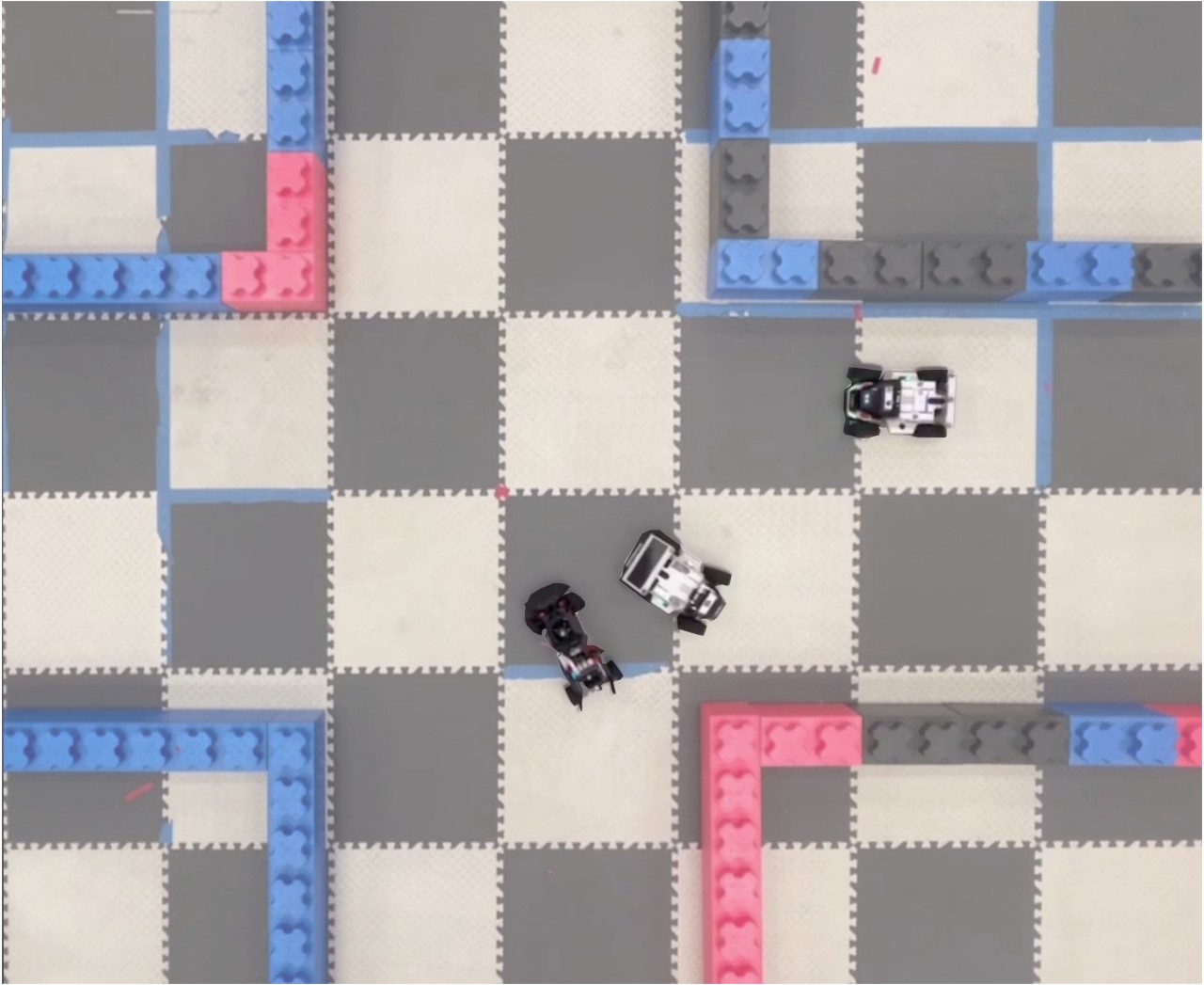}
\end{minipage}

\vspace{0.1cm}

\begin{minipage}{0.33\linewidth}
    \includegraphics[scale=0.085]{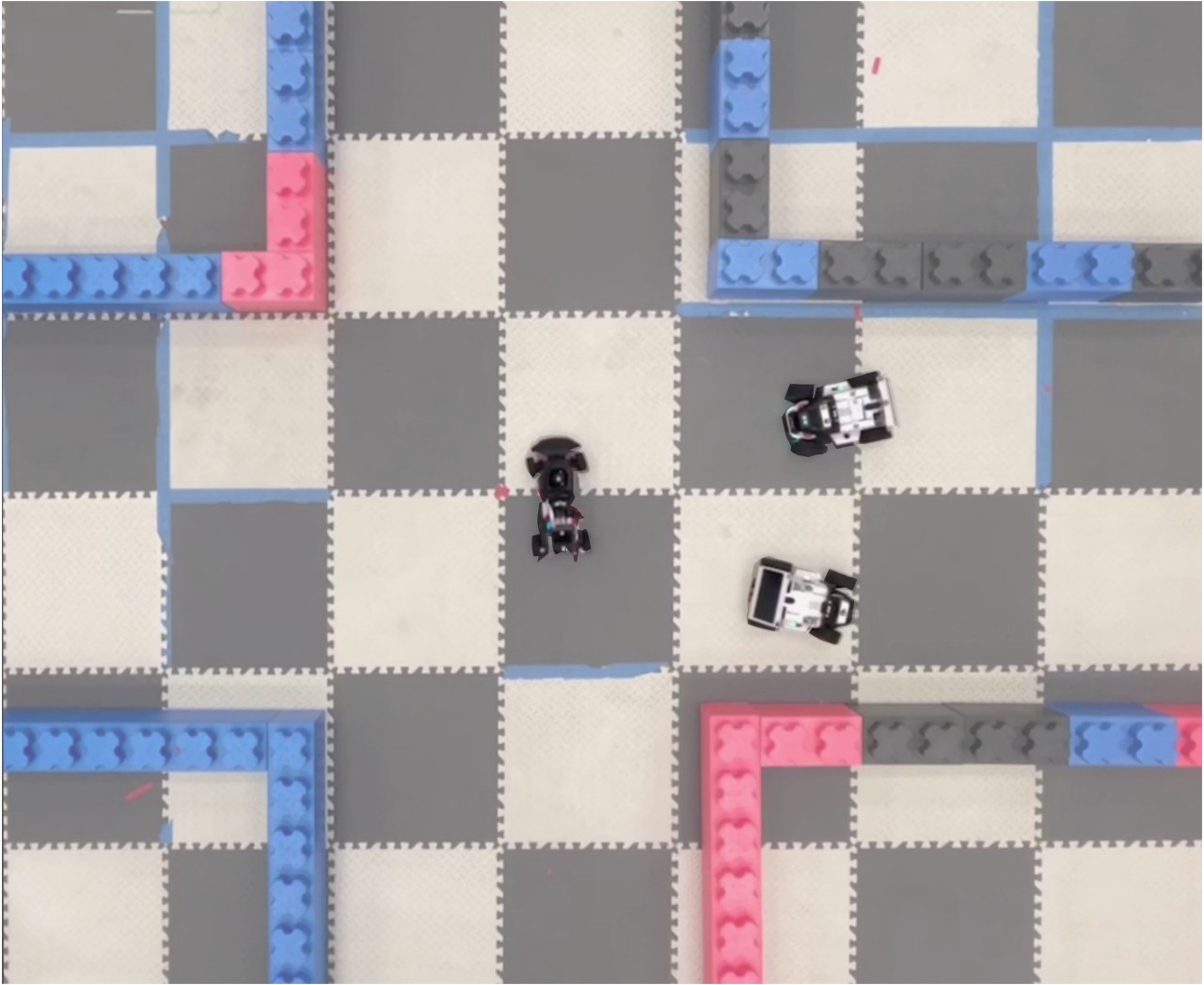}
\end{minipage}\hfill
\begin{minipage}{0.33\linewidth}
    \includegraphics[scale=0.085]{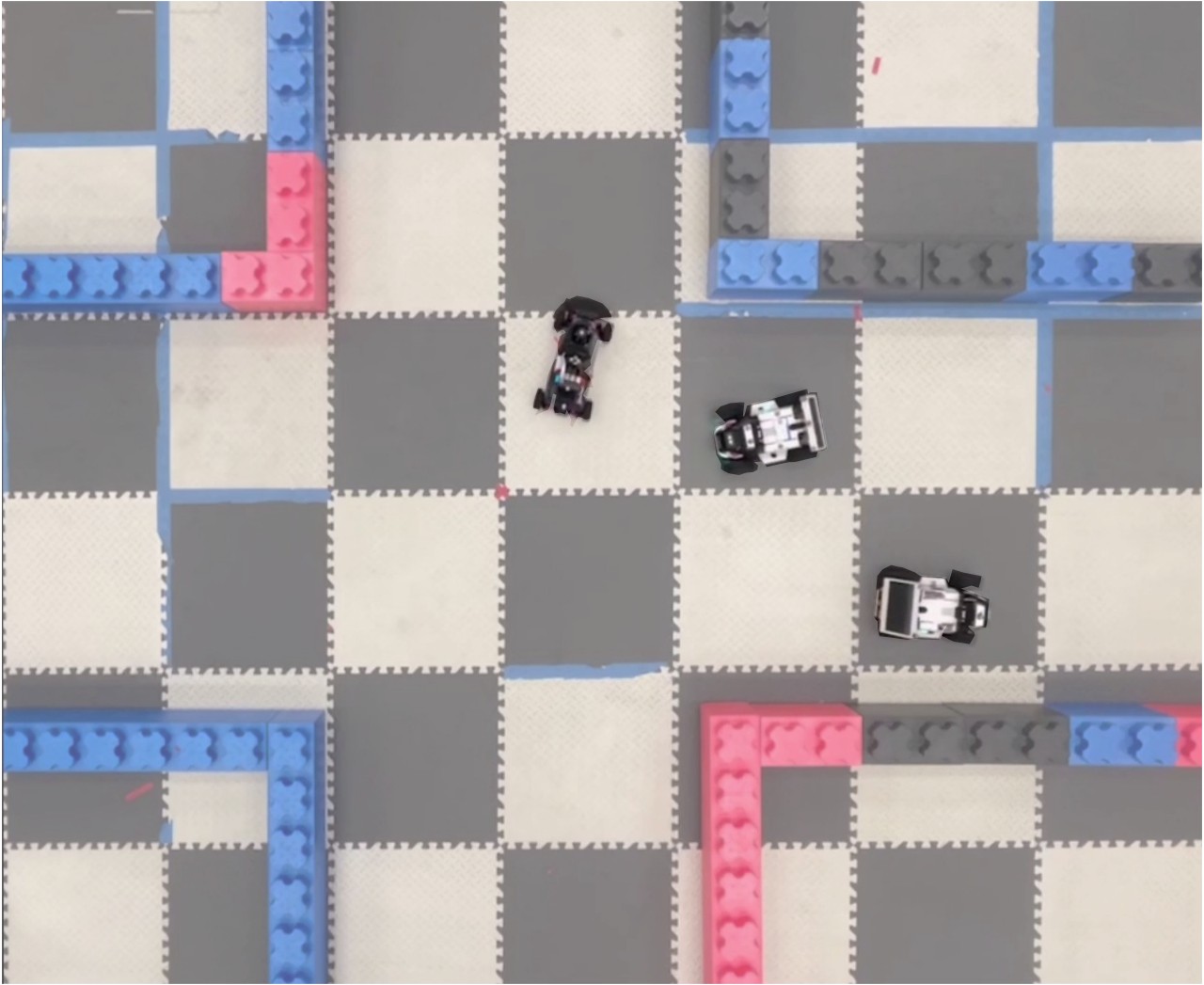}
\end{minipage}\hfill
\begin{minipage}{0.33\linewidth}
    \includegraphics[scale=0.085]{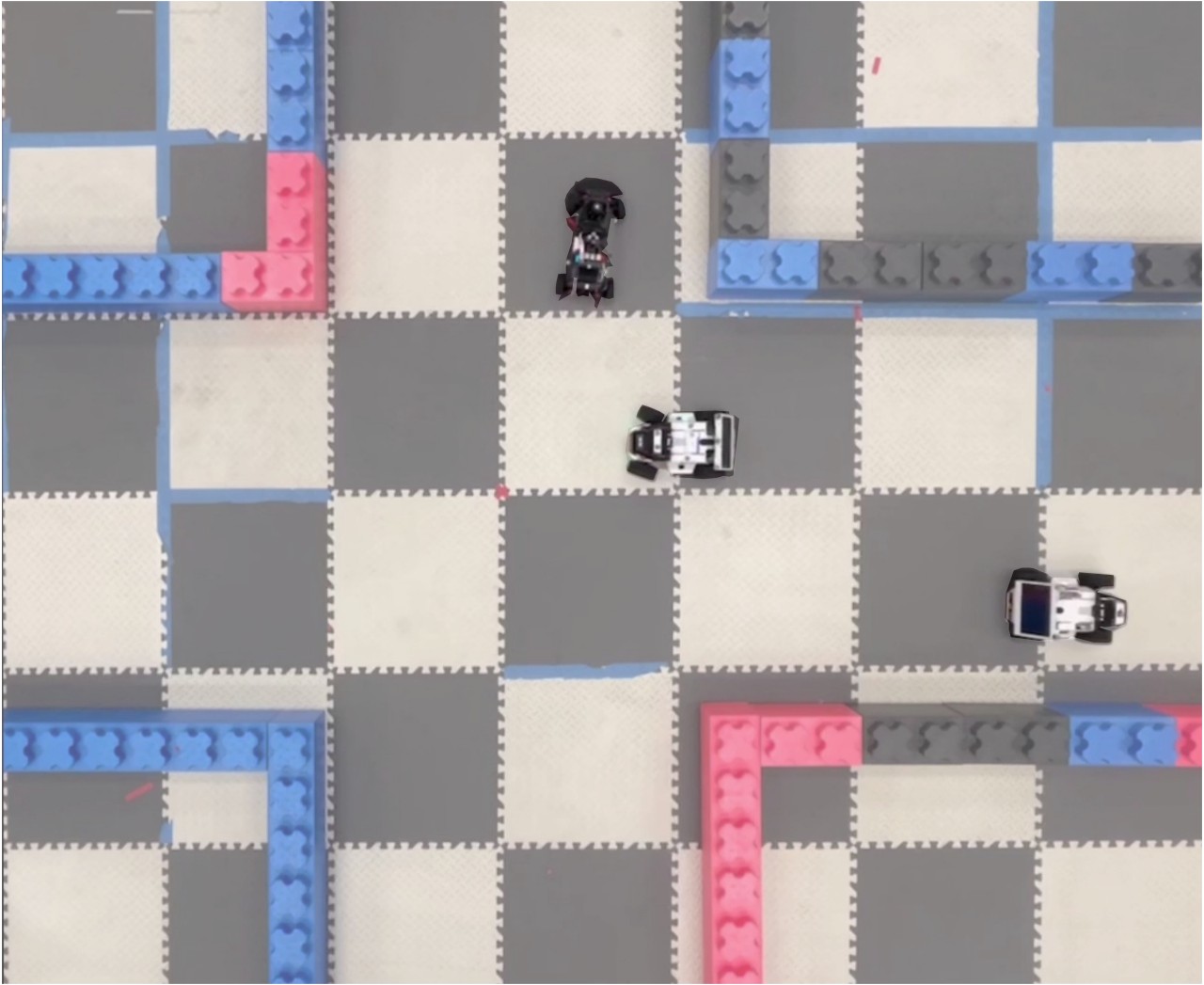}
\end{minipage}
\caption{Snapshots of real-world experiments for the intersection scenario under the case when OA1 is a fast agent and OA2 is a slow agent. In this case, the EA passes through the intersection from behind OA1 and in front of OA2.}
\label{fig:real3}
\end{figure}

% real exp4
\begin{figure}[!htb]
\centering
\begin{minipage}{0.33\linewidth}
    \includegraphics[scale=0.085]{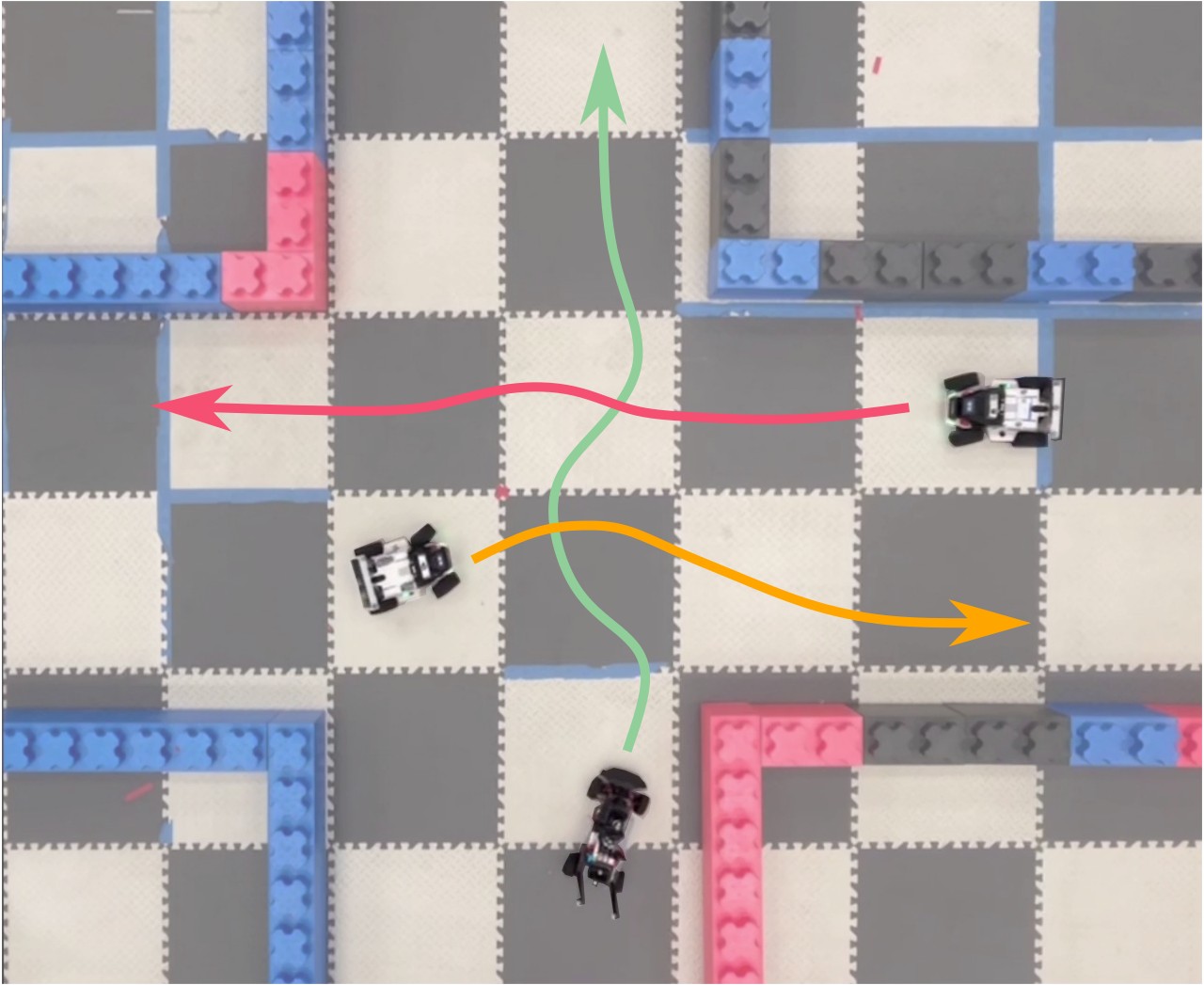}
\end{minipage}\hfill
\begin{minipage}{0.33\linewidth}
    \includegraphics[scale=0.085]{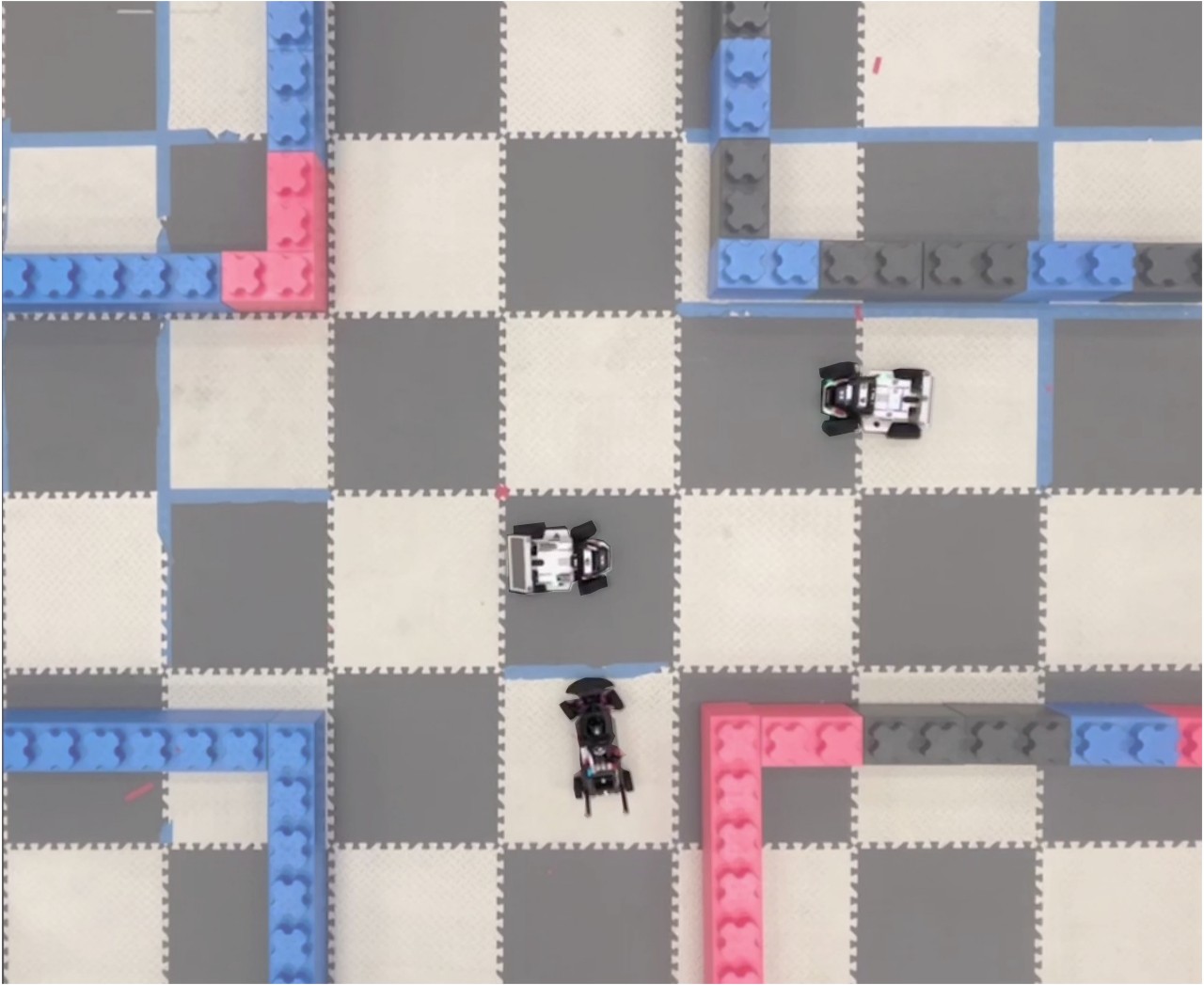}
\end{minipage}\hfill
\begin{minipage}{0.33\linewidth}
    \includegraphics[scale=0.085]{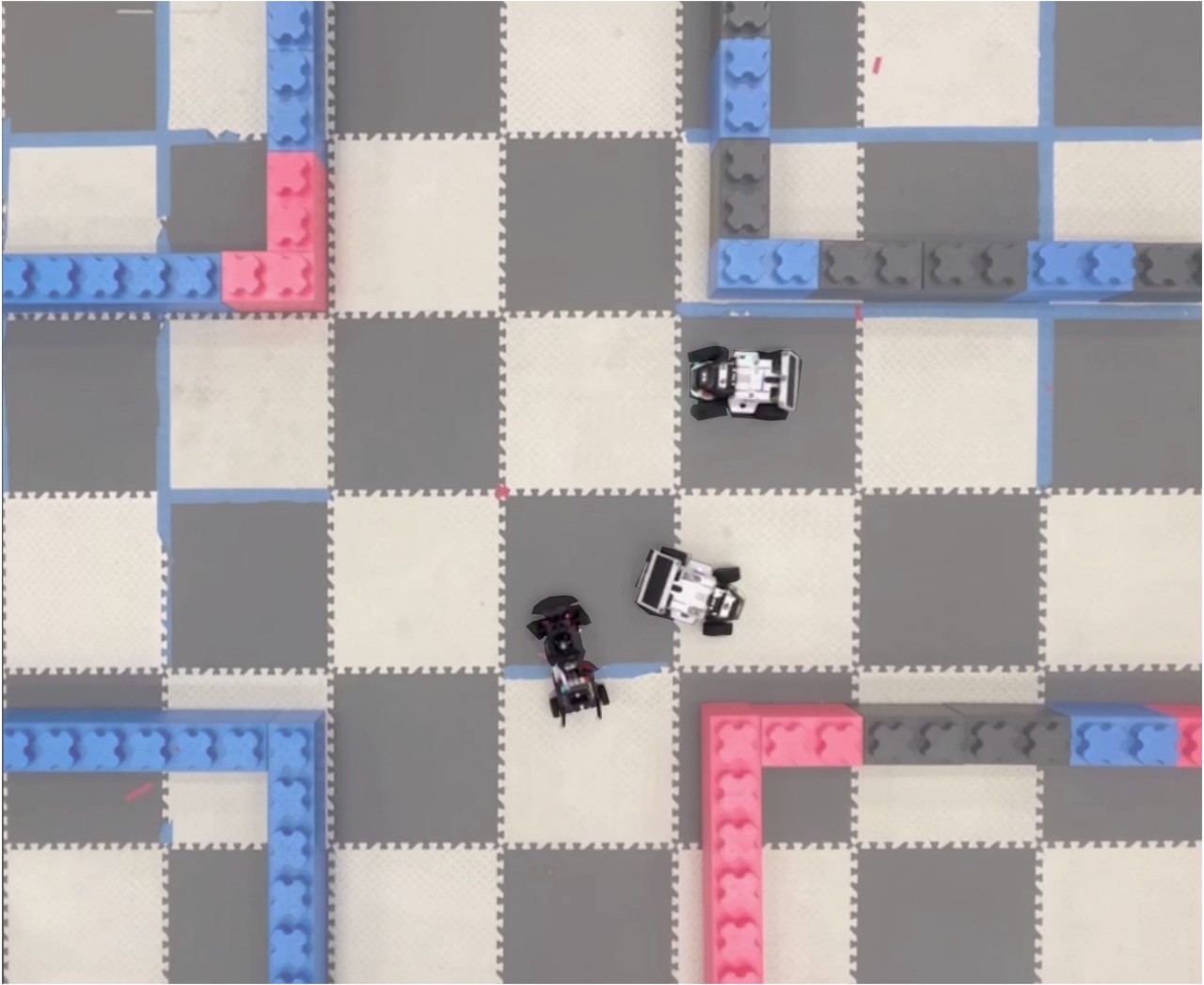}
\end{minipage}

\vspace{0.1cm}

\begin{minipage}{0.33\linewidth}
    \includegraphics[scale=0.085]{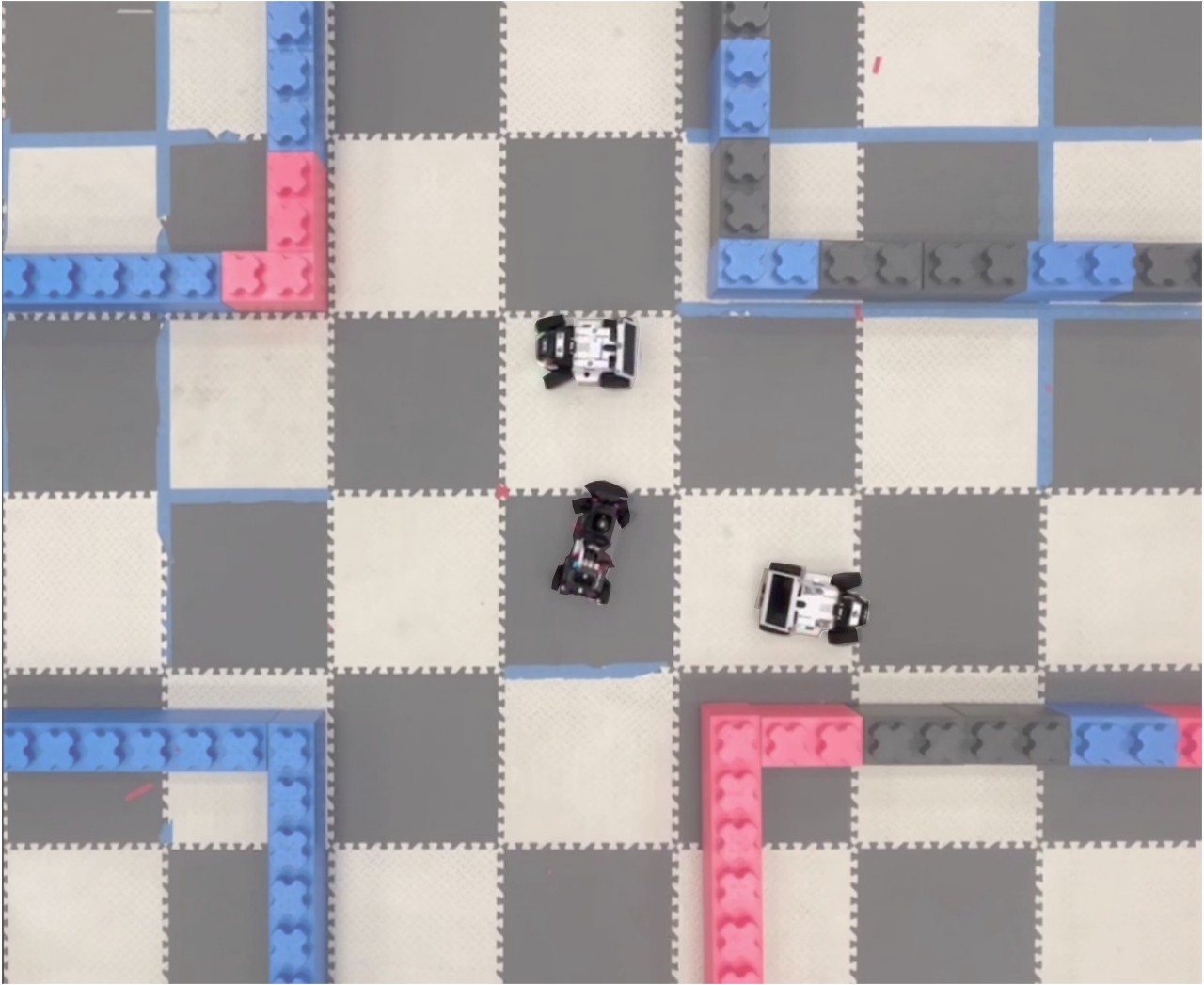}
\end{minipage}\hfill
\begin{minipage}{0.33\linewidth}
    \includegraphics[scale=0.085]{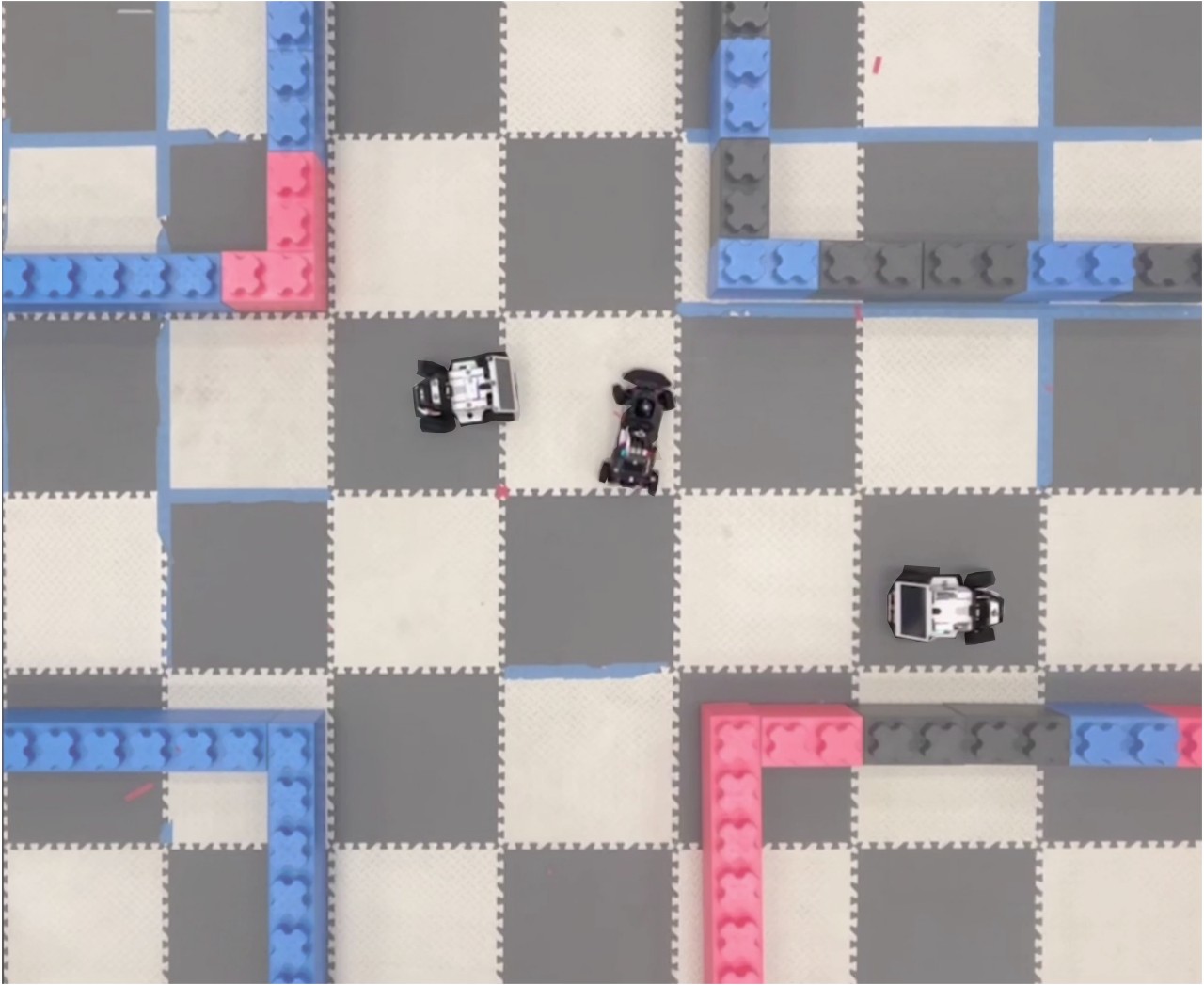}
\end{minipage}\hfill
\begin{minipage}{0.33\linewidth}
    \includegraphics[scale=0.085]{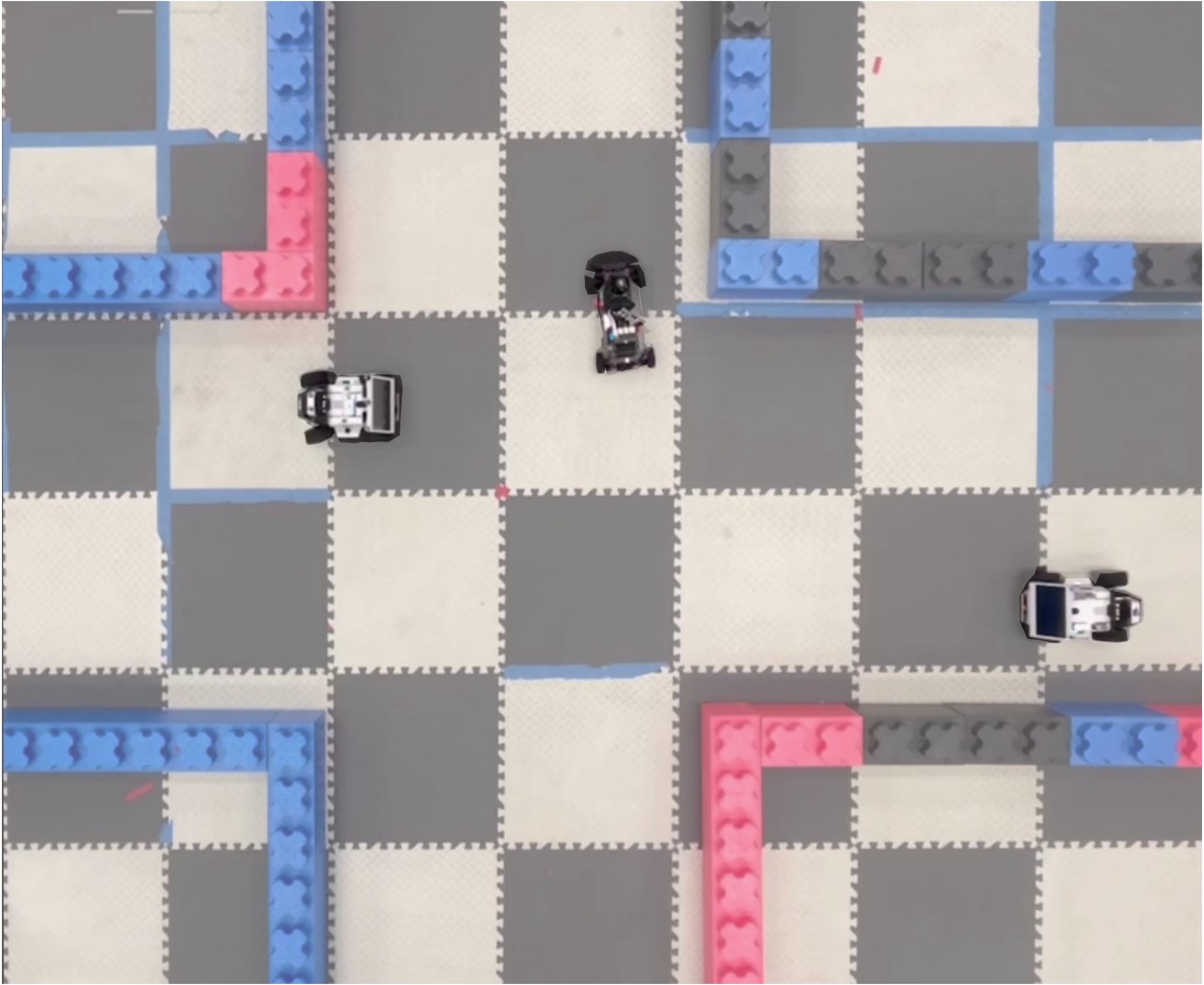}
\end{minipage}
\caption{Snapshots of real-world experiments for the intersection scenario under the case when both OA1 and OA2 are fast agents. In this case, the EA manages to pass through the intersection from behind both OAs.}
\label{fig:real4}
\end{figure}

\section{Conclusion}

In this work, we provide the key insight that the interactive trajectory planning problem with uncertainties, formulated as a Bayesian game, is essentially a potential game, which guarantees the existence and accessibility of the corresponding BNE by solving a unified optimization problem. With the potential game formulation, a novel optimization scheme is introduced to enable parallel solving of the game. Simulation and experimental results demonstrate the superiority of the proposed method in computational efficiency and scalability compared with existing baselines, while real-world experiments show that the proposed method is deployable to real-world systems with the real-time requirement satisfied.

While the proposed method manages to consider the intentional uncertainties of rival players, it does not consider other types of uncertainties that typically occur in interaction processes, such as behavioral uncertainties resulting from bounded rationality and observation uncertainties due to sensor noise. A possible future direction is to combine the proposed method with the maximum entropy model and/or the partially observable Markov decision process to address these uncertainties.

\begin{appendices}
\label{appendices:proof}
\section{Proof of Theorem 2}
\begin{proof}
For an arbitrary player $k\in\mathcal{N}$ and arbitrary type $t'_k\in\Tilde{T}_k$, we decompose the potential function as
\begin{equation}
\begin{aligned}
    P(X)&=P(X^{t'_k},X^{-t'_k})\\
    &=P_{t'_k}(X^{t'_k},X^{-t'_k})+P_{-t'_k}(X^{-t'_k})
\end{aligned}
\end{equation}
where
\begin{equation}
\begin{aligned}
    &P_{-t'_k}(X^{-t'_k})=\\
    &\sum_{i\neq k}\sum_{t_i\in\Tilde{T}_i}p(t_i)c_{t_i}(X^{t_i})+\sum_{t_k\in\Tilde{T}_k,t_k\neq t'_k}p(t_k)c_{t_k}(X^{t_k})\\
    &+\sum_{i\neq k,j\neq k,j> i}\sum_{t_i\in\Tilde{T}_i,t_j\in\Tilde{T}_j}p(t_i,t_j)c_{t_it_j}(X^{t_i},X^{t_j})\\
    &+\sum_{i<k}\sum_{t_i\in\Tilde{T}_i,t_k\in\Tilde{T}_k,t_k\neq t'_k}p(t_i,t_k)c_{t_it_k}(X^{t_i},X^{t_k})\\
    &+\sum_{i>k}\sum_{t_i\in\Tilde{T}_i,t_k\in\Tilde{T}_k,t_k\neq t'_k}p(t_k,t_i)c_{t_kt_i}(X^{t_k},X^{t_i}),
\end{aligned}
\end{equation}
and
\begin{equation}
\label{proof1-3}
\begin{aligned}
&P_{t'_k}(X^{t'_k},X^{-t'_k})\\
&=p(t'_k)c_{t'_k}(X^{t'_k})+\sum_{i<k}\sum_{t_i\in\Tilde{T}_i}p(t_i,t'_k)c_{t_it'_k}(X^{t_i},X^{t'_k})\\
    &+\sum_{i>k}\sum_{t_i\in\Tilde{T}_i}p(t'_k,t_i)c_{t'_kt_i}(X^{t'_k},X^{t_i})\\
    &=p(t'_k)c_{t'_k}(X^{t'_k})+\sum_{i\neq k}\sum_{t_i\in\Tilde{T}_i}p(t'_k,t_i)c_{t'_kt_i}(X^{t'_k},X^{t_i})
\end{aligned}
\end{equation}
where the second equality in (\ref{proof1-3}) follows from Assumption 3. It is easy to verify that $P_{t'_k}(X^{t'_k},X^{-t'_k})$ and $P_{-t'_k}(X^{-t'_k})$ sum up to $P(X)$. Meanwhile, for $C_{t_i}(X^{t_i},X^{-t_i})$, we have

\begin{equation}
\label{cost}
\begin{aligned}
&C_{t_i}(X^{t_i},X^{-t_i})\\
&=\sum_{t_{-i}\in\Tilde{T}_{-i}}p(t_{-i}|t_i)[c_{t_i}(X^{t_i})+\sum_{j\neq i}c_{t_it_j}(X^{t_i},X^{t_j})]\\
&=c_{t_i}(X^{t_i})+\sum_{t_{-i}\in\Tilde{T}_{-i}}p(t_{-i}|t_i)\sum_{j\neq i}c_{t_it_j}(X^{t_i},X^{t_j})\\
&=c_{t_i}(X^{t_i})+\sum_{j\neq i}\sum_{t_{-i}\in\Tilde{T}_{-i}}p(t_{-i}|t_i)c_{t_it_j}(X^{t_i},X^{t_j})\\
&=c_{t_i}(X^{t_i})+\sum_{j\neq i}\sum_{t_j\in\Tilde{T}_j}\sum_{t_{-ij}\in\Tilde{T}_{-ij}}p(t_{-i}|t_i)c_{t_it_j}(X^{t_i},X^{t_j})\\
&=c_{t_i}(X^{t_i})+\sum_{j\neq i}\sum_{t_j\in\Tilde{T}_j}p(t_j|t_i)c_{t_it_j}(X^{t_i},X^{t_j}).
\end{aligned}
\end{equation}
Therefore, we have 
\begin{equation}
P_{t'_k}(X^{t'_k},X^{-t'_k})=p(t'_k)C_{t'_k}(X^{t'_k},X^{-t'_k}),
\end{equation}
and the following conclusion
\begin{equation}
\label{proof}
\begin{aligned}
&P(X^{t'_k},X^{-t'_k})-P(\hat{X}^{t'_k},X^{-t'_k})\\
&=P_{t'_k}(X^{t'_k},X^{-t'_k})-P_{t'_k}(\hat{X}^{t'_k},X^{-t'_k})\\
&=p(t'_k)[C_{t'_k}(X^{t'_k},X^{-t'_k})-C_{t'_k}(\hat{X}^{t'_k},X^{-t'_k})].
\end{aligned}
\end{equation}
Comparing (\ref{proof}) and the cost function in Problem 2, we conclude that Theorem 2 follows from the fact that (\ref{proof}) holds for arbitrary $k$, $t'_k$, $X^{t'_k}$,$\hat{X}^{t'_k}$, and $X^{-t'_k}$.
\end{proof}

\section{Proof of Theorem 3}

\begin{proof}
We first decompose the potential function $P'(x)$ as
\begin{equation}
    P'(X) = P'_{EA}(X^{t^\Theta_{EA}},X^{t^\Theta_{-EA}})+P'_{-EA}(X^{t^\Theta_{-EA}})
\end{equation}
where
\begin{equation}
\begin{aligned}
    &P'_{EA}(X^{t^\Theta_{EA}},X^{t^\Theta_{-EA}})\\
    &=\sum_{\theta\in\Theta}p(\theta)c_{t^\theta_{EA}}(X^{t^\theta_{EA}}) + \sum_{i\neq EA}\sum_{\theta\in\Theta}p(\theta)c_{t^\theta_{EA}t^\theta_{i}}(X^{t^\theta_{EA}},X^{t^\theta_i})\\
    &=\sum_{\theta\in\Theta}p(\theta)C_{t^\theta_{EA}}(X^{t^\theta_{EA}},X^{t^\theta_{-EA}})
\end{aligned}
\end{equation}
and $P'_{-EA}(X^{t^\Theta_{-EA}})$ is irrelevant to $X^{t^\Theta_{EA}}$. This implies that
\begin{equation}
\label{proof1}
\begin{aligned}
    &P'(X^{t^\Theta_{EA}},X^{t^\Theta_{-EA}})-P'(\hat{X}^{t^\Theta_{EA}},X^{t^\Theta_{-EA}})\\
    &=P'_{EA}(X^{t^\Theta_{EA}},X^{t^\Theta_{-EA}})-P'_{EA}(\hat{X}^{t^\Theta_{EA}},X^{t^\Theta_{-EA}})\\
    &=\sum_{\theta\in\Theta}p(\theta)[C_{t^\theta_{EA}}(X^{t^\theta_{EA}},X^{t^\theta_{-EA}})-C_{t^\theta_{EA}}(\hat{X}^{t^\theta_{EA}},X^{t^\theta_{-EA}})].
\end{aligned}
\end{equation}
Meanwhile, for $i\neq EA$ and for arbitrary $\theta\in\Theta$, we have
\begin{equation}
    P'(X) = P'_{i,\theta}(X^{t^\theta_i},X^{t^\theta_{-i}})+P'_{-i,\theta}(X^{t^\theta_{-i}}) + P'_{-\theta}(X^{t^{-\theta}})
\end{equation}
where
\begin{equation}
\begin{aligned}
    &P'_{i,\theta}(X^{t^\theta_i},X^{t^\theta_{-i}})=p(\theta)c_{t^\theta_i}(X^{t^\theta_i})+\sum_{j\neq i}p(\theta)c_{t^\theta_it^\theta_j}(X^{t^\theta_i},X^{t^\theta_j})\\
    &=p(\theta)C_{t^\theta_i}(X^{t^\theta_i},X^{t^\theta_{-i}})
\end{aligned}
\end{equation}
and the other two terms are irrelevant to $X^{t^\theta_i}$. Again, we have
\begin{equation}
\label{proof2}
\begin{aligned}
&P'(X^{t^\theta_i},X^{t^\theta_{-i}},X^{t^{-\theta}})-P'(\hat{X}^{t^{\theta}_i},X^{t^\theta_{-i}},X^{t^{-\theta}})\\
&=P'_{i,\theta}(X^{t^\theta_i},X^{t^\theta_{-i}})-P'_{i,\theta}(\hat{X}^{t^\theta_i},X^{t^\theta_{-i}})\\
&=p(\theta)[C_{t^\theta_i}(X^{t^\theta_i},X^{t^\theta_{-i}})-C_{t^\theta_i}(\hat{X}^{t^\theta_i},X^{t^\theta_{-i}})].
\end{aligned}
\end{equation}
We also notice that fact that (\ref{OA}) is equivalent to the following problem with $p(\theta)>0$:
\begin{equation}
\label{OA'}
\begin{aligned}
\mathcal{S}^{OA}_{i,\theta}(X^{t^\theta_{-i}})&:=\argmin_{X^{t^\theta_i}}\ p(\theta)C_{t^\theta_i}(X^{t^\theta_i},X^{t^\theta_{-i}})\ \textup{s.t.}\ X^{t^\theta_i}\in\mathcal{X}^{t^\theta_i},\\
& \forall \theta\in\Theta, \forall i\in\mathcal{N}, i\neq \textup{EA}.
\end{aligned}
\end{equation}
Comparing (\ref{proof1}) and (\ref{proof2}) with the cost of (\ref{EA}) and (\ref{OA'}), respectively, we conclude that Theorem 3 holds.
\end{proof}

\section{Proof of Theorem 4}
\begin{proof}
Algorithm 1 is derived by directly applying ADMM algorithm to problem (\ref{dual2}), and following \cite[Cor.~28.3]{bauschke2017correction} it holds that the iterates $\{y^k_v\}$ and $\{z^k_v\}$ converges to $y^*_v$ for all $v\in\mathcal{V}$, such that $[y^*_v]_{v\in\mathcal{V}}$ is a minimizer of the dual problem (\ref{dual}). Further, the first-order optimality conditions corresponding to a primal-dual solution of problem (\ref{prime}) are given by
\begin{subequations}
\label{optimality}
\begin{align}
0&\in\partial_{X^v}\mathcal{L}(X,w,y)=\partial {f_v}(X^v)+\sum_{e\in\textup{Adj}(v)}Q_{v,e}^\top y_e, \forall v\in\mathcal{V},\label{optimality1}\\
0&\in\partial_{w_e}\mathcal{L}(X,w,y)=\partial g_e(w_e)-y_e, \forall e\in\mathcal{E},\label{optimality2}\\
0&=\nabla_{y_e}\mathcal{L}(X,w,y)=\sum_{v\in e}Q_{v,e}X^v-w_e.\label{optimality3}
\end{align}
\end{subequations}
We denote $w_e^k:=\sum_{v\in e}E_{v,e}s_v^k$, $w^*_e:=\lim_{k\rightarrow\infty}w_e^k$,$y^*_e:=E_{v,e}y^*_v$ for arbitrary $v\in e$, and $X^{v*}:=\lim_{k\rightarrow\infty}X^{k,v}$. It suffices to proof Theorem 3 by showing that $(X^{v*},w_e^*,y^*_e)$ satisfies (\ref{optimality}).
Step 9 of Algorithm 1 ensures that $X^{k,v}$ is a minimizer of (\ref{Xupdate}) and hence
\begin{equation}
\begin{aligned}
0&\in\partial f_v(X^{k,v})+\frac{1}{\sigma+\rho}Q_v^\top(Q_vX^{k,v}+r^k_v)\\
&=\partial f_v(X^{k,v})+Q_v^\top y^k_v
\end{aligned}
\end{equation}
where the equality follows from (\ref{yupdate}) and the conclusion holds for all $k$ and $v$. Taking the limit $k\rightarrow\infty$ yields
\begin{equation}
    0\in\partial f_v(X^{v*})+Q_v^\top y^*_v=\partial f_v(X^{v*})+\sum_{e\in\textup{Adj}(v)}Q_{v,e}^\top y^*_e
\end{equation}
where the equality follows from $Q_v=[Q_{v,e}]_{e\in\textup{Adj}(v)}$ and $y^*_v=[y^*_e]_{e\in\textup{Adj}(v)}$, thus showing (\ref{optimality1}). Next, Step 10 of Algorithm 1 yields that 
\begin{equation}
\label{optimality_zupdate}
\begin{aligned}
0\in\partial g^*_v(z_v^{k+1})+\sigma (z^{k+1}_v-y^{k+1}_v)-s^k_v
\end{aligned}
\end{equation}
holds for all $v$ and $k$. For a particular $e\in\mathcal{E}$, we pre-multiply (\ref{optimality_zupdate}) by $E_{v,e}$, sum over $v\in e$, and take the limits $k\rightarrow\infty$, which yields
\begin{equation}
\label{proof3}
\begin{aligned}
    0&\in\lim_{k\rightarrow\infty}\sum_{v\in e}E_{v,e}\partial g^*_v(z_v^{k+1})\\
    &+\sigma\sum_{v\in e}E_{v,e}(z^{k+1}_v-y^{k+1}_v)-\sum_{v\in e}E_{v,e}s^k_v\\
    &=\sum_{v\in e}E_{v,e}\partial g^*_v(y_v^*)-w^*_e=\sum_{v\in e}\frac{1}{N_e}\partial g^*_e(y^*_e)-w^*_e.
\end{aligned}
\end{equation}
Essentially, (\ref{proof3}) is obtained from the fact that $y^k_v,z^k_v\rightarrow y^*_v$. From (\ref{proof3}) we have
\begin{equation}
    w^*_e\in\partial g^*_e(y^*_e)\Rightarrow y^*_e\in\partial g_e(w^*_e)
\end{equation}
by invoking \cite[Cor. 16.30]{bauschke2017correction}, which shows (\ref{optimality2}). To show (\ref{optimality3}), we first plug (\ref{rupdate}) into (\ref{yupdate}) to obtain
\begin{equation}
\begin{aligned}
\label{optimality_yupdate}
        y^{k+1}_v&=\frac{1}{\sigma+\rho}(Q_vX^{k+1,v}+\sigma z^k_v-\lambda^k_v-s^k_v\\
        &+\sum_{e'\in\textup{Adj}(v)}\sum_{v'\in e'}\frac{\rho}{N_{e'}}E^\top_{v,e'}E_{v',e'}y^k_{v'}).
\end{aligned}
\end{equation}
From Steps 12 and 13 of Algorithm 1, it is not difficult to see that
\begin{equation}
    \sum_{v\in e}E_{v,e}\lambda^k_{v}=\sum_{v\in e}\lambda^k_{v,e}=0
\end{equation}
for all $k$ and $e$ given zero initialization. We again 
pre-multiply (\ref{optimality_yupdate}) by $E_{v,e}$, sum over $v\in e$, and take the limits $k\rightarrow\infty$, which yields
\begin{equation}
\label{optimality_yupdate2}
\begin{aligned}
N_e y^*_e&=\frac{1}{\sigma+\rho}(\sum_{v\in e}Q_{v,e}X^{v*}+\sigma N_e y^*_e-w^*_e\\
&+\sum_{v\in e}\sum_{e'\in\textup{Adj}(v)}\sum_{v'\in e'}\frac{\rho}{N_{e'}}E_{v,e}E^\top_{v,e'}E_{v',e'}y^*_{v'}).
\end{aligned}
\end{equation}
The last term in (\ref{optimality_yupdate2}) is simplified as 
\begin{equation}
\begin{aligned}
&\sum_{v\in e}\sum_{e'\in\textup{Adj}(v)}\sum_{v'\in e'}\frac{\rho}{N_{e'}}E_{v,e}E^\top_{v,e'}E_{v',e'}y^*_{v'}=\\
&\sum_{v\in e}\sum_{e'\in\textup{Adj}(v)}\sum_{v'\in e'}\frac{\rho}{N_{e'}}E_{v,e}E^\top_{v,e'}y^*_{e'}=\\
&\sum_{v\in e}\sum_{e'\in\textup{Adj}(v)}\rho E_{v,e}E^\top_{v,e'}y^*_{e'}=\\
&\sum_{v\in e}\rho E_{v,e}y^*_{v}=\rho N_e y^*_e.
\end{aligned}
\end{equation}
Plugging this result into (\ref{optimality_yupdate}) yields (\ref{optimality3}). We conclude the proof by establishing that $(X^{v*},w_e^*,y^*_e)$ satisfies (\ref{optimality}).
\end{proof}
\end{appendices}

\bibliographystyle{IEEEtran}
\bibliography{refs}

\end{document}